\let\today\relax
\def\ps@pprintTitle{%
    \let\@oddhead\@empty
    \let\@evenhead\@empty
    \def\@oddfoot{\footnotesize\itshape
         {Preprint (accepted by Neural Networks),  https://doi.org/10.1016/j.neunet.2025.107517} \hfill\today}%
    \let\@evenfoot\@oddfoot
    }
\newtheorem{theorem}{Theorem}
\newtheorem{example}{Example}
\newtheorem{remark}{Remark}
\newtheorem{lemma}{Lemma}
\begin{document}

\begin{frontmatter}



\title{The informativeness of the gradient revisited}


\author{Rustem Takhanov} 

\affiliation{organization={Department of Mathematics, Nazarbayev University},
            addressline={53 Kabanbay Batyr Ave.}, 
            city={Astana},
            postcode={010000}, 
            country={Kazakhstan}}

\begin{abstract}
In the past decade gradient-based deep learning has revolutionized several applications. However, this rapid advancement has highlighted the need for a deeper theoretical understanding of its limitations. Research has shown that, in many practical learning tasks, the information contained in the gradient is so minimal that gradient-based methods require an exceedingly large number of iterations to achieve success. The informativeness of the gradient is typically measured by its variance with respect to the random selection of a target function from a hypothesis class. 

We use this framework and give a general bound on the variance in terms of a parameter related to the pairwise independence of the target function class and the collision entropy of the input distribution. Our bound scales as $ \tilde{\mathcal{O}}(\varepsilon+e^{-\frac{1}{2}\mathcal{E}_c}) $, where $ \tilde{\mathcal{O}} $ hides factors related to the regularity of the learning model and the loss function, $ \varepsilon $ measures the pairwise independence of the target function class and $\mathcal{E}_c$ is the collision entropy of the input distribution. 

To demonstrate the practical utility of our bound, we apply it to the class of Learning with Errors (LWE) mappings and high-frequency functions. In addition to the theoretical analysis, we present experiments to understand better the nature of recent deep learning-based attacks on LWE.
\end{abstract}



\begin{keyword}
  variance of the gradient \sep barren plateau \sep informativeness of the gradient \sep learning with errors



\end{keyword}

\end{frontmatter}



\section{Introduction}


Gradient-based learning has demonstrated remarkable success across a wide range of fields, including language modeling~\cite{10.1145/3641289}, protein structure prediction~\cite{abramson2024accurate}, reinforcement learning~\cite{wurman2022outracing}, and quantum chemistry~\cite{Keith2021}, among others. Recently, the application of gradient-based methods to the solution of combinatorial problems, and even cryptographic attacks, has gained significant attention~\cite{JMLR:v24:22-0567,NEURIPS2023_9bae70d3,pmlr-v202-geisler23a}.
Therefore, it is of interest to rigorously examine its limitations. Understanding the conditions under which gradient descent fails or performs suboptimally is crucial for improving its robustness across diverse tasks.
In our paper, we study the difficulties that gradient-based methods face when learning a target function drawn from a collection of functions that possess an almost pairwise independence property.

{\bf Gradient-based learning} refers to a broad category of learning algorithms that operate by minimizing an objective function through access to the approximate gradient at various points. This category includes widely used optimization techniques in deep learning, such as  Adam, AdaGrad, RMSProp, Nesterov Accelerated Gradient, SGD and others. 
A framework that encompasses these methods was proposed in~\cite{DBLP:journals/jmlr/Shamir18}, which offers a way to explain the phenomenon of {\bf low informativeness of the gradient}. 

We will briefly outline what is informativeness of the gradient in learning. 
Let us consider functions parameterized by some parameter $\alpha\in \mathcal{A}$ which we assume to be a random variable. The target function is supposed to be chosen from this set by sampling $\alpha$. Then, the variance of the gradient of supervised learning task's cost function at a fixed point, with respect to a random selection of $\alpha$, reflects the sensitivity of the gradient to the choice of the target function.
If the target function depends significantly on $\alpha$ but the gradient variance is negligible, then the outcome of the gradient-based optimization is likely independent of $\alpha$, making it unlikely that the method will effectively learn the target. If one observes such a situation, it is said that {\em the gradient is not informative}.

So far, the only way to verify the presence of this phenomenon is to establish an upper bound on the variance of the gradient in terms of parameters whose value scales can be controlled in advance. 
Generally, this type of upper bound includes two factors, say $A$ and $B$, where $B$ is negligibly small, and the $A$ term on the right-hand side quantifies a model's regularity (for example, of a function class used to fit the data, such as a neural network). The  $A$ term may restrict the framework's generality, as it overlooks the potential for non-regular models to still effectively learn the target function.
Upper bounds of this kind have been proven for classes of parity functions~\cite{DBLP:conf/icml/Shalev-ShwartzS17} and high-frequency functions~\cite{DBLP:journals/jmlr/Shamir18}. Similar results have emerged in quantum networks, where the term ``barren plateau phenomenon'' was introduced to describe this scenario~\cite{McClean2018,PhysRevLett.129.270501}. In our work, we establish a quite general inequality that bounds the gradient variance in terms of a measure of almost pairwise independence of the target function class and the regularity of a model.

{\bf Pairwise independence} is a characteristic of a collection of functions $\mathcal{H}$ between two probability spaces, $(\mathcal{X}, \mathcal{F}_{\mathcal{X}}, \mu_\mathcal{X})$ and $(\mathcal{Y}, \mathcal{F}_{\mathcal{Y}}, \mu_\mathcal{Y})$, and a distribution $\chi$ on $\mathcal{H}$. This property states that for any two distinct elements $x, y \in \mathcal{X}$, and a function $h \in \mathcal{H}$ sampled from the distribution $\chi$, the pair $[h(x), h(y)]$ is distributed according to $\mu_\mathcal{Y}\times \mu_\mathcal{Y}$. The concept was first introduced in cryptography~\cite{CARTER1979143} and has since been applied in areas such as message authentication~\cite{Daemen} and derandomization~\cite{TCS-010}. We us a relaxed form of this definition based on the Intergal Probability Metric between the distribution of the random variable $[h(x), h(y)]$ and the distribution $\mu_\mathcal{Y}\times \mu_\mathcal{Y}$. If the latter metric is $\mathcal{O}(\varepsilon)$ for some negligibly small $\varepsilon$, 
we refer to such function classes (equipped with a distribution $\chi$) as {\em almost pairwise independent} (see Section~\ref{nearly} for details).

In our analysis, we apply the gradient-based learning framework from~\cite{DBLP:journals/jmlr/Shamir18} to an almost pairwise independent target function class. The target function $ h \in \mathcal{H} $, sampled from $\chi$ and according to which the dataset $\{(x_i, h(x_i))\}_{i=1}^m$ was created, acts as the key parameter $\alpha$ mentioned earlier.
We establish an upper bound on the gradient variance (with respect to the randomness in choosing $ h $) of an objective function represented as $$\mathbb{E}_{X \sim \mu_{\mathcal{X}}}\Big[L\big(p(\mathbf{w}, X), h(X)\big)\Big],$$ where $ L $ is a loss function and $ p(\mathbf{w}, \cdot): \mathcal{X} \to \mathbb{R} $ is a set of functions used to fit data. Our {\em general variance bound} scales as $ \tilde{\mathcal{O}}(\varepsilon+e^{-\frac{1}{2}\mathcal{E}_c(\mu_\mathcal{X})}) $, where $ \tilde{\mathcal{O}} $ conceals factors related to the regularity of the set $\{p(\mathbf{w}, \cdot)\}$ and of the loss function $L$, $ \varepsilon $ measures the pairwise independence of $(\mathcal{H}, \chi)$, and $\mathcal{E}_c(\mu_\mathcal{X})$ is the collision entropy of the input distribution $\mu_{\mathcal{X}}$.

Note that our major upper bound, given in Section~\ref{main-upper-bounds} and proved in Appendix~\ref{aes-bound}, is {\em distribution-specific}. This is due to the fact that the measure of almost pairwise independence that we use, depends on distributions over inputs, outputs and the hypothesis set. Thus, using that bound a low informativeness of the gradient can be verified, but in principle, it does not rule out the possibility that the gradient {\em can become informative} if we change our distributions, especially, if we change a distribution over inputs. 

To illustrate practical implications of our general bound, we examine special cases of hypothesis sets. Notably, almost pairwise independent classes include Learning with Errors mappings and high-frequency functions as important examples. Let us briefly describe these latter function classes, along with the contexts in which they naturally appear and consequences of our bound for the learnability of these function classes.

\subsection{Examples of almost pairwise independent classes}
An instance of the {\bf Learning with Errors (LWE)} task is a pair $(A, \mathbf{b})$, where $A \in \mathbb{Z}_q^{m \times n}$ and $\mathbf{b} \in \mathbb{Z}_q^m$. In this scenario, $A$ is assumed to be uniformly generated from $\mathbb{Z}_q^{m \times n}$, and $\mathbf{b} = A\mathbf{s} + \mathbf{e}$, where $\mathbf{s}$ is a vector uniformly drawn from $\mathbb{Z}_q^n$ (called a secret), and $\mathbf{e} \in \mathbb{Z}_q^m$ is a vector with entries independently sampled from a fixed distribution $\kappa$ (typically, $\kappa$ is a discretized Gaussian distribution with a mean of zero, and $\mathbf{e}$ is referred to as the noise). The main objective in an LWE problem is to deduce the secret vector $\mathbf{s}$ from the pair $(A, \mathbf{b})$. If we express $A^\top = [\mathbf{a}_1, \cdots, \mathbf{a}_m]$ and $\mathbf{b} = [b_i]_{i=1}^m$, an LWE instance can be represented as the training set $\mathcal{T} = \{(\mathbf{a}_i, b_i)\}_{i=1}^m$, where $b_i = \langle \mathbf{s}, \mathbf{a}_i\rangle + e_i$, with $e_i \sim \kappa$. Consequently, we can formulate a slightly less strict version of the LWE problem, where the task is to learn the function $\mathbf{x} \mapsto \langle \mathbf{s}, \mathbf{x}\rangle$ over its entire domain $\mathbb{Z}_q^n$, given the set $\mathcal{T}$. 

The LWE task is a famous hard learning problem. There exists a standard reduction of the LWE task to the problem of finding the shortest vector in a given lattice (the exact-SVP problem). A well-known polynomial quantum reduction of approximate Shortest Vector Problem to Learning With Errors~\cite{10.1145/1060590.1060603}, along with a polynomial classical reduction~\cite{10.1145/2488608.2488680}, indicates that any polynomial-time algorithm for LWE could have significant implications. Currently, all algorithms for LWE that utilize a polynomial number of samples have an exponential asymptotic runtime~\cite{Guo2023}. Note that recently standardized protocols for post-quantum cryptography are based on the mathematical hardness of LWE~\cite{pqc}. While it is unlikely that a gradient-based method can solve LWE, it is practically important to estimate the maximum size of an LWE instance that could be potentially managed using such an approach. Our main bound is helpful in making such an estimate, so let us describe its consequences.

If all distributions (over inputs, outputs and the hypothesis set) are uniform, the set of functions $\mathbf{x} \mapsto \langle \mathbf{s}, \mathbf{x}\rangle$, parameterized by secret vectors, forms an almost pairwise independent class of functions. An application of our bound to this case yields that the variance of the gradient behaves approximately like $\tilde{\mathcal{O}}(q^{-\frac{n}{2}+1}) = \tilde{\mathcal{O}}(q e^{-\frac{n\log q}{2}})$. For the ``hardness'' parameter $n\log q$ from real applications, this value is extremely small, e.g. $10^{-800}$. This guarantees that the LWE task is resilient to straightforward attacks, based on the approximation of the LWE mapping by a neural network.

The situation changes for a non-uniform distribution over inputs. 
Specifically, the right hand side of our upper bound is heavily influenced by the collision entropy of the distribution from which inputs to an LWE mapping are drawn. E.g., if input vectors are sampled uniformly from  $({\mathbb{Z}}\cap [0,a))^n\setminus \{{\mathbf 0}\}$ where $a<q$, then our main bound implies that the variance of the gradient behaves approximately like $\tilde{\mathcal{O}}(q a^{-\frac{n}{2}})$. In other words, anyone who manages to compute input-output pairs where inputs are drawn from a distribution with low collision entropy may potentially solve the LWE task. We experimentally verify this conclusion for the case where the distribution on the hypothesis set (i.e. the distribution over secret vectors ${\mathbf s}$) is not uniform (in Section~\ref{experiments} we specifically analyze two cases that are popular in literature, namely sparse binary and sparse ternary secrets). 
This conclusion is in line with recent attempts to attack LWE by gradient-based learning methods with a preprocessing of the LWE task based on computing such ``low collision entropy'' pairs~\cite{DBLP:conf/nips/WengerCCL22,DBLP:journals/iacr/LiSWACL23,10.1145/3576915.3623076}. To our knowledge, our results provide the first theoretical analysis of these types of attacks on LWE.

{\bf High-frequency functions} are another class of functions that is a suitable testing ground for demonstrating consequences of our general bound. Given a 1-periodic function $h: {\mathbb R}\to {\mathbb R}$, this class consists of functions of the form $h({\mathbf w}^\top {\mathbf x})$ where ${\mathbf w}\in {\mathbb R}^n$ and $n$ is fixed. 
The informativeness of the gradient for this class of functions was studied in~\cite{DBLP:journals/jmlr/Shamir18}, where it was shown that the gradient's variance is small if (a) the objective is a mean squared error, (b) the distribution over the hypothesis set $\{h({\mathbf w}^\top {\mathbf x})| {\mathbf w}\in {\mathbb R}^n\}$ is induced by a uniform distribution over a sphere in ${\mathbb R}^n$ for the ``frequency'' vector ${\mathbf w}$, and (c) the distribution over inputs satisfies a certain Fourier transform-based condition.

We omit conditions (a) and (c) and, in place of the condition (b), study this class under a distribution over the hypothesis set induced by a central normal distribution in ${\mathbb R}^n$ with a covariance matrix $R \cdot I_n$. It turns out that this class of functions is almost pairwise independent, if the variance $R$ is large. Our main bound for the case $n\geq 2$ shows that the gradient's variance is limited by the product of three factors: (1) $\frac{1}{R^2}$, (2) a factor that measures the regularity of the model and loss function, (3) a certain integral depending only on the input distribution, $\mu_\mathcal{X}$. Thus, we observe a situation similar to the one in LWE: a class of target functions with a large norm of the ``frequency'' vector (controlled by the parameter $R$) is almost pairwise independent, but the gradient can be informative if the inputs are drawn from a suitable distribution. The case of $n=1$ leads to the same conclusion, though its analysis is trickier.

{\bf Organization.} In Section~\ref{nearly}, we provide a precise definition of the concept of an almost pairwise independent function class. Section~\ref{framework} outlines the general framework for gradient-based optimization introduced in~\cite{DBLP:journals/jmlr/Shamir18}. Our main upper bound on the gradient's variance is presented in Section~\ref{main-upper-bounds}. Section~\ref{LWE-case} is focused specifically on the Learning With Errors (LWE) case. In Section~\ref{experiments} we numerically estimate almost pairwise independence of the LWE hypothesis set under certain non-uniform distributions over secrets which play an important role in cryptographic applications and discuss the consequences of our bounds in the context of recent gradient-based attacks on LWE. In Section~\ref{waves} we discuss the implications of our main bound for the case of high-frequency functions. 

{\bf Notations.} Given a probability space $(\Omega, \mathcal{F}_{\Omega}, \nu)$, the notation $X \sim \nu$ means that the random variable $X$ is sampled from $\Omega$ according to the distribution $\nu$. For a measurable function $f: \Omega\to {\mathbb R}$, ${\mathbb E}_{X\sim \nu}[f(X)]$ denotes the intergal of $f$ w.r.t. $\nu$, i.e. $\int_{\Omega}f(x)d\nu(x)$. The space of measurable functions $f:\Omega\to {\mathbb R}$ such that $\int_{\Omega}|f|^pd\nu < \infty$ is denoted by $L_p(\Omega, \nu)$. Occasionally, when dealing with functions $f, g\in L_2(\Omega, \nu)$, $\langle f, g \rangle_\nu$ denotes the inner product $\int_{\Omega} f(x) g(x)d\nu(x)$. Accordingly, $\|f\|_{\nu} = \sqrt{\langle f, f \rangle_{\nu}}$. The completion of $(\Omega, \mathcal{F}_{\Omega}, \nu)$ is denoted by $(\overline{\Omega}, \overline{\mathcal{F}_{\Omega}}, \overline{\nu})$. We have $\Omega = \overline{\Omega}$, but when we use the symbol $\overline{\Omega}$ we mean that the set $\Omega$ is equipped with the completed $\sigma$-algebra and measure. For $(\Omega_i, \mathcal{F}_{i}, \nu_i)$, $i=1,2$, $(\Omega_1\times \Omega_2, \mathcal{F}_{1}\times \mathcal{F}_{2}, \nu_1\times \nu_2)$ denotes the product measure space. The extended real line $\{-\infty, \infty\}\cup {\mathbb R}$ is denoted by $\overline{{\mathbb R}} $.  For any finite set $S$, $|S|$ denotes its cardinality.  For a prime number $q$, $\mathbb{Z}_q$ denotes the quotient ring of 
$\mathbb{Z}/q\mathbb{Z}$. For vectors $\mathbf{x} = [x_i]_1^n$ and $\mathbf{y} = [y_i]_1^n \in \mathbb{Z}_q^n$, the inner product $\langle \mathbf{x}, \mathbf{y}\rangle$ is defined as $x_1 \cdot y_1 + \cdots + x_n \cdot y_n \in \mathbb{Z}_q$. 
For functions $f: U \to \mathbb{R}$ and $g: U \to \mathbb{R}_+$, we use the notation $f \lesssim g$ to indicate that a universal constant $\alpha \in \mathbb{R}_+$ exists such that $ |f(x)| \leq \alpha g(x) $ holds for every $x \in U$. The abbreviation a.e. (or, $\mu$-a.e.) means almost everywhere.

\section{Measuring almost pairwise independence by the Integral Probability Metric}\label{nearly}
Let $\mathcal{X}$ and $\mathcal{Y}$ be two sets, a set of inputs and outputs correspondingly. Let $(\mathcal{X}, \mathcal{F}_{\mathcal{X}}, \mu_{\mathcal{X}})$ and $(\mathcal{Y}, \mathcal{F}_{\mathcal{Y}}, \mu_{\mathcal{Y}})$ be probability spaces.  We are also given a hypothesis set $\mathcal{H}\subseteq \mathcal{Y}^{\mathcal{X}}$. 
We assume that there is unknown $h\in \mathcal{H}$ for which inputs $x_1, x_2, \cdots$ are sampled according to the probability measure $\mu_{\mathcal{X}}$, and the training data is $\{(x_1,h(x_1)), (x_2,h(x_2)), \cdots\}$. The role of the probability measure $\mu_{\mathcal{Y}}$ is different and this distribution does not affect the generation of the training set. 
 
Our theory is based on an analysis of the sensitivity of the gradient to a random choice of the hypothesis function $h\in \mathcal{H}$. Therefore, we additionally assume that $h$ is sampled from a distribution $\chi$, where $(\mathcal{H}, \mathcal{F}_{\mathcal{H}}, \chi)$ is a probability space and the mapping $(x,h)\to h(x)$ is a measurable function between the product measure space $\mathcal{X}\times \mathcal{H}$ and $\mathcal{Y}$.

The pair $(\mathcal{H}, \chi)$ is said to be pairwise independent (with the distribution of outputs $\mu_{\mathcal{Y}}$) if for any distinct $x, x' \in \mathcal{X}$ the random variable $[h(x), h(x')]$, where $h\sim \chi$, has the same distribution as the random variable $[Y,Y']$, where $Y,Y'\sim^{\rm iid}\mu_{\mathcal{Y}}$. If $\mathcal{X}$, $\mathcal{Y}$ are finite and $\mu_{\mathcal{X}}$, $\mu_{\mathcal{Y}}$, $\chi$ are uniform distributions, such families of mappings are commonly used in message authentication and universal hashing~\cite{CARTER1979143}. 
A well-known example of a pairwise independent family is the collection $\mathcal{H} = \{ h_{a,b} : \text{GF}(p^n) \to \text{GF}(p^n) \mid h_{a,b}(x) = ax + b, \; a, b \in \text{GF}(p^n) \}$, where $\text{GF}(p^n)$ represents the Galois field containing $ p^n $ elements. Further examples are provided in \cite{10.5555/1111205}.

Pairwise independence is ideally satisfied only for very special hypothesis sets and distributions. We measure almost pairwise independence using the Integral Probability Metric between distributions. 
Let $\mathcal{F} = (\mathcal{F}_1,\mathcal{F}_2)$, $\mathcal{F}_1\subseteq {\mathbb R}^{\mathcal{Y}}$ and $\mathcal{F}_2\subseteq {\mathbb R}^{\mathcal{Y}\times \mathcal{Y}}$ be a pair of functional spaces equipped with semi-norms $\|\cdot\|_{\mathcal{F}_1}$ and $\|\cdot\|_{\mathcal{F}_2}$ respectively, containing only measurable functions on $\mathcal{Y}$ and the product measure space $\mathcal{Y}\times \mathcal{Y}$. 
For a general non-finite case we measure the deviation of the pair $(\mathcal{H}, \chi)$ from pairwise independence with the function $\varepsilon_{\mathcal{F}}: \mathcal{X}\times \mathcal{X}\to [0,+\infty]$, where
\begin{equation}\label{tv2}
\begin{split}
&\varepsilon_\mathcal{F}(x, x') =
\sup_{f\in \mathcal{F}_2: \|f\|_{\mathcal{F}_2}\leq 1}
|{\mathbb E}_{h\sim \chi}[f(h(x), h(x'))]-
{\mathbb E}_{(Y, Y')\sim \mu^2_{\mathcal{Y}}}[f(Y,Y')]|,
\end{split}
\end{equation}
for $x \neq x'$, and $\varepsilon_\mathcal{F}(x, x) = \sup_{f\in \mathcal{F}_1: \|f\|_{\mathcal{F}_1}\leq 1}
|{\mathbb E}_{h\sim \chi}[f(h(x))]-{\mathbb E}_{Y\sim \mu_{\mathcal{Y}}}[f(Y)]|$. 
By construction, if $(\mathcal{H}, \chi)$ is pairwise independent, then $\varepsilon_\mathcal{F}(x, x') = 0$ for any $x,x'\in \mathcal{X}$. The latter Integral Probability Metric (ILP) is quite common in probabilistic applications~\cite{e999eb1a-76a6-3d4a-b9e6-48809754773a,10.1214/12-EJS722}. Specifically, we will use the following  functional spaces as $\mathcal{F}$.

\begin{example}
If we set $\mathcal{F} = \big(L_\infty (\mathcal{Y}, \mu_{\mathcal{Y}}), L_\infty (\mathcal{Y}\times \mathcal{Y}, \mu_{\mathcal{Y}}\times \mu_{\mathcal{Y}})\big)$, then
 $\varepsilon_\mathcal{F}(x, x')$ is twice the total variation distance between the random variables $[h(x), h(x')], h\sim \chi$ and $[Y,Y'], Y,Y'\sim^{\rm iid} \mu_{\mathcal{Y}}$. Analogously, $\varepsilon_\mathcal{F}(x, x)$ is twice the total variation distance between the random variables $h(x), h\sim \chi$ and $Y\sim \mu_{\mathcal{Y}}$.
\end{example}

\begin{example}\label{Pearson}
If $|\mathcal{Y}|<\infty$ and $p_y = \mu_{\mathcal{Y}}(\{y\}), y\in \mathcal{Y}$, one can set $\mathcal{F} = \big(L_2 (\mathcal{Y}, \mu_{\mathcal{Y}}), L_2 (\mathcal{Y}\times \mathcal{Y}, \mu_{\mathcal{Y}}\times \mu_{\mathcal{Y}})\big)$. Then
 $\varepsilon_\mathcal{F}(x, x')$ is the so-called  Pearson $\chi^2$ divergence that is equal to
\begin{equation*}
\begin{split}
&\varepsilon_\mathcal{F}(x, x') = \sqrt{\sum_{y,y'\in \mathcal{Y}}\frac{({\mathbb P}_{h\sim \chi}[h(x)=y, h(x')=y']-p_yp_{y'})^2}{p_yp_{y'}}},
\end{split}
\end{equation*}
for $x\ne x'$, and 
\begin{equation*}
\begin{split}
\varepsilon_\mathcal{F}(x, x) = 
\sqrt{\sum_{y\in \mathcal{Y}}\frac{({\mathbb P}_{h\sim \chi}[h(x)=y]-p_y)^2}{p_y}}.
\end{split}
\end{equation*}
For details on the Pearson $\chi^2$ divergence see Remark 1 in~\cite{10.5555/3294996.3295012} and Table 1 in~\cite{mroueh2018sobolev}.
\end{example}

\section{Gradient-based optimization}\label{framework}
Recall that $h$ is sampled from $\chi$. 
To approximate the mapping $x \to h(x)$, we use a specified collection of functions from $\mathcal{X}$ to $\mathbb{R}$, parameterized by a weight vector $\mathbf{w} \in O \subseteq \mathbb{R}^{N_{\rm par}}$. This family is represented as $\{ p(\mathbf{w}, \cdot) : \mathcal{X} \to \mathbb{R} \mid \mathbf{w} \in O \}$.
Let $O\subseteq {\mathbb R}^{N_{\rm par}}$ be an open set (equipped with Lebesgue measure) and $p: O\times \mathcal{X}\to {\mathbb R}$ be a measurable function such that $\nabla_{\mathbf w}p({\mathbf w}, x)$ exists a.e. over $({\mathbf w}, x)\in O\times \mathcal{X}$ and is bounded.

Given a fixed loss function $L: {\mathbb R}\times \mathcal{Y}\to {\mathbb R}$, our goal is to minimize the following objective with respect to ${\mathbf w}$:
\begin{equation}\label{goal}
C_h({\mathbf w}) = \mathbb{E}_{X \sim \mu_{\mathcal{X}}}\Big[L\big(p(\mathbf{w}, X), h(X)\big)\Big].
\end{equation}
Our assumptions about $L$ are the most general.
Let $L: {\mathbb R}\times \mathcal{Y}\to {\mathbb R}$ be a measurable function such that: (a) $\frac{\partial L (p({\mathbf w}, x),y)}{\partial p}$ exists a.e. over $({\mathbf w}, x,y)$ and is bounded; (b)  $\frac{\partial L (p({\mathbf w}, x),h(x))}{\partial p}$ exists a.e. over $({\mathbf w}, x,h)$; (c) $L(p({\mathbf w},x), h(x))$ is bounded on $O\times \mathcal{X}\times \mathcal{H}$.
The latter assumptions are necessary for the cost function $C_h({\mathbf w})$ and the gradient of $C_h({\mathbf w})$ to be properly defined, making them essential but non-restrictive. Note that the gradient is given by $\nabla_{{\mathbf w}}C_h({\mathbf w}) = {\mathbb E}_{X\sim \mu_\mathcal{X}}\big[\frac{\partial L(p({\mathbf w}, X), h(X))}{\partial p}\nabla_{{\mathbf w}}p({\mathbf w}, X)\big]$ (this is proven thoroughly in appendix). Thus, the magnitude of the gradient vector is influenced by the size of the vector $\nabla_{{\mathbf w}}p({\mathbf w}, X)$ and the scalar $\frac{\partial L}{\partial p}(p({\mathbf w},  X), h(X))$.

We assume that the optimization problem~\eqref{goal} is solved using a gradient-based method. By this, we mean any method that iteratively generates points ${\mathbf w}_1, {\mathbf w}_2, \cdots \in O$ according to the rule ${\mathbf w}_{t+1}=f_t({\mathbf g}_1, \cdots, {\mathbf g}_t)$, where ${\mathbf g}_t$ is an approximation of the gradient $\nabla_{{\mathbf w}}C_h({\mathbf w}_t)$ and $\{f_t: {\mathbb R}^{tN_{\rm par}}\to {\mathbb R}^{N_{\rm par}}\}_{t=1}^\infty$ are some deterministic or randomized functions. The gradient approximation ${\mathbf g}_t$ is provided by an oracle ${\mathfrak O}$. We assume that ${\mathbf g}_t$ is stochastic and $\|{\mathbf g}_t-\nabla_{{\mathbf w}}C_h({\mathbf w}_t)\|< \delta$, with no guarantees beyond this accuracy level $\delta$. For this reason, the oracle is referred to as the {\em $\delta$-accurate gradient oracle}, making $\delta>0$ a crucial parameter in this framework. 
In practice, ${\mathbf g}_t$ is derived from the dataset, which consists of a set of random pairs $\{(x_i, h(x_i))\}$ that we have access to. Examples of gradient-based methods include Stochastic Gradient Descent (SGD), RMSProp, Nesterov Momentum, Adam, and other methods.

The gradient-based algorithm framework introduced in~\cite{DBLP:journals/jmlr/Shamir18} provided an approach to examining the limitations of such methods when learning functions of the form ${\mathbf x} \to \psi(\langle \mathbf{w}, {\mathbf x} \rangle)$ on ${\mathbb R}^n$, where $\psi$ is a 1-periodic function. Our goal is to adapt this framework to study the function $x \to h(x)$, defined over the domain $\mathcal{X}$. Consequently, we present an adapted version of Theorem 4 from~\cite{DBLP:journals/jmlr/Shamir18} as follows.

\begin{theorem}[\cite{DBLP:journals/jmlr/Shamir18}]\label{shamir}  
Suppose that $\delta > 0$ is chosen such that ${\rm Var}_{h \sim \chi}[\nabla C_h(\mathbf{w})] \leq \delta^3$ for any $\mathbf{w} \in O$. Then, a $\delta$-accurate gradient oracle can be defined, guaranteeing that for any algorithm of the specified type and any probability $p \in (0, 1)$, the algorithm's output will, with probability at least $1 - p$ over the choice of hypothesis $h$, become independent of $h$ after at most $\frac{p}{\delta}$ iterations.
\end{theorem}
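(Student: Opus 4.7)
The plan is to build the promised $\delta$-accurate oracle by coupling the true run of the algorithm to a ``ghost'' run that uses the hypothesis-averaged gradient, and then to control the coupling failure probability by Chebyshev plus a union bound over the $T \le p/\delta$ iterations.

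First, I would introduce the averaged gradient field $\bar{\mathbf{g}}(\mathbf{w}) := {\mathbb E}_{h\sim\chi}[\nabla C_h(\mathbf{w})]$ and a reference trajectory $\tilde{\mathbf{w}}_1 := \mathbf{w}_1$, $\tilde{\mathbf{w}}_{t+1} := f_t\bigl(\bar{\mathbf{g}}(\tilde{\mathbf{w}}_1),\ldots,\bar{\mathbf{g}}(\tilde{\mathbf{w}}_t)\bigr)$. Coupling any internal randomness of $f_t$ across both runs, this trajectory is a measurable function of the algorithm's private coins alone and does not depend on $h$. Next, I would specify the oracle: at step $t$ it compares the current iterate $\mathbf{w}_t$ with $\tilde{\mathbf{w}}_t$; if $\mathbf{w}_t = \tilde{\mathbf{w}}_t$ and $\|\nabla C_h(\tilde{\mathbf{w}}_t)-\bar{\mathbf{g}}(\tilde{\mathbf{w}}_t)\|<\delta$, it outputs $\mathbf{g}_t := \bar{\mathbf{g}}(\tilde{\mathbf{w}}_t)$, otherwise it outputs $\mathbf{g}_t := \nabla C_h(\mathbf{w}_t)$. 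In either case $\|\mathbf{g}_t-\nabla C_h(\mathbf{w}_t)\|<\delta$, so the oracle is $\delta$-accurate.

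The core quantitative step is a Chebyshev bound at each iterate of the ghost trajectory. Define the bad event
$$B_t := \bigl\{h\in\mathcal{H} \,:\, \|\nabla C_h(\tilde{\mathbf{w}}_t)-\bar{\mathbf{g}}(\tilde{\mathbf{w}}_t)\|\ge \delta\bigr\}.$$
Since $\tilde{\mathbf{w}}_t$ is independent of $h$, Chebyshev's inequality (conditioning on the algorithm's internal randomness) yields
$${\mathbb P}_{h\sim\chi}[B_t] \;\le\; \frac{{\rm Var}_{h\sim\chi}[\nabla C_h(\tilde{\mathbf{w}}_t)]}{\delta^2} \;\le\; \frac{\delta^3}{\delta^2} \;=\; \delta,$$
using the hypothesis of the theorem. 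By induction on $t$, on the event $\bigcap_{s\le t}B_s^c$ the actual and ghost iterates agree ($\mathbf{w}_s=\tilde{\mathbf{w}}_s$ and $\mathbf{g}_s=\bar{\mathbf{g}}(\tilde{\mathbf{w}}_s)$ for all $s\le t$), because then the oracle returns the $h$-free value at every step up to $t$ and the update rule $f_s$ is fed identical arguments. Setting $T=\lfloor p/\delta\rfloor$ and applying a union bound,
$${\mathbb P}_{h\sim\chi}\Bigl[\bigcup_{s=1}^{T}B_s\Bigr] \;\le\; T\delta \;\le\; p.$$
Hence with probability at least $1-p$ over $h\sim\chi$ the entire sequence $\mathbf{w}_1,\ldots,\mathbf{w}_{T+1}$ coincides with $\tilde{\mathbf{w}}_1,\ldots,\tilde{\mathbf{w}}_{T+1}$, which is $h$-independent.

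I expect the only subtle point to be bookkeeping around randomized $f_t$: one has to fix a joint probability space carrying both $h$ and the algorithm's coins, and verify that the comparison ``$\mathbf{w}_t=\tilde{\mathbf{w}}_t$'' used in defining the oracle is a measurable event so that $\mathbf{g}_t$ is a legal oracle response. This is handled by coupling the two runs on the same coin sequence, after which ${\rm Var}_{h\sim\chi}$ acts on $h$ alone with the coins held fixed, and the Chebyshev estimate above is uniform in them. Everything else is a direct transcription of the argument of Theorem~4 in~\cite{DBLP:journals/jmlr/Shamir18} to the present function class $\mathcal{H}$ rather than the parity/frequency class used there.
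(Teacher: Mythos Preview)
The paper does not give its own proof of this theorem: it is stated as an adapted version of Theorem~4 from~\cite{DBLP:journals/jmlr/Shamir18} and simply cited, with no argument in the body or the appendix. Your reconstruction is the standard one from that reference---build a hypothesis-independent ``ghost'' trajectory on the averaged gradient $\bar{\mathbf g}(\mathbf w)=\mathbb E_{h\sim\chi}[\nabla C_h(\mathbf w)]$, let the adversarial oracle return $\bar{\mathbf g}$ whenever it is $\delta$-close to the true gradient, then use Chebyshev at each step and a union bound over $T\le p/\delta$ steps---and it is correct. There is nothing further to compare against in this paper.
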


In a gradient-based learning process, the output should not be independent of the hypothesis $h$. Therefore, if the accuracy parameter $\delta$ exceeds ${\rm Var}_{h \sim \chi}[\nabla C_h(\mathbf{w})]^{\frac{1}{3}}$, the number of iterations required to succeed in the task would need to be approximately $\mathcal{O}({\rm Var}_{h \sim \chi}[\nabla C_h(\mathbf{w})]^{-\frac{1}{3}})$. However, if ${\rm Var}_{h \sim \chi}[\nabla C_h(\mathbf{w})]$ is extremely small (for example, on the order of $10^{-60}$, as we will show), the algorithm will be fundamentally unable to succeed.

In the following section, we will present our upper bound for ${\rm Var}_{h \sim \chi}[\nabla C_h(\mathbf{w})]$.

\section{Upper bound on the variance}\label{main-upper-bounds}
Given the loss function $L$ and the family $\{p({\mathbf w}, \cdot)\}$, we denote
\begin{equation*}
\begin{split}
r_{\mathbf w}(x,y) = \frac{\partial L(p({\mathbf w},x), y)}{\partial p}-{\mathbb E}_{Y\sim \mu_\mathcal{Y}}\big[\frac{\partial L(p({\mathbf w},x), Y)}{\partial p}\big].
\end{split}
\end{equation*}
Also, the following value captures the typical deviation of $\frac{\partial L(p({\mathbf w}, x), Y)}{\partial p}$ from its expected value:
\begin{equation*}
\begin{split}
D_x = {\rm Var}_{Y\sim \mu_\mathcal{Y}}[\frac{\partial L(p({\mathbf w},x), Y)}{\partial p}].
\end{split}
\end{equation*}

The next theorem offers a general upper bound on the variance of the cost function gradient for a randomly chosen hypothesis $h$.

\begin{theorem}\label{aes-bound}  
We have
\begin{equation}\label{aes-uncertainty}
\begin{split}
&{\rm Var}_{h\sim\chi}\big[\partial_{w_i} {\mathbb E}_{X\sim \mu_{\mathcal{X}}} [L(p({\mathbf w},X), h(X))]\big]\leq \\ 
&\|\frac{\partial p({\mathbf w},\cdot)}{\partial w_i}\|^2_{\mu_{\mathcal{X}}}   \big(\sqrt{{\mathbb E}_{(X,Y)\sim \mu_{\mathcal{X}}^2}\big[ \varepsilon_\mathcal{F}(X,Y)^2\|\phi_{X,Y}\|_{\mathcal{F}_2}^2\big]} +\sqrt{\gamma} \big),
\end{split}
\end{equation}
where
\begin{equation*}
\begin{split}
\phi_{x,x'}(y,y')=r_{\mathbf w}(x,y) r_{\mathbf w}(x',y'),
\phi_{x}(y)=r_{\mathbf w}(x,y)^2 
\end{split}
\end{equation*}
and
\begin{equation*}
\begin{split}
&\gamma = {\mathbb P}_{(X,Y)\sim \mu_{\mathcal{X}}^2}[X=Y]\cdot {\mathbb E}_{(X,Y)\sim \mu_{\mathcal{X}}^2}[(\|\phi_{X}\|_{\mathcal{F}_1}\varepsilon_\mathcal{F}(X,X)+D_X)^2\mid X=Y].
\end{split}
\end{equation*}
\end{theorem}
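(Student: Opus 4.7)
The plan is to rewrite the gradient variance as a double expectation over a pair $(X,X')$ of i.i.d.\ inputs, and then control the resulting two-point kernel using exactly the bounds that the definition of $\varepsilon_\mathcal{F}$ provides. Starting from the gradient identity $\nabla_{\mathbf{w}} C_h(\mathbf{w})=\mathbb{E}_X[(\partial L/\partial p)(p(\mathbf{w},X),h(X))\,\nabla_{\mathbf{w}}p(\mathbf{w},X)]$ stated in Section~\ref{framework} and using the centering $(\partial L/\partial p)(p(\mathbf{w},x),y)=r_{\mathbf{w}}(x,y)+\mathbb{E}_{Y\sim\mu_\mathcal{Y}}[(\partial L/\partial p)(p(\mathbf{w},x),Y)]$, I would first peel off the $h$-independent piece; writing $q(x):=\partial p(\mathbf{w},x)/\partial w_i$ this yields
\begin{equation*}
\mathrm{Var}_{h\sim\chi}[\partial_{w_i}C_h(\mathbf{w})]=\mathrm{Var}_{h\sim\chi}\bigl[\mathbb{E}_X[r_{\mathbf{w}}(X,h(X))\,q(X)]\bigr]\le\mathbb{E}_{h\sim\chi}\bigl[\bigl(\mathbb{E}_X[r_{\mathbf{w}}(X,h(X))q(X)]\bigr)^2\bigr],
\end{equation*}
and by Fubini the right-hand side equals $\mathbb{E}_{(X,X')\sim\mu_\mathcal{X}^2}\bigl[q(X)q(X')\,\mathbb{E}_{h\sim\chi}[r_{\mathbf{w}}(X,h(X))\,r_{\mathbf{w}}(X',h(X'))]\bigr]$.

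The next step is to split the outer expectation into the off-diagonal set $\{X\ne X'\}$ and the diagonal set $\{X=X'\}$ and attack each with its natural piece of $\mathcal{F}=(\mathcal{F}_1,\mathcal{F}_2)$. On $\{X\ne X'\}$ the inner $h$-expectation is $\mathbb{E}_h[\phi_{X,X'}(h(X),h(X'))]$, and since each factor $r_{\mathbf{w}}$ has $\mu_\mathcal{Y}$-mean zero the reference expectation $\mathbb{E}_{(Y,Y')\sim\mu_\mathcal{Y}^2}[\phi_{X,X'}(Y,Y')]$ vanishes, so the IPM definition~\eqref{tv2} gives $|\mathbb{E}_h[\phi_{X,X'}(h(X),h(X'))]|\le\varepsilon_\mathcal{F}(X,X')\,\|\phi_{X,X'}\|_{\mathcal{F}_2}$. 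On $\{X=X'\}$ the analogous argument, now using $\mathcal{F}_1$ in place of $\mathcal{F}_2$ together with $\mathbb{E}_{Y\sim\mu_\mathcal{Y}}[\phi_X(Y)]=D_X$, yields $\mathbb{E}_h[\phi_X(h(X))]\le\varepsilon_\mathcal{F}(X,X)\,\|\phi_X\|_{\mathcal{F}_1}+D_X$.

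To conclude, I would apply Cauchy--Schwarz on the product space $\mu_\mathcal{X}^2$ so as to factor out $\|\partial p(\mathbf{w},\cdot)/\partial w_i\|^2_{\mu_\mathcal{X}}=\sqrt{\mathbb{E}[q(X)^2 q(X')^2]}$ (the equality uses independence of $X,X'$). For the off-diagonal contribution this is immediate and produces $\|q\|^2_{\mu_\mathcal{X}}\sqrt{\mathbb{E}[\varepsilon_\mathcal{F}(X,X')^2\|\phi_{X,X'}\|_{\mathcal{F}_2}^2\,I(X\ne X')]}$, which I further upper bound by dropping the indicator, matching the first term on the right-hand side of~\eqref{aes-uncertainty}. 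For the diagonal contribution the key re-symmetrisation is $q(X)^2 I(X=X')=q(X)q(X')I(X=X')$; applying Cauchy--Schwarz to this and then invoking $\mathbb{E}[f(X)^2 I(X=X')]=\mathbb{P}[X=X']\cdot\mathbb{E}[f(X)^2\mid X=X']$ with $f(X)=\varepsilon_\mathcal{F}(X,X)\|\phi_X\|_{\mathcal{F}_1}+D_X$ reproduces exactly the second term $\|q\|^2_{\mu_\mathcal{X}}\sqrt{\gamma}$.

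The only genuinely delicate step will be the diagonal one: the rewriting $q(X)^2 I(X=X')=q(X)q(X')I(X=X')$ is what allows Cauchy--Schwarz to produce the same $\|q\|^2_{\mu_\mathcal{X}}$ prefactor as on the off-diagonal, and the resulting collision probability $\mathbb{P}_{\mu_\mathcal{X}^2}[X=X']=e^{-\mathcal{E}_c(\mu_\mathcal{X})}$ is exactly what makes the informal $e^{-\mathcal{E}_c/2}$ factor promised in the introduction appear in the final bound; everywhere else the argument is a routine combination of centering, Fubini, and two applications of the IPM definition.
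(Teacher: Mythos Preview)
Your argument is correct and reaches exactly the bound~\eqref{aes-uncertainty}, but it takes a more elementary route than the paper's proof. After the common opening (centering by $r_{\mathbf w}$ and bounding the variance by the uncentered second moment), the paper introduces the linear operator $T:L_2(\mathcal{X},\mu_{\mathcal X})\to L_2(\mathcal H,\chi)$, $T[g](h)=\langle f_h,g\rangle_{\mu_{\mathcal X}}$, and invokes the inequality $\|T\|^2=\|TT^\dagger\|_{\rm op}\le\|TT^\dagger\|_{\rm HS}$ to obtain $\mathbb{E}_{h}[\langle f_h,g\rangle^2]\le\|g\|^2\sqrt{\mathbb{E}_{h,h'}[\langle f_h,f_{h'}\rangle^2]}$ (Lemma~\ref{through-F}); a separate ``inversion'' computation (Lemma~\ref{inversion}) then rewrites the last expectation as $\int F(x,x')^2\,d\mu_{\mathcal X}^2$ with $F(x,x')=\mathbb{E}_h[f_h(x)f_h(x')]$, after which the diagonal/off-diagonal split and the IPM bound are applied inside the square root (Lemma~\ref{aes-rough}). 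You instead go directly from $\mathbb{E}_h[(\mathbb{E}_X[q(X)r_{\mathbf w}(X,h(X))])^2]=\mathbb{E}_{X,X'}[q(X)q(X')F(X,X')]$ to Cauchy--Schwarz on $\mu_{\mathcal X}^2$, which simultaneously produces the factor $\|q\|_{\mu_{\mathcal X}}^2$ and the $L^2$-norm of $F$ without any operator theory; in effect your one line \emph{is} a proof of Lemma~\ref{through-F} combined with Lemma~\ref{inversion}. The paper's detour through Hilbert--Schmidt norms is more structural (it isolates a reusable operator inequality), while your route is shorter and arguably more transparent for this particular statement; both arrive at the same diagonal treatment, and your observation that $q(X)^2\mathbf 1_{\{X=X'\}}=q(X)q(X')\mathbf 1_{\{X=X'\}}$ is exactly what makes the common prefactor $\|q\|_{\mu_{\mathcal X}}^2$ emerge on the diagonal as well.
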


\begin{remark} Let us consider a case where $\mathcal{F}_i = L_\infty(\mathcal{Y}^i,\mu_\mathcal{Y}^i)$, $i=1,2$ and the loss function $L$ satisfies $|L(p, y)-L(p', y)|\leq c|p-p'|$, i.e. $L$ is Lipschitz w.r.t. the first variable. Then we have $r_{\mathbf w}(x,y)\lesssim 1$, $D_x\lesssim 1$, $\|\phi_x\|_{\mathcal{F}_1}\lesssim 1$, $\|\phi_{x,x'}\|_{\mathcal{F}_2}\lesssim 1$ and $\varepsilon_\mathcal{F}(X,X)\lesssim 1$ (recall that $\varepsilon_\mathcal{F}(X,X)$ is twice the total variation distance). Theorem~\ref{aes-bound} directly implies 
\begin{equation}\label{RHS}
\begin{split}
&{\rm Var}_{h\sim \chi}\big[\partial_{w_i} C_h({\mathbf w})\big]\lesssim  \\
&{\mathbb E}_{X\sim \mu_\mathcal{X}}[(\frac{\partial p({\mathbf w},X)}{\partial w_i})^2] \big({\mathbb E}_{(X,Y)\sim \mu_\mathcal{X}^2}[ \varepsilon_\mathcal{F}(X,Y)^2]^{\frac{1}{2}}\hspace{-1pt}+\hspace{-1pt} {\mathbb P}_{(X,Y)\sim \mu_\mathcal{X}^2}[X\hspace{-1pt}=\hspace{-1pt}Y]^{\frac{1}{2}}\big).
\end{split}
\end{equation}
Recall that the collision entropy of a distribution $\mu_\mathcal{X}$, denoted by $\mathcal{E}_c(\mu_\mathcal{X})$, is the expression $$-\log({\mathbb P}_{(X,Y)\sim \mu_\mathcal{X}^2}[X=Y]).$$ 
Thus, ${\mathbb P}_{(X,Y)\sim \mu_\mathcal{X}^2}[X=Y] = e^{-\mathcal{E}_c(\mu_\mathcal{X})}$.
Then, the RHS of the bound~\eqref{aes-uncertainty} is proportional to 
\begin{equation*}
\begin{split}
&{\mathbb E}_{X\sim \mu_\mathcal{X}}[(\frac{\partial p({\mathbf w},X)}{\partial w_i})^2] \big({\mathbb E}_{(X,Y)\sim \mu_\mathcal{X}^2}[ \varepsilon_\mathcal{F}(X,Y)^2]^{\frac{1}{2}}+e^{-\frac{1}{2}\mathcal{E}_c(\mu_\mathcal{X})}\big).
\end{split}
\end{equation*}
The inequality demonstrates that three key factors affect the informativeness of the gradient. Specifically, the variance of the gradient w.r.t. a random choice of $h\sim \chi$ decreases as:
\begin{itemize}
\item The more pairwise independent $(\mathcal{H}, \chi)$ is w.r.t. to $\mu_{\mathcal{Y}}$ (expressed by the term ${\mathbb E}_{(X,Y)\sim \mu_\mathcal{X}^2}[ \varepsilon_\mathcal{F}(X,Y)^2]^{\frac{1}{2}}$).
\item The greater the collision entropy of $\mu_\mathcal{X}$ (captured by the term $e^{-\frac{1}{2}\mathcal{E}_c(\mu_\mathcal{X})}$).
\item The more regular the neural network's parameterization (represented by the term ${\mathbb E}_{X\sim \mu_\mathcal{X}}[(\frac{\partial p({\mathbf w},X)}{\partial w_i})^2] $).
\end{itemize}
If the collision entropy of $\mu_\mathcal{X}$ is large, one can neglect the second term. On the contrary, if $\mathcal{E}_c(\mu_\mathcal{X})$ is moderate, then the RHS of the inequality~\eqref{aes-uncertainty} may be large enough, which admits the possibility of a successful gradient-based training (though it does not guarantee it). In the next section, in the analysis of  LWE hypothesis sets we demonstrate that, indeed, the collision entropy of $\mu_\mathcal{X}$ strongly influences the informativeness of the gradient.
\end{remark}

\section{Application of Theorem~\ref{aes-bound} to attacks on LWE}\label{LWE-case}
To illustrate the relevance of Theorem~\ref{aes-bound}, we will now explore specific examples of hypothesis sets and distributions $\mu_\mathcal{X}$, $\mu_\mathcal{Y}$ and $\chi$. We will examine the LWE (Learning with Errors) hypothesis set which is defined by
\begin{equation}\label{LWE-H}
\begin{split}
&\mathcal{H} = \{h_{\mathbf{k}}: {\mathbb{Z}}_q^n\setminus \{{\mathbf 0}\} \to {\mathbb{Z}}_q \mid  h_{\mathbf{k}}({\mathbf{x}}) = \langle {\mathbf{k}}, {\mathbf{x}} \rangle, {\mathbf{k}} \in {\mathbb{Z}}_q^n \},
\end{split}
\end{equation}
where $ \langle {\mathbf{k}}, {\mathbf{x}} \rangle = k_1x_1 + \cdots + k_nx_n \mod q $ and $ q \geq 2 $ is prime. Note that we excluded ${\mathbf 0}$ from the domain $\mathcal{X}$, due to the fact that any hypothesis sends it to zero. The result presented here follows directly as a corollary of Theorem~\ref{aes-bound}.

\begin{theorem}\label{LWE-weak}
Let $ \mathcal{X} = {\mathbb{Z}}_q^n \setminus \{{\mathbf 0}\}$, $ \mathcal{Y} = {\mathbb{Z}}_q $ and $\mu_\mathcal{X}$ be the discrete uniform distribution on $({\mathbb{Z}}\cap [0,a))^n\setminus \{{\mathbf 0}\}$ where $a\in \{1,2,\cdots,q\}$, $\mu_\mathcal{Y}$ be the discrete uniform distribution on $\mathcal{Y}$. For the hypothesis set~\eqref{LWE-H} and the discrete uniform distribution $\chi$ on $\mathcal{H}$, we have
\begin{equation}\label{uncertainty-one}
\begin{split}
&{\rm Var}_{h \sim \chi} \left[ \partial_{w_i} {\mathbb{E}}_{X \sim \mu_\mathcal{X}} L(p({\mathbf{w}}, X), h(X)) \right] \leq \\
&{\mathbb{E}}_{X \sim \mu_\mathcal{X}} \left[(\frac{\partial p({\mathbf{w}}, X)}{\partial w_i})^2 \right]  {\mathbb{E}}_{X \sim \mu_\mathcal{X}} \left[ D_X^2 \right]^{\frac{1}{2}}   \left( \sqrt{(q+1)(q-2)}+1\right)  (a^{n}-1)^{-\frac{1}{2}}.
\end{split}
\end{equation}
\end{theorem}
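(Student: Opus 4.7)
The plan is to apply Theorem~\ref{aes-bound} with the functional spaces $\mathcal{F}=\big(L_2(\mathcal{Y},\mu_\mathcal{Y}),L_2(\mathcal{Y}\times\mathcal{Y},\mu_\mathcal{Y}\times\mu_\mathcal{Y})\big)$ from Example~\ref{Pearson}, i.e.\ measuring pairwise independence via the Pearson $\chi^2$ divergence. The computation then reduces to evaluating $\varepsilon_\mathcal{F}$, $\|\phi_{{\mathbf x},{\mathbf x}'}\|_{\mathcal{F}_2}$, and $\gamma$ explicitly for the LWE hypothesis class.

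The first task is a structural analysis of $\varepsilon_\mathcal{F}$. Since $q$ is prime, for any ${\mathbf x}\ne{\mathbf 0}$ the $\mathbb{Z}_q$-linear map ${\mathbf k}\mapsto\langle{\mathbf k},{\mathbf x}\rangle$ is surjective with equal-sized fibers of cardinality $q^{n-1}$, so the marginal distribution of $h_{\mathbf k}({\mathbf x})$ under uniform ${\mathbf k}$ is uniform on $\mathbb{Z}_q$; hence $\varepsilon_\mathcal{F}({\mathbf x},{\mathbf x})=0$. For distinct ${\mathbf x},{\mathbf x}'$ that are $\mathbb{Z}_q$-linearly independent, the joint map ${\mathbf k}\mapsto(h_{\mathbf k}({\mathbf x}),h_{\mathbf k}({\mathbf x}'))$ is surjective onto $\mathbb{Z}_q^2$ with equal-sized fibers, so $\varepsilon_\mathcal{F}({\mathbf x},{\mathbf x}')=0$. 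The only nontrivial case is ${\mathbf x}'=c{\mathbf x}$ for some $c\in\mathbb{Z}_q^*\setminus\{1\}$, where $\mathbb{P}_{\mathbf k}[h_{\mathbf k}({\mathbf x})=s,h_{\mathbf k}({\mathbf x}')=t]=q^{-1}\mathbb{1}[t=cs]$; substituting into the Pearson formula from Example~\ref{Pearson} and simplifying yields $\varepsilon_\mathcal{F}({\mathbf x},{\mathbf x}')^2=q-1$. The factor $\|\phi_{{\mathbf x},{\mathbf x}'}\|_{\mathcal{F}_2}^2$ factorizes by the product structure of $\mu_\mathcal{Y}^2$ and the definition of $r_{\mathbf w}$: since $\mathbb{E}_Y[r_{\mathbf w}({\mathbf x},Y)^2]=D_{\mathbf x}$, independence gives $\|\phi_{{\mathbf x},{\mathbf x}'}\|_{\mathcal{F}_2}^2=D_{\mathbf x}\,D_{{\mathbf x}'}$.

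The contribution $\gamma$ is easy: because $\varepsilon_\mathcal{F}(X,X)=0$ and $\mathbb{P}_{(X,Y)\sim\mu_\mathcal{X}^2}[X=Y]=(a^n-1)^{-1}$, we have $\gamma=(a^n-1)^{-1}\,\mathbb{E}_{X\sim\mu_\mathcal{X}}[D_X^2]$, so $\sqrt{\gamma}=(a^n-1)^{-1/2}\sqrt{\mathbb{E}[D_X^2]}$. The main work is the off-diagonal expectation, which by the analysis above equals
\begin{equation*}
\mathbb{E}_{(X,Y)\sim\mu_\mathcal{X}^2}\big[\varepsilon_\mathcal{F}(X,Y)^2\,\|\phi_{X,Y}\|_{\mathcal{F}_2}^2\big]
=\frac{q-1}{(a^n-1)^2}\sum_{{\mathbf x}\in D}\,\sum_{\substack{c\in\mathbb{Z}_q^*\setminus\{1\}\\ c{\mathbf x}\in D}}D_{\mathbf x}\,D_{c{\mathbf x}},
\end{equation*}
where $D=({\mathbb Z}\cap[0,a))^n\setminus\{{\mathbf 0}\}$. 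Since for each ${\mathbf x}\in D$ there are at most $q-2$ admissible values of $c$, and symmetrically for each ${\mathbf y}\in D$ there are at most $q-2$ preimages, Cauchy-Schwarz on the double sum (or the AM-GM bound $2D_{\mathbf x} D_{c{\mathbf x}}\le D_{\mathbf x}^2+D_{c{\mathbf x}}^2$) yields $\sum D_{\mathbf x} D_{c{\mathbf x}}\le(q-2)\sum_{{\mathbf x}\in D}D_{\mathbf x}^2=(q-2)(a^n-1)\mathbb{E}[D_X^2]$. Plugging in, taking square roots, adding $\sqrt{\gamma}$, and loosening $(q-1)(q-2)\le(q+1)(q-2)$ produces the factor $(\sqrt{(q+1)(q-2)}+1)(a^n-1)^{-1/2}\sqrt{\mathbb{E}[D_X^2]}$. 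Multiplying by $\|\partial_{w_i}p({\mathbf w},\cdot)\|_{\mu_\mathcal{X}}^2=\mathbb{E}_{X\sim\mu_\mathcal{X}}[(\partial p({\mathbf w},X)/\partial w_i)^2]$ delivers~\eqref{uncertainty-one}. The main subtlety I anticipate is bookkeeping in the enumeration step: when $a<q$ the support $D$ is not closed under multiplication by $\mathbb{Z}_q^*$, so the inner sum over admissible $c$ must be treated as an upper bound rather than an exact count, and care is needed to ensure the Cauchy-Schwarz step uses symmetric control of both $D_{\mathbf x}^2$ and $D_{c{\mathbf x}}^2$ terms.
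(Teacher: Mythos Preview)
Your proposal is correct and follows essentially the same route as the paper: apply Theorem~\ref{aes-bound} with the Pearson $\chi^2$ choice of $\mathcal{F}$, compute $\varepsilon_\mathcal{F}$ by case analysis on the rank of $[\mathbf{x},\mathbf{x}']$, use $\|\phi_{\mathbf{x},\mathbf{x}'}\|_{\mathcal{F}_2}^2=D_{\mathbf x}D_{{\mathbf x}'}$, and bound the rank-one sum by $(q-2)\sum_{\mathbf x}D_{\mathbf x}^2$ via AM--GM. Your computation $\varepsilon_\mathcal{F}^2=q-1$ is in fact exact (the paper records the looser $\varepsilon_\mathcal{F}\leq\sqrt{q+1}$ at that step), so your final loosening $(q-1)(q-2)\leq(q+1)(q-2)$ just aligns with the paper's stated constant.
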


\begin{remark}
First, let's examine the situation when $a = q$. In cryptographic schemes based on the LWE hypothesis set, typical values are $ \log_2 q \approx 10 $ and $ n \approx 544 $~\cite{10.1145/2976749.2978425}. As a result, the factor $(q^n-1)^{-\frac{1}{2}} \sim 10^{-800} $ on the RHS of~\eqref{uncertainty-one} is negligible. This suggests that for the gradient to be informative, one must carefully choose a neural network architecture and a loss function so that either ${\mathbb{E}}_{X \sim \mu_\mathcal{X}} \left[(\partial_{w_i} p({\mathbf{w}}, X))^2 \right]$ or ${\mathbb{E}}_{X \sim \mu_\mathcal{X}} \left[ D_X^2 \right]$ is sufficiently large. However, this makes the learning process impractical.
\end{remark}

\begin{remark} Now let $a<q$. In cryptographic applications, it is standard to evaluate the hardness of attacking a specific cryptographic primitive based on the number of bits (or nats) required to encode the secret key. Notably, the right-hand side of bound \eqref{uncertainty-one} behaves like $\tilde{\mathcal{O}}((a^{n}-1)^{-\frac{1}{2}})$, meaning that the "hardness of  learning parameter" is given by $\log(a^{n}-1) \approx n\log a$, rather than by $\log |\mathcal{H}| = n\log q$. Notebly, $n\log q > n\log a$. This suggests that the non-uniformity of the input distribution can significantly improve the learnability of the LWE hypothesis set. 
\end{remark}

\begin{remark}
Suppose now that we have access to (noisy) ``input-output'' pairs of an LWE mapping, where the input ${\mathbf x}$ is uniformly distributed over ${\mathbb Z}_q^n\setminus \{{\mathbf 0}\}$. A straightforward approach to form a training set is to repeat the following line: to sample a pair $({\mathbf x} = [x_i]_{i=1}^n, y)$ and put it into the training set if $0\leq x_i\leq a-1$ for any $i=1,\cdots, n$. 
Even if access to pairs is relatively ``cheap'', the latter approach fails due to the fact that adding an element into the training set requires generating $\mathcal{O}\big((\frac{q}{a})^n\big)$ pairs, which is prohibitively expensive. 

Therefore, obtaining pairs with a low variability of input's components requires a non-trivial pre-processing of initial pairs. One such approach was utilized in a recent attack on LWE~\cite{10.1145/3576915.3623076}.
In the following section, we experimentally study hypothesis sets with binary and ternary secrets, that is the case of LWE that was successfully attacked in the cited paper. 
\end{remark}

\section{Experiments with binary and ternary secrets}\label{experiments}
Let us now consider the case of sparse binary secrets, i.e. the case when the hypothesis set equals
\begin{equation}\label{LWE-B}
\begin{split}
&\mathcal{H}_{l}^{\rm b} = \{h_{\mathbf{k}}: {\mathbb{Z}}_q^n \to {\mathbb{Z}}_q \mid {\mathbf{k}} \in {\mathbb{Z}}_2^n, \|\mathbf{k}\|_{\rm H}\leq l,  h_{\mathbf{k}}({\mathbf{x}}) = \langle {\mathbf{k}}, {\mathbf{x}} \rangle \},
\end{split}
\end{equation}
where $\|\mathbf{k}\|_{\rm H}$ denotes the number of non-zero entries in $\mathbf{k}$ and $l\in {\mathbb N}$. The parameter $l$ is called the height. Analogously, for the ternary secrets case we define
\begin{equation}\label{LWE-T}
\begin{split}
&\mathcal{H}_{l}^{\rm t} = \{h_{\mathbf{k}}: {\mathbb{Z}}_q^n \to {\mathbb{Z}}_q \mid {\mathbf{k}} \in \{-1,0,1\}^n, \|\mathbf{k}\|_{\rm H}\leq l, h_{\mathbf{k}}({\mathbf{x}}) = \langle {\mathbf{k}}, {\mathbf{x}} \rangle \}.
\end{split}
\end{equation}
The latter set plays an important role in homomorphic encryption standards~\cite{albrecht2021homomorphic}.
By $\chi_{l}^{\rm b}$ (or, $\chi_{l}^{\rm t}$) we denote the discrete uniform distribution over $\mathcal{H}_{l}^{\rm b}$ ($\mathcal{H}_{l}^{\rm t}$). As in Theorem~\ref{LWE-weak}, we assume that $ \mathcal{X} = {\mathbb{Z}}_q^n \setminus \{{\mathbf 0}\}$, $ \mathcal{Y} = {\mathbb{Z}}_q $ and $\mu_\mathcal{X}$ is the discrete uniform distribution on $({\mathbb{Z}}\cap [0,a))^n\setminus \{{\mathbf 0}\}$ where $a\in \{1,2,\cdots,q\}$, $\mu_\mathcal{Y}$ is the discrete uniform distribution on $\mathcal{Y}$. 
Let us set $\mathcal{F} = (L_\infty(\mathcal{Y}, \mu_\mathcal{Y}), L_\infty(\mathcal{Y}^2, \mu^2_\mathcal{Y}))$ and study $\varepsilon_{\mathcal{F}}(\mathbf{x},\mathbf{x}')$ for both hypothesis sets~\eqref{LWE-B} and~\eqref{LWE-T} and  the corresponding uniform distributions $\chi_{l}^{\rm b}$ and $\chi_{l}^{\rm t}$. 

\begin{table}
\begin{minipage}[t]{.98\textwidth}
\centering
\scriptsize
\begin{tabular}{|c |c | c | c | c | } 
\hline
\multirow{3}{*}{$q$} & \multicolumn{2}{c|}{$R^2$} & \multicolumn{2}{c|}{$R^2$} \\ [0.4ex] 
\cline{2-5}
 & \multicolumn{2}{c|}{binary secrets} & \multicolumn{2}{c|}{ternary secrets} \\ [0.4ex] 
\cline{2-5}
 & without $\log a$ & with $\log a$  & without $\log a$ & with $\log a$ \\ [0.4ex] 
\hline
3 & 0.824 & {\bf 0.961} & 0.829 & {\bf 0.955} \\
\hline
5 & 0.737 & {\bf 0.853} & 0.807 & {\bf 0.909} \\
\hline
7 & 0.780 & {\bf 0.862} & 0.811 & {\bf 0.860} \\
\hline
\end{tabular}
\caption{The coefficient of determination of $-\log(\varepsilon)$ as a linear combination of $\log (|\mathcal{H}|)$ and $\log a$.}\label{R-squared}
\end{minipage}\\
\begin{minipage}[t]{.98\textwidth}
\centering
\scriptsize
\begin{tabular}{|l|llllllr|}
\hline
\backslashbox{$l/n$}{$a/q$} & 0.35 & 0.4 & 0.45 & 0.5 & 0.55 & 0.6 & 0.65 \\
\hline
0.16 & 1.0 & 1.0 & 1.0 & 1.0 & 1.0 & 1.0 & 0.88 \\
0.18 & 1.0 & 1.0 & 1.0 & 1.0 & 0.82 & 0.86 & 0.84 \\
0.20 & 1.0 & 1.0 & 1.0 & 1.0 & 1.0 & 0.82 & 0.82 \\
0.22 & 0.98 & 1.0 & 1.0 & 0.98 & 0.80 & 0.78 & 0.86 \\
0.24 & 1.0 & 1.0 & 1.0 & 0.98 & 0.78 & 0.78 & 0.80 \\
0.26 & 1.0 & 1.0 & 0.88 & 0.92 & 0.76 & 0.76 & 0.76 \\
0.28 & 0.98 & 1.0 & 0.80 & 0.74 & 0.74 & 0.76 & 0.74 \\
0.30 & 0.98 & 1.0 & 0.93 & 0.76 & 0.72 & 0.74 & 0.74 \\
\hline
\end{tabular}
\caption{From~\cite{DBLP:conf/nips/WengerCCL22}. The table shows the proportion of the secret recovered by SALSA for $n = 50$, given varying values of $\frac{l}{n}$ and $\frac{a}{q}$.}\label{table-from-salsa}    
\end{minipage}
\end{table}

Unfortunately, it is non-trivial to give an estimate that is analogous to~\eqref{uncertainty-one} for the latter two cases. Instead, we computed exactly the expression 
$$\varepsilon = {\mathbb E}_{(X,Y)\sim \mu_\mathcal{X}^2}[ \varepsilon_\mathcal{F}(X,Y)^2]^{\frac{1}{2}}$$
 for different values of parameters $n,l,a$ and prime numbers $q=3,5,7$ using OR-Tools, Google's open source software for combinatorial optimization~\cite{ortools}. Our computations show that the second term ${\mathbb P}_{(X,Y)\sim \mu_\mathcal{X}^2}[X=Y]^{\frac{1}{2}}$ in the RHS of the inequality~\eqref{RHS} can be neglected, in comparison with the first term.
Our goal was to verify that the logarithm of the RHS of the inequality~\eqref{RHS}  is mainly influenced by the logarithm of the cardinality of the hypothesis set (i.e. the number of nats needed to encode the secret) and the collision entropy of $\mu_{\mathcal{X}}$ (i.e. $\log(a^n-1)\approx n\log a$). Our experiments showed that $\log a$ is a better predictor of $-\log(\varepsilon)$ than $n\log a$. That is why we report results for the following two linear regression hypotheses:
\begin{equation*}
\begin{split}
-\log(\varepsilon) = \beta_0+\beta_1 \log(|\mathcal{H}|) +\theta,
\end{split}
\end{equation*}
i.e. without $\log a$, and
\begin{equation*}
\begin{split}
-\log(\varepsilon) = \beta_0+\beta_1 \log(|\mathcal{H}|)+\beta_2 \log a +\theta,\\
\end{split}
\end{equation*}
i.e. with $\log a$, where $\beta_0, \beta_1,\beta_2 $ are weights of features and $\theta$ is noise. The coefficient of determination, $R^2$, for both cases (and both binary and ternary secrets) are given in Table~\ref{R-squared}. As can be seen from the table, the term $\log a$ always substantially improves the $R^2$ coefficient, and together they predict $-\log(\varepsilon) $ with a pretty high accuracy. Scatter plots and linear regression lines for the pair $(-\log (\varepsilon), \log(|\mathcal{H}|))$, drawn for different values of $a=2,\cdots,q$ on Figure~\ref{lrplots}, also support this claim. Surprisingly, when we apply a linear regression model with all accessible predictors, i.e. when $-\log (\varepsilon)$ is treated as a linear combination of $\log(|\mathcal{H}|)$, $\log a$, $\log n$, $\log l$, only $\log(|\mathcal{H}|)$ and $\log a$ are recognized by the $t$-test as valuable features.

This indicates that when one has access to LWE samples with the distribution of inputs supported in $({\mathbb{Z}}\cap [0,a))^n\setminus \{{\mathbf 0}\}$, decreasing the value of $a$ can potentially improve the informativeness of the gradient. This finding is in line with the experimental results of studies on the learnability of the LWE mapping reported in~\cite{DBLP:conf/nips/WengerCCL22}. For completeness, we borrow Table~\ref{table-from-salsa} from the cited paper. 
The table illustrates the enhanced learnability achieved by sampling input vectors for the LWE mapping from $({\mathbb{Z}} \cap [0, a))^n \setminus {\mathbf{0}}$. When $n = 50$ and $\frac{a}{q} = 0.4$, the full secret is successfully recovered. In contrast, for $n = 50$ and $\frac{a}{q} = 0.65$, around 80\% of the secret can be recovered.

This remarkable phenomenon, i.e. the positive effect of the decrease in $a$ on the informativeness of the gradient, was recently leveraged in a promising attack on LWE~\cite{10.1145/3576915.3623076} with parameter sizes approaching NIST's cryptographic standards~\cite{pqc}.
To improve the input distribution $\mu_{\mathcal{X}}$, ~\cite{10.1145/3576915.3623076} applied a quite heavy preprocessing of LWE's initial inputs (which were uniformly distributed on ${\mathbb Z}_q^n$) that computed other inputs (with corresponding outputs) but with a lower collision entropy.

The code can be accessed on \href{https://github.com/k-nic/grad_info}{GitHub}, allowing for easy reproduction of our results.

\begin{figure}
\begin{minipage}[t]{.3\textwidth}
    \centering
    \includegraphics[width=.9\textwidth]{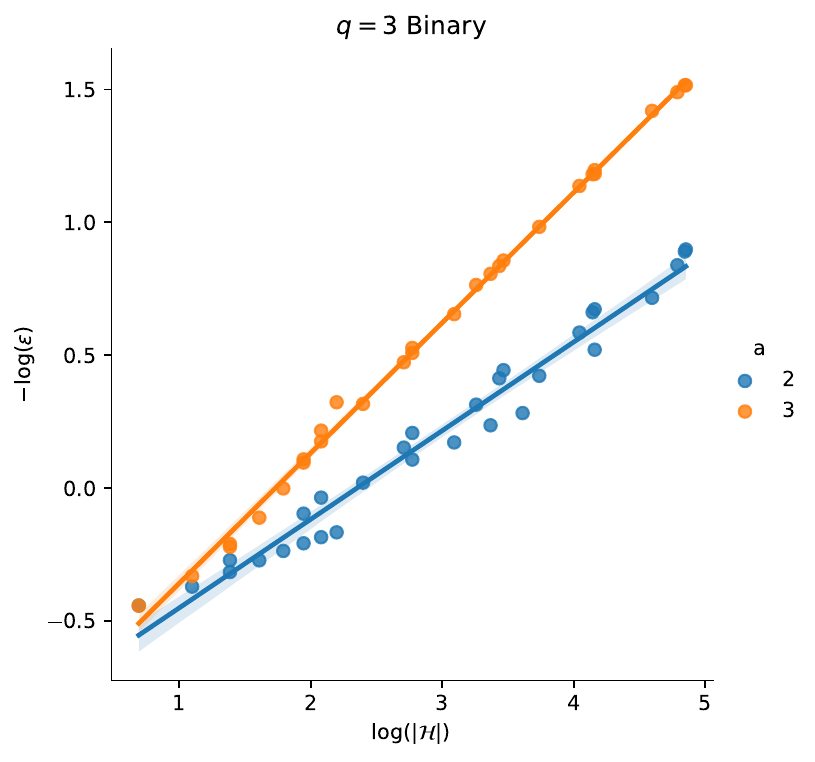}
\end{minipage}\hfill\begin{minipage}[t]{.3\textwidth}
    \centering
    \includegraphics[width=.9\textwidth]{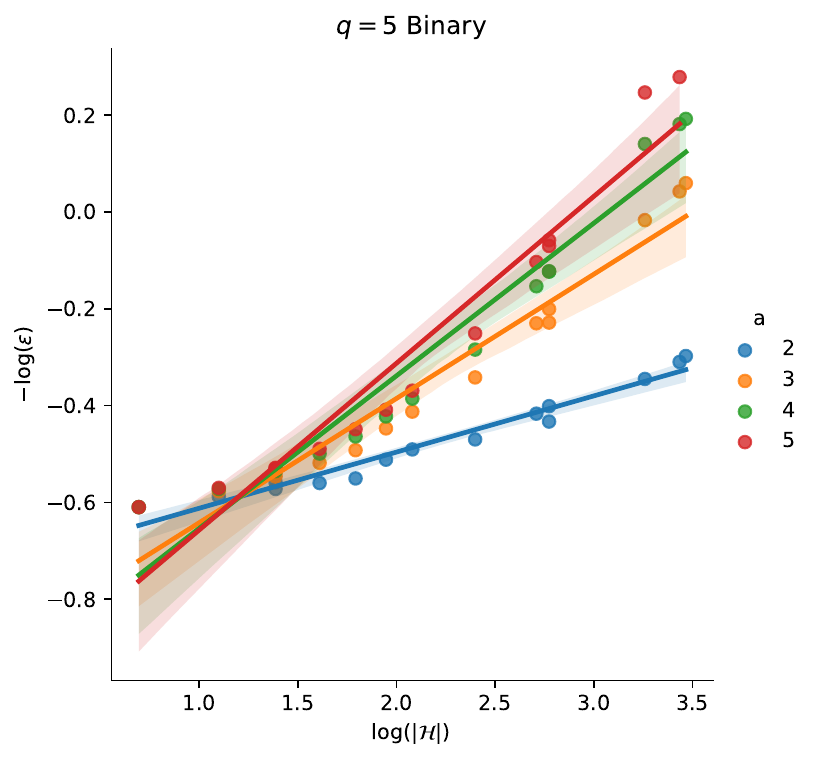}
\end{minipage}\hfill\begin{minipage}[t]{.3\textwidth}
    \centering
    \includegraphics[width=.9\textwidth]{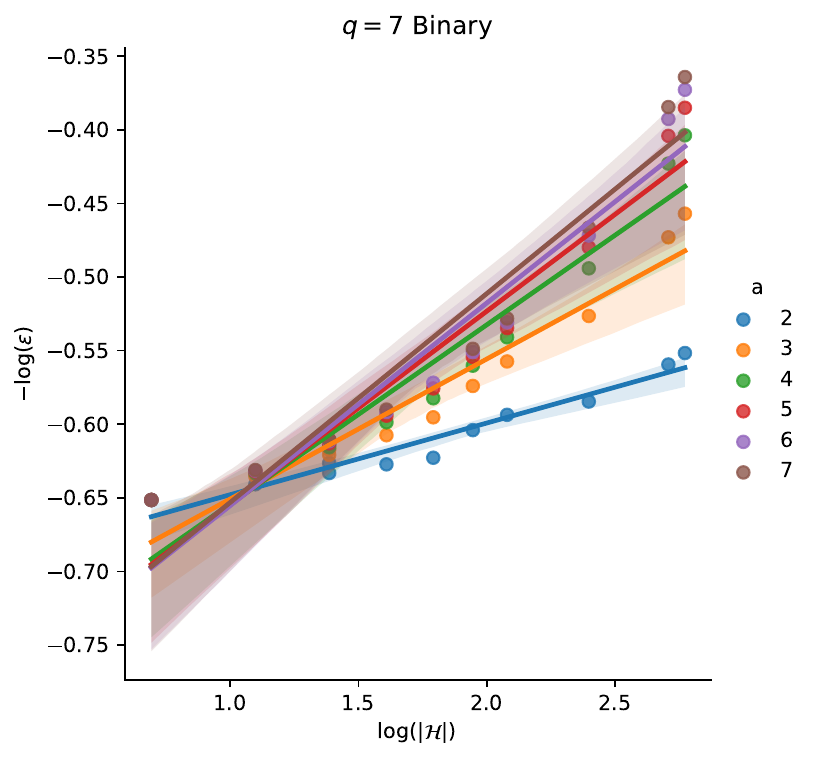}
\end{minipage}
\begin{minipage}[t]{.3\textwidth}
    \centering
    \includegraphics[width=.9\textwidth]{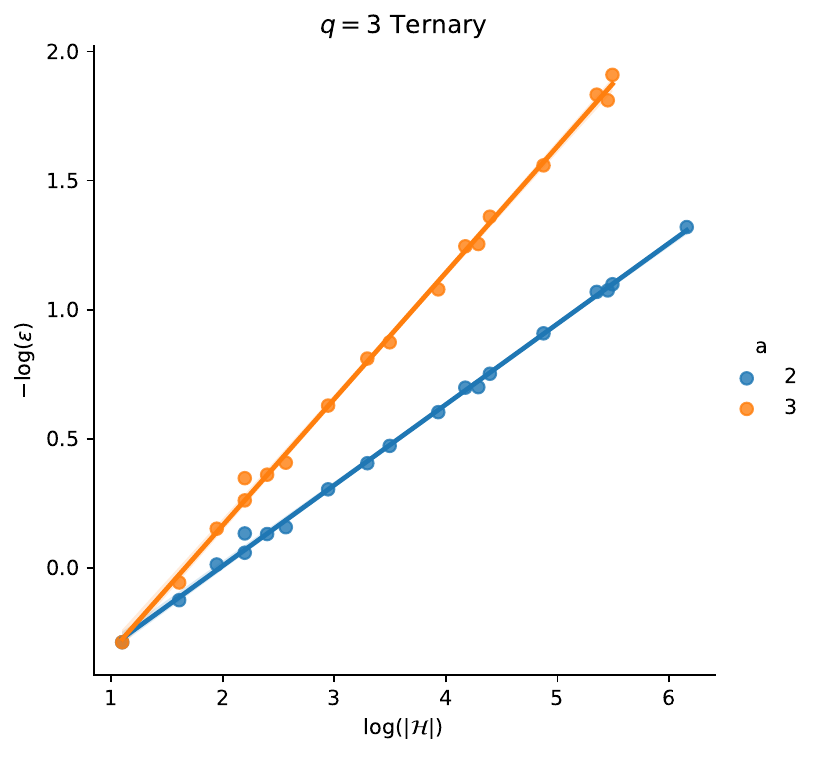}
\end{minipage}\hfill\begin{minipage}[t]{.3\textwidth}
    \centering
    \includegraphics[width=.9\textwidth]{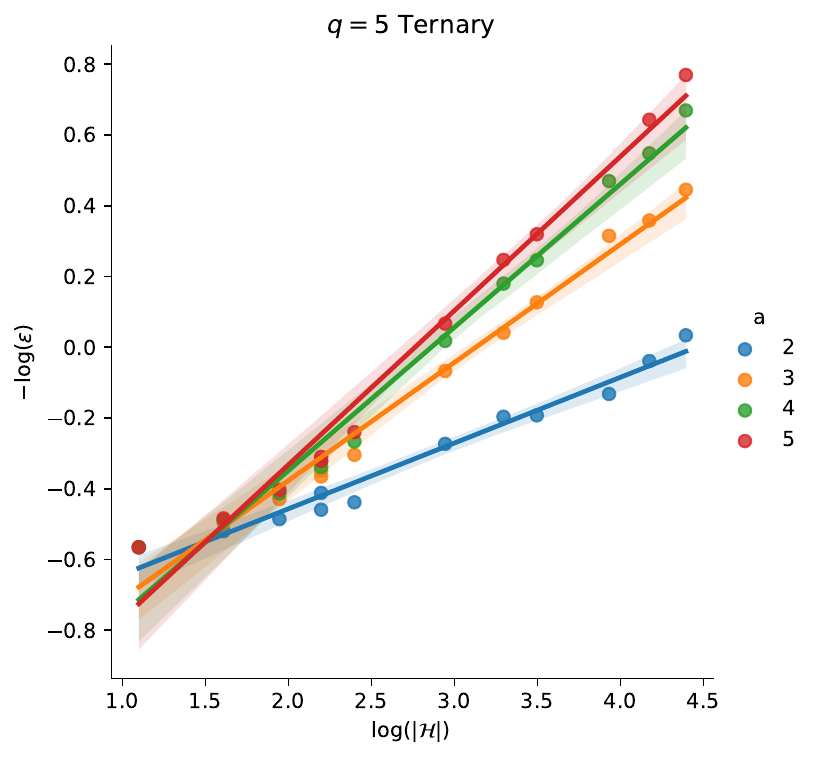}
\end{minipage}\hfill\begin{minipage}[t]{.3\textwidth}
    \centering
    \includegraphics[width=.9\textwidth]{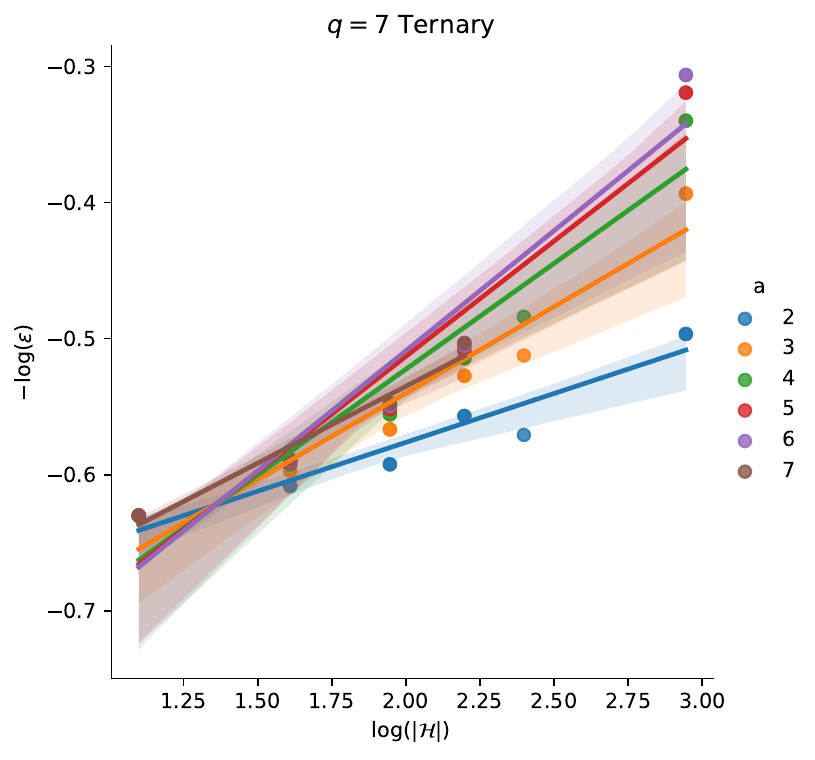}
\end{minipage}
\caption{Scatter and linear regression plots for ``$-\log(\varepsilon)$ vs $\log(|\mathcal{H}|)$'' for different $a=2,\cdots,q$.}\label{lrplots}    
\end{figure}

\section{Application of Theorem~\ref{aes-bound} to high-frequency functions}\label{waves}
We formulated Theorem~\ref{aes-bound} for probabilistic measures, as this general approach enables us to establish non-trivial results about the learnability of functions defined on continuous probability spaces.

Let us now consider $\mathcal{X} = {\mathbb R}^n$, $\mathcal{Y}={\mathbb R}$ and
\begin{equation*}
\begin{split}
\mathcal{H} = \{h:{\mathbb R}^n\to {\mathbb R} \mid h({\mathbf x}) = \psi({\mathbf a}^\top {\mathbf x}), {\mathbf a}\in {\mathbb R}^n\},
\end{split}
\end{equation*}
where $\psi: {\mathbb R}\to {\mathbb R}$ is 1-periodic, i.e. $\psi(x+1)=\psi(x)$, and measurable.  
We assume that the distribution $\chi$ over $\mathcal{H}$ is induced by sampling ${\mathbf A}\sim \mathcal{N}({\mathbf 0}, R^2 I_n)$ and setting $h({\mathbf x}) = \psi({\mathbf A}^\top {\mathbf x})$. The distribution $\mu_\mathcal{Y}$ is defined as the pushforward measure of the uniform distribution on $[0,1]$ with respect to $\psi$; that is, $\mu_\mathcal{Y} (S)= {\rm length}(\{x\in [0,1] \mid \psi(x)\in S\})$.

The following Theorem is a consequence of our main Theorem~\ref{aes-bound}. Its proof is based on estimating $\varepsilon_\mathcal{F}({\mathbf x}, {\mathbf y})$ for $\mathcal{F}_1 = L_\infty({\mathbb R})$ and $\mathcal{F}_2 = L_\infty({\mathbb R}^{2})$. In other words, by showing that high-frequency functions are almost pairwise independent if the parameter $R$ is large.

\begin{theorem}\label{n-more-2}
Let $n\geq 2$ and let $M_{\mathbf x} = \sup_{y\in {\mathbb R}} |r_{\mathbf w}({\mathbf x}, y)|$. If $\mu_{\mathcal X}$ is an absolutely continuous distribution, then
\begin{equation*}
\begin{split}
&{\rm Var}_{h\sim\chi}\big[\partial_{w_i} {\mathbb E}_{X\sim \mu_{\mathcal{X}}} [L(p({\mathbf w},X), h(X))]\big]\lesssim \\
&\frac{1}{R^2}\|\frac{\partial p({\mathbf w},\cdot)}{\partial w_i}\|^2_{\mu_{\mathcal{X}}}  {\mathbb E}_{({\mathbf x},{\mathbf y})\sim \mu_{\mathcal{X}}^2}\bigg[\frac{(\|{\mathbf x}\|\|{\mathbf y}\|+|{\mathbf x}^\top {\mathbf y}|)^2(\|{\mathbf x}\|+\|{\mathbf y}\|)^2}{(\|{\mathbf x}\|^2\|{\mathbf y}\|^2-({\mathbf x}^\top {\mathbf y})^2)^2}M_{{\mathbf x}}^2M_{{\mathbf y}}^2\bigg]^{\frac{1}{2}}.
\end{split}
\end{equation*}
\end{theorem}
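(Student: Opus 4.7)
I will deduce the theorem from the general inequality~\eqref{aes-uncertainty} by specialising the functional spaces to $\mathcal{F}_1 = L_\infty(\mathbb{R},\mu_{\mathcal{Y}})$ and $\mathcal{F}_2 = L_\infty(\mathbb{R}^2,\mu_{\mathcal{Y}}^2)$. For this choice $\|\phi_{\mathbf{x},\mathbf{y}}\|_{\mathcal{F}_2} \leq M_{\mathbf{x}} M_{\mathbf{y}}$, and since $\mu_{\mathcal{X}}$ is absolutely continuous $\mathbb{P}_{(X,Y)\sim\mu_{\mathcal{X}}^2}[X=Y]=0$, so the correction term $\gamma$ in~\eqref{aes-uncertainty} vanishes. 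Thus the proof reduces to establishing the pointwise bound
\[
\varepsilon_\mathcal{F}(\mathbf{x},\mathbf{y}) \;\lesssim\; \frac{1}{R^2}\cdot\frac{(\|\mathbf{x}\|+\|\mathbf{y}\|)(\|\mathbf{x}\|\|\mathbf{y}\|+|\mathbf{x}^\top\mathbf{y}|)}{\|\mathbf{x}\|^2\|\mathbf{y}\|^2-(\mathbf{x}^\top\mathbf{y})^2}
\]
for $\mu_{\mathcal{X}}^2$-a.e.\ $(\mathbf{x},\mathbf{y})$; note that for $n\geq 2$ and absolutely continuous $\mu_{\mathcal{X}}$, the denominator $\det\Sigma:=\|\mathbf{x}\|^2\|\mathbf{y}\|^2-(\mathbf{x}^\top\mathbf{y})^2$ is almost surely positive.

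\textbf{Reduction to a wrapped two-dimensional Gaussian.} With this choice of $\mathcal{F}_2$, $\varepsilon_\mathcal{F}(\mathbf{x},\mathbf{y})$ equals twice the total variation distance between the joint law of $(\psi(\mathbf{A}^\top\mathbf{x}),\psi(\mathbf{A}^\top\mathbf{y}))$ and the product $\mu_{\mathcal{Y}}^2$. The vector $(U,V):=(\mathbf{A}^\top\mathbf{x},\mathbf{A}^\top\mathbf{y})$ is centred Gaussian in $\mathbb{R}^2$ with covariance $R^2\Sigma$, where $\Sigma$ is the $2\times 2$ Gram matrix of $(\mathbf{x},\mathbf{y})$. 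Because $\psi$ is $1$-periodic and $\mu_{\mathcal{Y}}$ is the push-forward of the uniform distribution on $[0,1]$ under $\psi$, the data-processing inequality through $(u,v)\mapsto(\psi(u\bmod 1),\psi(v\bmod 1))$ dominates the above TV distance by the TV distance between $(U\bmod 1,V\bmod 1)$ and the uniform law on $[0,1]^2$.

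\textbf{Fourier estimate and lattice sum.} For any test function $\tilde f$ on $[0,1]^2$ with $\|\tilde f\|_\infty\leq 1$, its Fourier coefficients obey $|c_{\mathbf{k}}|\leq 1$. Since the characteristic function of $(U,V)$ at $\xi=2\pi\mathbf{k}$ equals $e^{-2\pi^2 R^2\|k_1\mathbf{x}+k_2\mathbf{y}\|^2}$, the Fourier representation of $\mathbb{E}[\tilde f(U\bmod 1,V\bmod 1)]-\int\tilde f$ yields, after taking the supremum over $\tilde f$,
\[
\varepsilon_\mathcal{F}(\mathbf{x},\mathbf{y}) \;\leq\; \sum_{\mathbf{k}\in\mathbb{Z}^2\setminus\{\mathbf{0}\}} e^{-2\pi^2 R^2\|k_1\mathbf{x}+k_2\mathbf{y}\|^2}.
\]
To convert this exponential lattice sum into the advertised polynomial form, I plan to split it into the three strata $k_1=0$, $k_2=0$ and $k_1k_2\neq 0$, apply the completion-of-the-square identity
\[
\|k_1\mathbf{x}+k_2\mathbf{y}\|^2 = \|\mathbf{y}\|^2\left(k_2+\frac{k_1\mathbf{x}^\top\mathbf{y}}{\|\mathbf{y}\|^2}\right)^2 + \frac{k_1^2\det\Sigma}{\|\mathbf{y}\|^2}
\]
(and its symmetric counterpart) to factor the $k_1k_2\neq 0$ stratum into two one-dimensional geometric-type series, and bound each one-dimensional series by means of the elementary inequality $e^{-t}\leq 1/t$. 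Reorganising the result via the factorisation $\det\Sigma=(\|\mathbf{x}\|\|\mathbf{y}\|-|\mathbf{x}^\top\mathbf{y}|)(\|\mathbf{x}\|\|\mathbf{y}\|+|\mathbf{x}^\top\mathbf{y}|)$ and absorbing absolute constants will produce the displayed pointwise bound on $\varepsilon_\mathcal{F}(\mathbf{x},\mathbf{y})$. Substituting its square, together with $\|\phi_{\mathbf{x},\mathbf{y}}\|_{\mathcal{F}_2}^2\leq M_{\mathbf{x}}^2 M_{\mathbf{y}}^2$, into~\eqref{aes-uncertainty} completes the proof.

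\textbf{Main obstacle.} The non-routine step is the lattice exponential sum above: a termwise application of $e^{-t}\leq 1/t$ produces a divergent series in two dimensions, so one must exploit the quadratic-form structure of the exponent via completion of the square in order to decouple the double sum into a pair of one-dimensional sums. The factor $(\det\Sigma)^{-1}$ in the final pointwise bound arises precisely from this decoupling, and its contribution must be combined carefully with the boundary strata $k_1=0$ and $k_2=0$ to obtain the symmetric numerator $(\|\mathbf{x}\|+\|\mathbf{y}\|)(\|\mathbf{x}\|\|\mathbf{y}\|+|\mathbf{x}^\top\mathbf{y}|)$ asserted in the theorem.
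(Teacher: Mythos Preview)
Your overall framework coincides with the paper's: the same choice $\mathcal{F}_i=L_\infty(\mathbb{R}^i)$, the observation that $\gamma=0$ for absolutely continuous $\mu_{\mathcal X}$, the bound $\|\phi_{\mathbf{x},\mathbf{y}}\|_{\mathcal F_2}\le M_{\mathbf x}M_{\mathbf y}$, and the reduction of $\varepsilon_{\mathcal F}(\mathbf x,\mathbf y)$ to the total variation distance between the \emph{wrapped} two-dimensional Gaussian $(\{\mathbf A^\top\mathbf x\},\{\mathbf A^\top\mathbf y\})$ and the uniform law on $[0,1]^2$. Up to this point your proof and the paper's are identical.

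The divergence is in how that TV distance is controlled. The paper does \emph{not} use a Fourier/lattice-sum argument. Instead it bounds the wrapped density $p$ on $[0,1]^2$ via the Poincar\'e--Wirtinger inequality,
\[
\|p-1\|_{L^1([0,1]^2)}\;\le\;C\,\|\nabla p\|_{L^1([0,1]^2)}\;\le\;C\int_{\mathbb R^2}\|\nabla\phi_{R^2\Sigma}\|_1,
\]
where $\phi_{R^2\Sigma}$ is the unwrapped Gaussian density; the last integral is computed explicitly from $\nabla\phi_\Sigma(z)=-\Sigma^{-1}z\,\phi_\Sigma(z)$, yielding exactly the numerator $(\|\mathbf x\|\|\mathbf y\|+|\mathbf x^\top\mathbf y|)(\|\mathbf x\|+\|\mathbf y\|)$ and denominator $\det\Sigma$ that appear in the theorem. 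This is the content of Lemma~\ref{wirtinger} in the appendix.

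Your Fourier route is legitimate and gives the correct starting inequality $\varepsilon_{\mathcal F}(\mathbf x,\mathbf y)\le\sum_{\mathbf k\ne 0}e^{-2\pi^2R^2\|k_1\mathbf x+k_2\mathbf y\|^2}$, but the subsequent plan has a concrete gap. Completion of the square does not ``factor'' the double sum into a product of two independent one-dimensional series; it yields an \emph{iterated} sum in which the inner $k_2$-sum is a shifted theta series $\sum_{k_2}e^{-a\|\mathbf y\|^2(k_2+\alpha k_1)^2}$ that can only be bounded by a constant (plus lower order), not by a further factor of $1/R^2$. Carrying this through, together with the boundary strata, produces a pointwise bound of the shape $R^{-2}(\|\mathbf x\|^2+\|\mathbf y\|^2)/\det\Sigma$, which has degree $-2$ in $(\mathbf x,\mathbf y)$, whereas the theorem's expression has degree $-1$. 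The two are \emph{not} comparable: for small $\|\mathbf x\|,\|\mathbf y\|$ your bound is strictly larger, so the ``reorganising via the factorisation of $\det\Sigma$'' step cannot recover the stated numerator. In short, your approach proves a valid variance inequality, but not the one displayed in Theorem~\ref{n-more-2}; to obtain that specific form you need the Poincar\'e--Wirtinger argument (or an equivalent $L^1$-gradient estimate) rather than the lattice exponential sum.
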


\begin{remark} 
Theorem~\ref{n-more-2} states that the informativeness of the gradient decreases as $R$ grows, due to the increasing pairwise independence of the class $\mathcal{H}$. Assuming $M_{{\mathbf x}}$ and $\|\partial_{w_i}p({\mathbf w},\cdot)\|^2_{\mu_{\mathcal{X}}}$ are moderate (i.e., the loss function $L$ and the neural network $p$ are well-behaved), informativeness is determined by $R$ and the final term.
Note that the kernel $\frac{1}{(\|{\mathbf x}\|^2\|{\mathbf y}\|^2-({\mathbf x}^\top {\mathbf y})^2)^2}$ is singular for ${\mathbf x} = \lambda {\mathbf y}$. The term under the square root measures the regularity of the input distribution $\mu_\mathcal{X}$ and can be understood as a continuous analog of collision entropy (recall that collision entropy influences the gradient informativeness in the LWE context). So, in both discrete and continuous cases we see that the input distribution $\mu_\mathcal{X}$, if chosen properly, can potentially increase the RHS of our inequalities.
\end{remark}

\begin{remark} 
An analogous bound on the gradient's variance was proved in \cite{DBLP:journals/jmlr/Shamir18} for the specific case where $L(y,y') = (y-y')^2$ and ${\mathbf A}$ is sampled uniformly from $R\cdot {\mathbb S}^{n-1}$. This special case allows the use of the Fourier transform to bound the variance in terms of the decay rate of $\hat{\phi}$, where $\phi^2$ is the probability density function of $\mu_\mathcal{X}$. If $\hat{\phi}$ decays exponentially (as with a Gaussian distribution for $\mu_\mathcal{X}$), this bound becomes exponential in $R$ and significantly stronger than ours. In contrast, our bound holds for any loss function $L$, leading to a slower decay rate of $\frac{1}{R^2}$. It remains an open question whether this is simply the artifact of our analysis (which relies on pairwise independence) or if it stems from the specifics of the squared error loss and special assumptions on $\hat{\phi}$. 
\end{remark}

\begin{remark} \label{singular-kernel}
Let $\mu_{\mathcal{X}}$ be defined by some probability density function $f$ and $M_{{\mathbf x}}\lesssim 1$. Let us denote $g(r) = \sup_{{\mathbf x}: \|{\mathbf x}\|=r} f({\mathbf x})$.  For the expression $${\mathbb E}_{({\mathbf x},{\mathbf y})\sim \mu_{\mathcal{X}}^2}\bigg[\frac{(\|{\mathbf x}\|\|{\mathbf y}\|+|{\mathbf x}^\top {\mathbf y}|)^2(\|{\mathbf x}\|+\|{\mathbf y}\|)^2}{(\|{\mathbf x}\|^2\|{\mathbf y}\|^2-({\mathbf x}^\top {\mathbf y})^2)^2}\bigg]$$ to be bounded it is enough to have $n\geq 5$ and $\int_0^\infty r^{n-1}g(r)dr<\infty$, \\$\int_0^\infty r^{n-3}g(r)dr<\infty$. This is shown in the appendix.
\end{remark}

The case of $n=1$ is a special one and should be treated distinctly.

First, let us introduce the space of functions of bounded variation on $[0,1]$, denoted by ${\rm BV}([0,1])$, as a set of functions $f: [0,1]\to {\mathbb R}$  for which the following expression is bounded:
\begin{equation*}
\begin{split}
\|f\|_{{\rm BV}([0,1])}\triangleq\sup_{\substack{N\in {\mathbb N} \\ 0= x_0\leq ...\leq x_N= 1}}\sum_{i=1}^N |f(x_i)-f(x_{i-1})|.
\end{split}
\end{equation*} 
It is well-known that such functions are always measurable~\cite{zbMATH01448982}. 
\begin{theorem}\label{case-n-1}
Let $n=1$, $\psi|_{[0,1]}\in {\rm BV}([0,1])$ and $|\frac{\partial L(p({\mathbf w},x), y)}{\partial p}|\lesssim  1$, $|\frac{\partial^2 L(p({\mathbf w},x), y)}{\partial p \partial y}|\\ \lesssim  1$, $|\frac{\partial^3 L(p({\mathbf w},x), y)}{\partial p \partial y^2}|\lesssim  1$. Also, let $\mu_X$ be an absolutely continuous distribution. Then,  
\begin{equation*}
\begin{split}
&{\rm Var}_{h\sim\chi}\big[\partial_{w_i} {\mathbb E}_{X\sim \mu_{\mathcal{X}}} [L(p({\mathbf w},X), h(X))]\big]\lesssim \\
& \|\frac{\partial p({\mathbf w},\cdot)}{\partial w_i}\|^2_{\mu_{\mathcal{X}}}\max(\|\psi\|_{{\rm BV}([0,1])}, \|\psi\|^2_{{\rm BV}([0,1])})\times \\
&\min_{H\in {\mathbb N}}\bigg(\frac{1}{H^2}+
\log^2 (H+1) \sum_{\substack{a,b\in [-H,H]\cap {\mathbb Z},\\ (a,b)\ne (0,0)}} \frac{{\mathbb E}_{(X,Y)\sim \mu_{\mathcal{X}}^2}[e^{-2\pi^2R^2(aX+bY)^2}]}{\max(|a|,1)\max(|b|,1)}\bigg)^{\frac{1}{2}}.
\end{split}
\end{equation*}
\end{theorem}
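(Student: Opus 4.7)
The plan is to prove this theorem directly, bypassing Theorem~\ref{aes-bound}, by Fourier-expanding the $1$-periodic integrand and using the Gaussian characteristic function identity ${\mathbb E}_{A\sim\mathcal{N}(0,R^2)}[e^{2\pi i t A}]=e^{-2\pi^2 R^2 t^2}$. Since $h$ is parameterized by $A\sim\mathcal{N}(0,R^2)$ via $h(x)=\psi(Ax)$, we have ${\rm Var}_{h\sim\chi}={\rm Var}_A$. Set $Q_x=\frac{\partial p({\mathbf w},x)}{\partial w_i}$ and $\tilde G_x(z)=\frac{\partial L(p({\mathbf w},x),\psi(z))}{\partial p}$, which is $1$-periodic in $z$. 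Then $\partial_{w_i}C_h({\mathbf w})={\mathbb E}_X[Q_X\tilde G_X(AX)]$, and Fubini gives
\begin{equation*}
{\rm Var}_A[\partial_{w_i}C_h({\mathbf w})]={\mathbb E}_{(X,Y)\sim\mu_{\mathcal{X}}^2}\bigl[Q_XQ_Y\,{\rm Cov}_A(\tilde G_X(AX),\tilde G_Y(AY))\bigr].
\end{equation*}
The hypothesis $|\frac{\partial^2 L}{\partial p\partial y}|\lesssim 1$ implies $\tilde G_x\in{\rm BV}([0,1])$ with $\|\tilde G_x\|_{{\rm BV}([0,1])}\lesssim\|\psi\|_{{\rm BV}([0,1])}$, so the Fourier coefficients $\hat G_x(k)$ satisfy $|\hat G_x(k)|\lesssim\|\psi\|_{{\rm BV}([0,1])}/|k|$ for $k\neq 0$ and $|\hat G_x(0)|\lesssim 1$.

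Next I truncate at frequency $H$ using Fej\'er sums $\sigma_H\tilde G_x$, exploiting the standard fact that for BV periodic functions $\|\tilde G_x-\sigma_H\tilde G_x\|_\infty\lesssim\|\psi\|_{{\rm BV}([0,1])}/H$ at continuity points, which are of full measure since $\mu_{\mathcal{X}}$ is absolutely continuous. Writing $\Phi(A)={\mathbb E}_X[Q_X\tilde G_X(AX)]$ and $\Phi_H(A)$ for its Fej\'er analogue, ${\rm Var}_A[\Phi]\leq 2{\rm Var}_A[\Phi_H]+2{\rm Var}_A[\Phi-\Phi_H]$, and Cauchy--Schwarz in $X$ with the uniform Fej\'er estimate produces ${\rm Var}_A[\Phi-\Phi_H]\lesssim\|Q\|^2_{\mu_{\mathcal{X}}}\|\psi\|^2_{{\rm BV}([0,1])}/H^2$, which is the source of the $1/H^2$ term inside the outer $(\cdot)^{1/2}$. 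For the main term, a Fourier expansion together with the Gaussian identity produces
\begin{equation*}
{\rm Var}_A[\Phi_H]=\sum_{1\leq|a|,|b|\leq H}{\mathbb E}_{X,Y}\!\bigl[Q_XQ_Y\hat G_X(a)\overline{\hat G_Y(b)}\bigl(e^{-2\pi^2R^2(aX-bY)^2}-e^{-2\pi^2R^2(a^2X^2+b^2Y^2)}\bigr)\bigr],
\end{equation*}
since summands with $a=0$ or $b=0$ cancel in the bracket. Applying $|\hat G_x(k)|\lesssim\|\psi\|_{{\rm BV}([0,1])}/|k|$ and the triangle inequality on the bracket bounds each term by $\|\psi\|^2_{{\rm BV}([0,1])}/(|a||b|)$ times the sum of the two exponentials, weighted by $|Q_X Q_Y|$.

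Finally, two successive Cauchy--Schwarz inequalities close out the main term: first over $(X,Y)$ to factor out $\|Q\|^2_{\mu_{\mathcal{X}}}$, and then on the discrete double sum with weights $1/(|a||b|)$, using $\sum_{1\leq|a|,|b|\leq H}1/(|a||b|)\lesssim\log^2(H+1)$. This yields the factor $\log(H+1)$ multiplying the square root of $\sum_{(a,b)\neq(0,0)}{\mathbb E}_{X,Y}[e^{-2\pi^2R^2(aX+bY)^2}]/[\max(|a|,1)\max(|b|,1)]$ (after the relabeling $b\mapsto -b$, which leaves the symmetric sum invariant; adjoining the $a=0$ or $b=0$ entries only loosens the bound). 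Combining with the tail piece and minimizing over $H$ concludes the proof. The principal obstacle is the bookkeeping of the two Cauchy--Schwarz steps so that the exponential $e^{-2\pi^2R^2(aX+bY)^2}$ survives intact into the final sum without a wasteful doubling of the exponent, and a secondary subtlety is applying the Fej\'er uniform bound in a way that preserves the sharp $1/H^2$ rate in the tail without picking up a parasitic $\log^2 H$ factor; the third regularity hypothesis $|\frac{\partial^3 L}{\partial p\partial y^2}|\lesssim 1$ enters in controlling refined BV estimates needed in these intermediate manipulations.
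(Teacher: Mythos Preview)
Your approach bypasses Theorem~\ref{aes-bound} and works directly with the Fourier expansion of $\tilde G_X$; the paper instead applies Theorem~\ref{aes-bound} with $\mathcal{F}=(C^{1,1}(\mathbb{R}),C^{1,1}(\mathbb{R}^2))$, shows that $f(\psi(\cdot),\psi(\cdot))\in{\rm BV}([0,1]^2)$ whenever $f\in C^{1,1}$, reduces $\varepsilon_{\mathcal{F}}(x,y)$ to the discrepancy of the wrapped Gaussian $(\{Ax\},\{Ay\})$ on $[0,1]^2$ via the Koksma--Hlawka inequality, and then bounds that discrepancy by the Erd\H{o}s--Tur\'an--Koksma inequality. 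The hypothesis $|\partial^3 L/\partial p\,\partial y^2|\lesssim 1$ is used in the paper solely to place $\phi_{x,x'}$ in $C^{1,1}(\mathbb{R}^2)$; in your sketch it has no identifiable role, and the hand-wave about ``refined BV estimates'' does not point to a concrete use.

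There is a genuine gap at the Fej\'er step. The claim that $|f(z)-\sigma_H f(z)|\lesssim V(f)/H$ uniformly over continuity points of a ${\rm BV}$ periodic function $f$ is false. For $f=\mathds{1}_{[0,1/2)}$ and $z_H=\tfrac12-\tfrac1H$, each $z_H$ is a continuity point, yet $|f(z_H)-\sigma_H f(z_H)|\to \tfrac12-\int_0^1\frac{\sin^2(\pi s)}{\pi^2 s^2}\,ds>0$, because the Fej\'er kernel keeps a fixed positive fraction of its mass within distance $O(1/H)$ of the origin. The uniform $O(1/H)$ rate you invoke holds for Lipschitz $\psi$, not for general $\psi\in{\rm BV}$. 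Consequently your tail bound ${\rm Var}_A[\Phi-\Phi_H]\lesssim\|Q\|_{\mu_{\mathcal X}}^2\|\psi\|_{{\rm BV}}^2/H^2$ is unjustified. An $L^2$ replacement gives only $\|\tilde G_X-\sigma_H\tilde G_X\|_{L^2[0,1]}^2\lesssim\|\psi\|_{{\rm BV}}^2/H$, and even this must be transferred from Lebesgue measure to integration against the wrapped-Gaussian density of $\{AX\}$, which blows up as $X\to 0$ and is not controlled by absolute continuity of $\mu_{\mathcal X}$ alone. This is precisely what the Erd\H{o}s--Tur\'an--Koksma inequality handles in the paper's route: its $1/H$ truncation term comes from Beurling--Selberg type one-sided trigonometric approximants of box indicators, not from Fej\'er means, and that is what delivers the clean $1/H^2$ under the square root uniformly in $\mu_{\mathcal{X}}$.
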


\begin{example}\label{example-case-n-1}
Let $\mu_\mathcal{X}$ be $U([0,1])$. Then,
\begin{equation*}
\begin{split}
&{\mathbb E}_{(X,Y)\sim \mu_{\mathcal{X}}^2}[e^{-2\pi^2R^2(aX+bY)^2}] = \int_{[0,1]^2}e^{-2\pi^2R^2(ax+by)^2}dxdy\leq \\
&\sqrt{2} 
\int_{0}^{\sqrt{2}} e^{-2\pi^2R^2(a^2+b^2)t^2}dt= \frac{1}{\pi R\sqrt{a^2+b^2}} 
\int_{0}^{2\pi R\sqrt{ a^2+b^2}} e^{-t^2}dt\lesssim \frac{1}{R}.
\end{split}
\end{equation*}
Therefore,
\begin{equation*}
\begin{split}
&\sum_{\substack{a,b\in [-H,H]\cap {\mathbb Z},\\ (a,b)\ne (0,0)}} \frac{{\mathbb E}_{(X,Y)\sim \mu_{\mathcal{X}}^2}[e^{-2\pi^2R^2(aX+bY)^2}]}{\max(|a|,1)\max(|b|,1)}\lesssim \frac{\log^2 (H+1)}{R}.
\end{split}
\end{equation*}
Setting $H=\lfloor R\rfloor$ gives that the minimum over $H\in {\mathbb N}$ in Theorem~\ref{case-n-1} is asymptotically bounded by $\frac{\log^4 (R+1)}{R+1}$. Thus,
\begin{equation*}
\begin{split}
&{\rm Var}_{h\sim\chi}\big[\partial_{w_i} {\mathbb E}_{X\sim \mu_{\mathcal{X}}} [L(p({\mathbf w},X), h(X))]\big]\lesssim \frac{\log^2(R+1)}{\sqrt{R+1}} \\  
&
\times \|\frac{\partial p({\mathbf w},\cdot)}{\partial w_i}\|^2_{\mu_{\mathcal{X}}}\max(\|\psi\|_{{\rm BV}([0,1])}, \|\psi\|^2_{{\rm BV}([0,1])}).
\end{split}
\end{equation*}

\end{example}

\section{Nature of the barren plateau phenomenon: example}
Let us demonstrate the typical landscape of the objective function for the case of learning high-frequency functions with \( n=1 \), specifically for the hypothesis set defined by
\begin{equation*}
\begin{split}
\mathcal{H} = \{h(x) = \psi (wx)\mid w>0\}.
\end{split}
\end{equation*}
where $\psi: {\mathbb R}\to {\mathbb R}$ is 1-periodic. The objective function is given by
\[
\mathbb{E}_{X \sim \mu_{\mathcal{X}}}\Big[L\big(p(\mathbf{w}, X), h(X)\big)\Big],
\]
where $\mu_{\mathcal{X}}$ is a uniform distribution over $[0,1]$.

If $\psi$ is regular enough, one can represent it as the Fourier series
\begin{equation*}
\begin{split}
\psi(x) = \sum_{i=-\infty}^{+\infty} a_i e^{{\rm i}2\pi ix},
\end{split}
\end{equation*}
where $a_i = \int_{0}^1 \psi(x)e^{-{\rm i}2\pi ix}dx$. Therefore, the target function has the following form
\begin{equation*}
\begin{split}
h(x) = \psi(wx)=\sum_{i=-\infty}^{+\infty} a_i e^{{\rm i}2\pi iwx}.
\end{split}
\end{equation*}
Note that the real part of the Fourier transform of $h$ has its local extrema at points $2\pi i w, i\in {\mathbb Z}$, due to
\begin{equation*}
\begin{split}
\widehat{h}(\omega) = \int_{-\infty}^{+\infty}h(x)e^{-{\rm i}\omega x}dx = 2\pi\sum_{i=-\infty}^{+\infty} a_i \delta(\omega-2\pi i w),
\end{split}
\end{equation*}
where $\delta$ is Dirac's delta function. 
Therefore, finding the local extrema of $\widehat{h}$ is equivalent to identifying the frequency $w$, which in turn is equivalent to successfully learning the target function $h$. Motivated by this, we define a loss function $L$ as $L(y,y') = y y'$ and consider a neural network of the form $p(\omega,x) = \cos(\omega x)$ or $p(\omega,x) = -\cos(\omega x)$, where $\omega\in {\mathbb R}$ is the network’s only parameter.
With this setup, our optimization task becomes
\begin{equation*}
\begin{split}
C_h(\omega)= \int_0^1 h(x)\cos(\omega x) dx = {\rm Re}\int_0^1 h(x)e^{-{\rm i}\omega x}dx 
\to 
\mathop{\min}\limits_{\omega} ( {\rm or\,\,}\mathop{\max} \limits_{\omega}).
\end{split}
\end{equation*}
The latter objective represents the smoothed Fourier transform of $h$, i.e.
\begin{equation*}
\begin{split}
C_h(\omega)  =  \operatorname{Re}\int_0^1 h(x)e^{-{\rm i}\omega x}dx =  \operatorname{Re}\int_0^1 \sum_{i=-\infty}^{+\infty} a_i e^{{\rm i}2\pi iwx}e^{-{\rm i}\omega x}dx = \\
\operatorname{Re} \sum_{i=-\infty}^{+\infty} a_i k(\omega-2\pi iw),
\end{split}
\end{equation*}
where $k(\omega) = \int_0^1 e^{-{\rm i}\omega x} dx = \frac{{\rm i}}{\omega}(e^{-{\rm i}\omega } -1)$. Note that $|k(\omega)| = \frac{2|\sin(\frac{\omega}{2})|}{|\omega|}$. In other words,
\begin{equation*}
\begin{split}
C_h(\omega)  =  \operatorname{Re} (\widehat{h}\ast \frac{k}{2\pi})(\omega),
\end{split}
\end{equation*}
where $\ast$ denotes the convolution.

\begin{figure}
\begin{minipage}[t]{0.5\textwidth}
    \centering
    \includegraphics[width=1.0\textwidth]{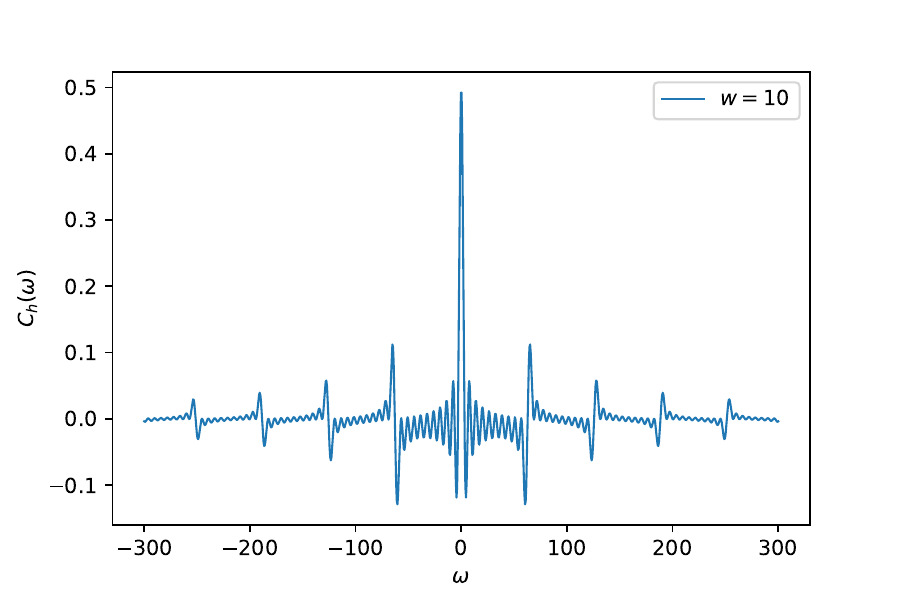}
\end{minipage} \hfill \begin{minipage}[t]{0.5\textwidth}
    \centering
    \includegraphics[width=1.0\textwidth]{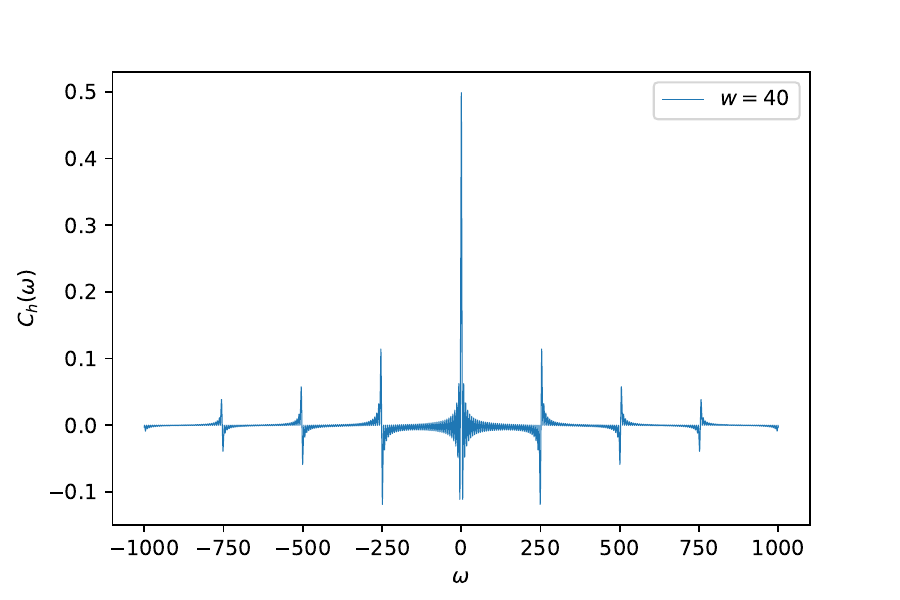}
\end{minipage}
\caption{The objective function $C_h(\omega)$ for $\psi(x) = \{x\}$, where $\{x\}$ is a fractional part of $x$, and for frequencies $w=10$ and $w=40$.}\label{obj}    
\end{figure}

When the frequency \( w \) is large, the distance $2\pi w$ between the local extrema of $\widehat{h}(\omega)$ becomes large. In this case, applying the convolution minimally shifts the positions of these local extrema, meaning they remain close to the points $ 2\pi w i$, for $i \in \mathbb{Z}$. Additionally, the function $\operatorname{Re} \left( \widehat{h} \ast \frac{k}{2\pi} \right)(\omega)$ stays very close to zero between these extrema, similar to the behavior of the non-smoothed function $\widehat{h}$. 

In other words, $C_h(\omega)$ remains small within intervals $[2\pi w i+\varepsilon, 2\pi w (i+1)-\varepsilon]$ for a small $\varepsilon>0$ (as illustrated in Figure~\ref{obj}, the function can exhibit highly oscillatory behavior). In these regions, the gradient of $C_h(\omega)$ is not informative, providing useful signals only when $\omega$ is already near a local extremum. This behavior explains the so-called ``grokking'' effect observed during training at moderate frequencies $w$. Experimentally, we observed how, after many epochs of stagnation, the algorithm suddenly finds a local minimum within a few epochs.

\begin{remark} Formally, the fact that the gradient of $C_h(\omega)$ carries little useful information for high frequencies follows from Theorem~\ref{case-n-1} and Example~\ref{example-case-n-1}. In our analysis, we quantify the informativeness of the gradient by measuring its variance with respect to a randomly chosen target function.

An alternative measure of gradient informativeness is the so-called “signal-to-noise ratio” (SNR). For the mean squared error loss, the SNR is defined as the ratio between  
$${\rm Sig}_\mathbf{w} = \Big\|\mathbb{E}_{X \sim \mu_{\mathcal{X}}}\big[h(X)\frac{\partial p(\mathbf{w}, X)}{\partial \mathbf{w}}\big]\Big\|^2$$  
and  
$${\rm Noi}_\mathbf{w} = \mathbb{E}_{X \sim \mu_{\mathcal{X}}}\Big\|h(X)\frac{\partial p(\mathbf{w}, X)}{\partial \mathbf{w}} - \mathbb{E}_{X' \sim \mu_{\mathcal{X}}}\big[h(X')\frac{\partial p(\mathbf{w}, X')}{\partial \mathbf{w}}\big]\Big\|^2.$$  
A small SNR implies unstable gradient behavior across batches, leading to slower convergence during optimization.

It is important to note that, in the SNR definition, randomness comes from sampling the input $X$, rather than from sampling the target function, as in our approach. This distinction is significant. In our framework, the difficulty of learning arises from the gradient having negligibly small variance with respect to the target function. In contrast, under the SNR perspective, optimization becomes challenging due to high variance with respect to $X$.

Finally, a low variance with respect to the target function and a large variance (relative to ${\rm Sig}_\mathbf{w}$) with respect to the input data are not mutually exclusive. In fact, as shown in Section 3.2 of the full version of~\cite{DBLP:conf/icml/Shalev-ShwartzS17}, in one case of the barren plateau phenomenon, when the variance with respect to targets is negligibly small, experimentally computed SNR quantity is also small.
\end{remark}

\begin{remark} 
The problem of training high-frequency functions of a single variable has been analyzed in~\cite{Takhanov2025}. According to Remark 4 of~\cite{Takhanov2025}, if $\psi: {\mathbb R} \to \{-1, 1\}$ is a balanced 1-periodic function (i.e., $\psi(x+1) = \psi(x)$ and $\int_0^1 \psi(x) dx = 0$), and $A$ is a large natural number, then any statistical query (SQ) algorithm that learns the concept class  
$$  
\{\psi(ax) \mid a \in \{0, 1, \dots, A-1\}\}  
$$  
requires at least $C A^{1/6}$ queries. Here, $C$ is a constant that may depend on $\psi$ and can include logarithmic factors in $A$. Since SQ algorithms include soft-margin SVMs for binary classification problems (possibly in a transformed feature space)~\cite{DBLP:conf/soda/FeldmanGV17}, this result implies that high-frequency targets are also difficult to learn with SVMs.

This observation naturally leads to the conjecture that training high-frequency functions is similarly hard for kernel ridge regression (KRR). Recall that, in the wide-width limit, appropriately initialized neural networks trained via gradient descent are equivalent to KRR~\cite{Jacot}. As a result, it is reasonable to conjecture that high-frequency targets are also challenging for over-parameterized neural networks.
Experimental results support this conjecture (see Figure~\ref{over-p} in the Appendix).
\end{remark}

\section{Applications}
This study informs practitioners about the limitations of neural networks in learning mappings from almost pairwise independent classes of functions, even when deep learning architectures are varied. This insight can be potentially applied to design of secure encryption schemes resilient to modern machine learning-based cryptanalysis~\cite{Vaudenay2003}. The findings are particularly relevant for post-quantum cryptographic protocols, like those relying on LWE. Furthermore, the theoretical framework that we use can guide the evaluation and benchmarking of new cryptographic algorithms, helping to identify whether a given primitive is inherently robust against gradient-based learning~\cite{CiC-1-3-32}.  It also provides a theoretical justification for why side-channel attacks using neural networks might require preprocessing or non-standard data distributions to succeed, thus shaping the direction of future cybersecurity research.

\section{Conclusions and open problems}
Training parity functions or high-frequency functions using specific losses (such as mean squared error loss) by gradient-based methods is hard, due to a low informativeness of the gradient. 
We return to the latter fact and enhance it by the following observation: any almost pairwise independent class of target functions, trained using any regular neural network architecture and any regular loss function, and by any gradient-based method, encounters the same issue of a low informativeness of the gradient, unless the input distribution is carefully tuned. The last condition is crucial --- recent attacks on LWE demonstrate that specific input distributions improve the quality of the gradient. 

As this attack demonstrated, even if we lack direct access to input-output pairs from a ``favorable'' input distribution, such pairs can often be derived from those sampled from a ``less favorable'' distribution (for example, one generated from a uniform distribution over inputs). This motivates an interesting research direction: identifying conditions under which input-output pairs from a ``favorable'' distribution can be efficiently computed from pairs sampled from a less informative distribution, such as a uniform one.

\section*{Acknowledgments}
This research has been funded by the Science Committee of the Ministry of Science and Higher Education of the Republic of Kazakhstan (Grant No. AP27510283), PI R. Takhanov.




\appendix
\section{Proof of Theorem~\ref{aes-bound}}
Since $p: O\times \mathcal{X}\to {\mathbb R}$ and ${\rm id}_{\mathcal{Y}}:\mathcal{Y}\to \mathcal{Y}, {\rm id}_{\mathcal{Y}}(y)=y$ are measurable functions, $({\mathbf w},x,y)\to (p({\mathbf w},x)+c, y)$ is a measurable function between $O\times \mathcal{X}\times \mathcal{Y}$ and ${\mathbb R}\times \mathcal{Y}$ for any $c\in {\mathbb R}$. Therefore, the composition $({\mathbf w},x,y)\to (p({\mathbf w},x)+c, y)\to L(p({\mathbf w},x)+c, y)$ is measurable. Thus, $n(L(p({\mathbf w},x)+\frac{1}{n}, y)-L(p({\mathbf w},x), y))$ is measurable on $O\times \mathcal{X}\times \mathcal{Y}$ and the limit $k({\mathbf w},x,y) = \limsup_{n\to +\infty}n(L(p({\mathbf w},x)+\frac{1}{n}, y)-L(p({\mathbf w},x), y))$ is an $\overline{{\mathbb R}}$-valued measurable function on $O\times \mathcal{X}\times \mathcal{Y}$. Let us assume that if $\lim_{\Delta p\to 0}\frac{L(p({\mathbf w},x)+\Delta p, y)-L(p({\mathbf w},x), y)}{\Delta p}$ is undefined, then $\frac{\partial L(p({\mathbf w},x), y)}{\partial p}$ is defined to be equal to $k({\mathbf w},x,y)$. Since we always have $k({\mathbf w},x,y) = \frac{\partial L(p({\mathbf w},x), y)}{\partial p}$, the function $\frac{\partial L(p({\mathbf w},x), y)}{\partial p}$ is measurable on $O\times \mathcal{X}\times \mathcal{Y}$. 

From the a.e. boundedness of $\frac{\partial L(p({\mathbf w},x), y)}{\partial p}$ and the Fubini-Tonelli theorem we conclude that the function is integrable w.r.t. to $\mu_{\mathcal{Y}}$ a.e. over $({\mathbf w},x)\in O\times \mathcal{X}$ and ${\mathbb E}_{Y\sim \mu_{\mathcal{Y}}}[\frac{\partial L(p({\mathbf w},x), Y)}{\partial p}]$ is measurable. Let us denote
\begin{equation*}
\begin{split}
r_{\mathbf w}(x,y) = \frac{\partial L(p({\mathbf w},x), y)}{\partial p}-{\mathbb E}_{Y\sim \mu_{\mathcal{Y}}}[\frac{\partial L(p({\mathbf w},x), Y)}{\partial p}].\\
\end{split}
\end{equation*}
Thus, $r_{\mathbf w}(x,y)$ is measurable on $O\times \mathcal{X}\times \mathcal{Y}$ and a.e. bounded. 

Since $p: O\times \mathcal{X}\to {\mathbb R}$ and $(x,h)\to h(x)$ are measurable functions, $({\mathbf w},x,h)\to (p({\mathbf w},x)+c, h(x))$ is a measurable function between $O\times \mathcal{X}\times \mathcal{H}$ and ${\mathbb R}\times \mathcal{Y}$ for any $c\in {\mathbb R}$. Therefore, 
the composition $({\mathbf w}, x,h)\to (p({\mathbf w},x)+c, h(x))\to L(p({\mathbf w},x)+c, h(x))$  is measurable and bounded on $O\times \mathcal{X}\times \mathcal{H}$.
Analogously to the case of  $r_{\mathbf w}(x,y)$, from the measurability of $L(p({\mathbf w},x)+c, h(x))$ on $O\times \mathcal{X}\times \mathcal{H}$, we conclude that $\frac{\partial L(p({\mathbf w},x), h(x))}{\partial p}$ and $r_{\mathbf w}(x,h(x))$ are measurable and a.e. bounded on $O\times \mathcal{X}\times \mathcal{H}$.

For any $h$, $\frac{\partial L(p({\mathbf w},x), h(x))}{\partial p}$ is measurable on $O\times \mathcal{X}$ and 
from the Fubini-Tonelli theorem we obtain that a.e. over $h\sim \chi$, $\frac{\partial L(p({\mathbf w},x), h(x))}{\partial p}$ is a.e. bounded on $O\times \mathcal{X}$.
Thus,
\begin{equation*}
\begin{split}
\frac{\partial L(p({\mathbf w},x), h(x))}{\partial p}\frac{\partial p({\mathbf w},x)}{\partial w_i}
\end{split}
\end{equation*}
is measurable, a.e. bounded on $O\times \mathcal{X}$ and can differ from the derivative $\partial_{w_i} L(p({\mathbf w},x), h(x))$ only in some subset of a null-set. I.e. $\frac{\partial L(p({\mathbf w},x), h(x))}{\partial p}\frac{\partial p({\mathbf w},x)}{\partial w_i}=\partial_{w_i} L(p({\mathbf w},x), h(x))$ a.e. over $({\mathbf w},x)\in \overline{O\times \mathcal{X}}$. Using the Leibniz integral rule in measure theory we obtain
\begin{equation*}
\begin{split}
&\partial_{w_i} {\mathbb E}_{X\sim \mu_{\mathcal{X}}}[L(p({\mathbf w},X), h(X))]  = 
\partial_{w_i} {\mathbb E}_{X\sim \overline{\mu_{\mathcal{X}}}}[L(p({\mathbf w},X), h(X))] = \\
&{\mathbb E}_{X\sim \overline{\mu_{\mathcal{X}}}}[\partial_{w_i} L(p({\mathbf w},X), h(X))] = 
{\mathbb E}_{X\sim \mu_{\mathcal{X}}}[\frac{\partial L(p({\mathbf w},X), h(X))}{\partial p}\frac{\partial p({\mathbf w},X)}{\partial w_i}].
\end{split}
\end{equation*}
Since $\frac{\partial L(p({\mathbf w},x), h(x))}{\partial p}$ is measurable on $O\times \mathcal{X}\times \mathcal{H}$ and is a.e. bounded, the Fubini-Tonelli theorem  gives us that the previous expression is integrable w.r.t. $\chi$.
Thus, we have
\begin{equation*}
\begin{split}
&{\rm Var}_{h\sim \chi}\big[\partial_{w_i} {\mathbb E}_{X\sim \mu_{\mathcal{X}}}L(p({\mathbf w},X), h(X))\big] = \\
&{\rm Var}_{h\sim \chi}\big[ {\mathbb E}_{X\sim \mu_{\mathcal{X}}}[\frac{\partial L(p({\mathbf w},X), h(X))}{\partial p}\frac{\partial p({\mathbf w},X)}{\partial w_i}]\big]=\\
&{\rm Var}_{h\sim \chi}\big[ {\mathbb E}_{X\sim \mu_{\mathcal{X}}}[r_{\mathbf w}(X,h(X))\frac{\partial p({\mathbf w},X)}{\partial w_i}]\big]\leq 
{\mathbb E}_{h\sim \chi}\big[ \langle \frac{\partial p({\mathbf w},x)}{\partial w_i}, r_{\mathbf w}(x, h(x))\rangle^2_{\mu_{\mathcal{X}}}\big].
\end{split}
\end{equation*}
\begin{lemma}\label{through-F} Let $f: \mathcal{X}\times \mathcal{H}\to \overline{{\mathbb R}}$ be measurable and a.e. bounded on $\mathcal{X}\times \mathcal{H}$ and let $f_h(x) = f(x, h)$. 
Then, for any $g\in L_2(\mathcal{X}, \mu_{\mathcal{X}})$ we have
$$
{\mathbb E}_{h\sim \chi}[\langle f_h, g\rangle^2_{\mu_{\mathcal{X}}}]\le \|g\|^2_{\mu_{\mathcal{X}}}\sqrt{{\mathbb E}_{(h,h')\sim \chi^2}[\langle f_h,f_{h'}\rangle^2_{\mu_{\mathcal{X}}}]}.
$$
\end{lemma}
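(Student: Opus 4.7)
The plan is to reduce the lemma to two applications of the Cauchy--Schwarz inequality, treating $f$ as the kernel of an integral operator. Set $a_h \triangleq \langle f_h, g\rangle_{\mu_{\mathcal{X}}}$. Since $f$ is measurable and essentially bounded on $\mathcal{X}\times \mathcal{H}$ and $g\in L_2(\mathcal{X}, \mu_{\mathcal{X}})$, Fubini--Tonelli ensures that $h\mapsto a_h$ is measurable and essentially bounded on $\mathcal{H}$, so every swap of integration below is justified. First I would rewrite
\begin{equation*}
{\mathbb E}_{h\sim \chi}[a_h^2]
= {\mathbb E}_{h\sim \chi}\bigl[a_h \langle f_h, g\rangle_{\mu_{\mathcal{X}}}\bigr]
= \bigl\langle g,\,\phi \bigr\rangle_{\mu_{\mathcal{X}}}, \quad \text{where} \quad \phi(x) \triangleq {\mathbb E}_{h\sim \chi}\bigl[a_h\, f(x,h)\bigr],
\end{equation*}
and apply Cauchy--Schwarz in $L_2(\mathcal{X}, \mu_{\mathcal{X}})$ to obtain ${\mathbb E}_{h\sim \chi}[a_h^2] \leq \|g\|_{\mu_{\mathcal{X}}}\,\|\phi\|_{\mu_{\mathcal{X}}}$.

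Next I would bound $\|\phi\|_{\mu_{\mathcal{X}}}^2$ by expanding the square as a double expectation over two independent copies $h,h'\sim \chi$ and swapping the order of integration to pull the $\mu_{\mathcal{X}}$-integral inside:
\begin{equation*}
\|\phi\|_{\mu_{\mathcal{X}}}^2 = {\mathbb E}_{(h,h')\sim \chi^2}\bigl[a_h\, a_{h'}\, \langle f_h, f_{h'}\rangle_{\mu_{\mathcal{X}}}\bigr].
\end{equation*}
A second application of Cauchy--Schwarz, now in $L_2(\mathcal{H}\times \mathcal{H}, \chi^2)$ against the pair $a_h a_{h'}$ and $\langle f_h, f_{h'}\rangle_{\mu_{\mathcal{X}}}$, together with the factorization ${\mathbb E}_{(h,h')\sim \chi^2}[a_h^2\, a_{h'}^2] = ({\mathbb E}_{h\sim \chi}[a_h^2])^2$ that follows from the independence of $h$ and $h'$ under $\chi^2$, yields
\begin{equation*}
\|\phi\|_{\mu_{\mathcal{X}}}^2 \leq {\mathbb E}_{h\sim \chi}[a_h^2] \cdot \sqrt{{\mathbb E}_{(h,h')\sim \chi^2}\bigl[\langle f_h, f_{h'}\rangle_{\mu_{\mathcal{X}}}^2\bigr]}.
\end{equation*}

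Combining the two bounds gives $({\mathbb E}_{h\sim \chi}[a_h^2])^2 \leq \|g\|_{\mu_{\mathcal{X}}}^2 \cdot {\mathbb E}_{h\sim \chi}[a_h^2] \cdot \sqrt{{\mathbb E}_{(h,h')\sim \chi^2}[\langle f_h, f_{h'}\rangle^2_{\mu_{\mathcal{X}}}]}$; dividing by ${\mathbb E}_{h\sim \chi}[a_h^2]$ (the inequality being trivial when this quantity vanishes) yields the claim. I do not anticipate a real obstacle here: the argument reduces to two applications of Cauchy--Schwarz, and the only step requiring any care is the justification of the Fubini swaps, which is immediate from the a.e.\ boundedness of $f$ together with the finiteness of the probability measures $\mu_{\mathcal{X}}$ and $\chi$.
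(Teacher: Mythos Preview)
Your proof is correct. Both your argument and the paper's ultimately bound ${\mathbb E}_{h}[\langle f_h,g\rangle^2]$ by $\|g\|^2$ times the $L_2(\chi^2)$-norm of the kernel $K(h,h')=\langle f_h,f_{h'}\rangle_{\mu_{\mathcal{X}}}$, but the routes differ in packaging. The paper introduces the integral operator $T:L_2(\mathcal{X},\mu_{\mathcal{X}})\to L_2(\mathcal{H},\chi)$, $T[g](h)=\langle f_h,g\rangle_{\mu_{\mathcal{X}}}$, computes $TT^\dag$ as the Hilbert--Schmidt operator with kernel $K$, and then invokes the standard facts $\|T\|^2=\|TT^\dag\|$ and $\|\cdot\|_{\mathrm{op}}\le\|\cdot\|_{\mathrm{HS}}$ (citing Conway). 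Your argument unpacks the same chain by hand: with $a_h=T[g](h)$ you write ${\mathbb E}[a_h^2]=\langle g,T^\dag T[g]\rangle$, apply Cauchy--Schwarz once in $L_2(\mathcal{X})$, then bound $\|T^\dag T[g]\|^2=\langle a,TT^\dag a\rangle$ by a second Cauchy--Schwarz in $L_2(\mathcal{H}\times\mathcal{H})$, and close the loop algebraically. The benefit of your version is that it is entirely self-contained and avoids any external operator-theory reference; the benefit of the paper's version is that it makes the structure transparent (the right-hand side is exactly the Hilbert--Schmidt norm of $TT^\dag$), which may help a reader see why the bound is natural.
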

\begin{proof}
Let us introduce a linear operator $T: L_2(\mathcal{X}, \mu_{\mathcal{X}})\to L_2(\mathcal{H}, \chi)$ by $T[g](h) = \langle f_h, g\rangle_{\mu_{\mathcal{X}}}$. By construction, $$\|T\| = \sup_{g\in L_2(\mathcal{X}, \mu_{\mathcal{X}}): \|g\|_{\mu_{\mathcal{X}}} \leq 1}\sqrt{{\mathbb E}_{h\sim \chi}[|T[g](h)|^2]}$$ and 
\begin{equation*}
\begin{split}
{\mathbb E}_{h\sim \chi}[\langle f_h, g\rangle^2_{\mu_{\mathcal{X}}}]\le \|T\| ^2 \|g\|^2_{\mu_{\mathcal{X}}}.
\end{split}
\end{equation*}
 Let us calculate the adjoint of $T$. For any $\phi\in L_2(\mathcal{H}, \chi)$, using the Fubini-Tonelli theorem we have 
\begin{equation*}
\begin{split}
&\langle \phi, T[g]\rangle_{\chi} = \int_{\mathcal{H}}\phi(h)\langle f_h, g\rangle_{\mu_{\mathcal{X}}}d\chi(h) = \int_{\mathcal{H}}\phi(h)\int_{\mathcal{X}} f(x, h) g(x)d\mu_{\mathcal{X}}(x)d\chi(h)=\\
&\int_{\mathcal{X}}g(x) \int_{\mathcal{H}}\phi(h) f(x, h) d\chi(h)d\mu_{\mathcal{X}}(x) = 
\langle \int_{\mathcal{H}}\phi(h) f(x, h) d\chi(h), g(x)\rangle_{\mu_{\mathcal{X}}}=\\
&\langle T^\dag[\phi], g\rangle_{\mu_{\mathcal{X}}}.
\end{split}
\end{equation*}
Therefore, $T^\dag[\phi](x) = \int_{\mathcal{H}}\phi(h) f(x, h) d\chi(h)$ and 
\begin{equation*}
\begin{split}
&TT^\dag[\phi](h) = \int_{\mathcal{X}} f(x, h)  \int_{\mathcal{H}}\phi(h') f(x, h') d\chi(h')d\mu_{\mathcal{X}}(x)=\\
&\int_{\mathcal{H}} \int_{\mathcal{X}} f(x, h)  \phi(h') f(x, h') d\mu_{\mathcal{X}}(x)d\chi(h') = \int_{\mathcal{H}} \langle f_h, f_{h'}\rangle_{\mu_{\mathcal{X}}}\phi(h')d\chi(h').
\end{split}
\end{equation*}
Since $\langle f_h, f_{h'}\rangle_{\mu_{\mathcal{X}}}$ is bounded, we have $\langle f_h, f_{h'}\rangle_{\mu_{\mathcal{X}}}\in L_2(\mathcal{H},\chi\times \chi)$. 
 Therefore, $TT^\dag$ is the Hilbert-Schmidt operator $g\mathop\to\limits^{TT^\dag} \int_{\mathcal{H}^2}\langle f_h, f_{h'}\rangle_{\mu_{\mathcal{X}}} g(h')d\chi(h')$ on $L_2(\mathcal{H}, \chi)$ with the Hilbert-Schmidt norm 
 $$ \sqrt{\int_{\mathcal{H}^2} \langle f_h, f_{h'}\rangle^2_{\mu_{\mathcal{X}}}d\chi^2(h,h')}.$$ 
Since the Hilbert-Schmidt norm is an upper bound on the operator norm (for details, see the proposition 4.7 from~\cite{Conway2007}), we conclude
\begin{equation*}
\begin{split}
\|T\|^2= \|T T^\dag\| \leq \sqrt{{\mathbb E}_{(h,h')\sim \chi^2}[\langle f_h, f_{h'}\rangle^2_{\mu_{\mathcal{X}}}}].
\end{split}
\end{equation*}
\end{proof}

After setting $f(x,h)=r_{\mathbf w}(x, h(x))$ and using Lemma~\ref{through-F}, we obtain
\begin{equation}\label{BB-bound}
\begin{split}
&{\mathbb E}_{h\sim \chi}[\langle r_{\mathbf w}(x, h(x)), \frac{\partial p({\mathbf w},x)}{\partial w_i}\rangle^2_{\mu_{\mathcal{X}}}]\leq  \\
&\|\frac{\partial p({\mathbf w},\cdot)}{\partial w_i}\|^2_{\mu_{\mathcal{X}}}({\mathbb E}_{(h_1, h_2)\sim \chi^2}[\langle r_{\mathbf w}(x, h_1(x)), r_{\mathbf w}(x, h_2(x))\rangle^2_{\mu_{\mathcal{X}}}]\big)^{\frac{1}{2}}.
\end{split}
\end{equation}
The primary tool we use to bound the expectation in RHS of~\eqref{BB-bound} is the following lemma, with its proof provided in the next subsection.
\begin{lemma}\label{aes-rough} Let the function $g: \mathcal{X}\times \mathcal{Y}\to \overline{{\mathbb R}}$ be measurable and a.e. bounded on $\mathcal{X}\times \mathcal{Y}$. Also, suppose we have ${\mathbb E}_{Y\sim \mu_{\mathcal{Y}}}[g(x, Y)]=0$ for any $x\in \mathcal{X}$ and $f_{h}(x)=g(x, h(x))$ is measurable and a.e. bounded on $\mathcal{X}\times\mathcal{H}$. Then, we have
\begin{equation}
\begin{split}
&\sqrt{{\mathbb E}_{(h_1, h_2)\sim \chi^2}[\langle f_{h_1}, f_{h_2}\rangle^2_{\mu_{\mathcal{X}}}]}\leq 
\sqrt{{\mathbb E}_{(X,Y)\sim \mu_{\mathcal{X}}^2}\big[ \varepsilon(X,Y)^2\|\psi_{X,Y}\|_{\mathcal{F}_2}^2\big]}+ \\
&\sqrt{{\mathbb P}_{(X,Y)\sim \mu_{\mathcal{X}}^2}[X=Y]{\mathbb E}_{(X,Y)\sim \mu_{\mathcal{X}}^2}[(\|\psi_{X}\|_{\mathcal{F}_1}\varepsilon_\mathcal{F}(X,X)+D_X)^2\mid X=Y]}.
\end{split}
\end{equation}
where $D_x = {\rm Var}_{Y\sim \mu_{\mathcal{Y}}}(g(x, Y))$, $\psi_{x,x'}(y,y')=g(x, y)g(x', y')$ and $\psi_{x}(y)=g(x, y)^2$.
\end{lemma}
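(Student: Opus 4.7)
The plan is to expand the squared inner product, factor the $\chi^2$-expectation using the independence of $h_1$ and $h_2$, and then split the resulting $\mu_\mathcal{X}^2$-integral over $\mathcal{X}\times\mathcal{X}$ into its off-diagonal and diagonal parts, bounding each piece with the Integral Probability Metric together with the zero-mean hypothesis ${\mathbb E}_{Y\sim\mu_\mathcal{Y}}[g(x,Y)]=0$.

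Concretely, I would first write
\[
\langle f_{h_1}, f_{h_2}\rangle^2_{\mu_\mathcal{X}} = \int_{\mathcal{X}^2} g(x, h_1(x))\,g(x', h_1(x'))\cdot g(x, h_2(x))\,g(x', h_2(x'))\, d\mu_\mathcal{X}^2(x,x'),
\]
take expectation over $(h_1,h_2)\sim\chi^2$, and apply Fubini--Tonelli (justified by the a.e.-boundedness of $g$ and the measurability of $f_h$ stated in the hypothesis). The independence of $h_1$ and $h_2$ factors the integrand, giving
\[
{\mathbb E}_{(h_1,h_2)\sim\chi^2}\!\big[\langle f_{h_1}, f_{h_2}\rangle^2_{\mu_\mathcal{X}}\big] = {\mathbb E}_{(X,X')\sim\mu_\mathcal{X}^2}\!\Big[\big({\mathbb E}_{h\sim\chi}[\psi_{X,X'}(h(X), h(X'))]\big)^2\Big],
\]
since $\psi_{x,x'}(y,y')=g(x,y)g(x',y')$ by definition.

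Second, I split this outer expectation using the indicator of the diagonal $\{X=X'\}$. On the off-diagonal, the IPM bound applied to the (possibly non-unit-norm) test function $\psi_{x,x'}$ yields
\[
\big|{\mathbb E}_{h\sim\chi}[\psi_{x,x'}(h(x), h(x'))] - {\mathbb E}_{(Y,Y')\sim\mu_\mathcal{Y}^2}[\psi_{x,x'}(Y,Y')]\big| \leq \varepsilon_\mathcal{F}(x,x')\,\|\psi_{x,x'}\|_{\mathcal{F}_2},
\]
and the zero-mean assumption kills the reference term because ${\mathbb E}[\psi_{x,x'}(Y,Y')]={\mathbb E}[g(x,Y)]\cdot{\mathbb E}[g(x',Y')]=0$. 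On the diagonal $X=X'$ one has $\psi_{X,X}(h(X),h(X))=\psi_X(h(X))$ with ${\mathbb E}_{Y\sim\mu_\mathcal{Y}}[\psi_X(Y)]=D_X$ (again via the zero-mean hypothesis), so the IPM on $\mathcal{F}_1$ gives $|{\mathbb E}_{h\sim\chi}[\psi_X(h(X))]|\leq \|\psi_X\|_{\mathcal{F}_1}\varepsilon_\mathcal{F}(X,X)+D_X$.

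Combining the two contributions produces
\[
{\mathbb E}\!\big[\langle f_{h_1}, f_{h_2}\rangle^2_{\mu_\mathcal{X}}\big] \leq {\mathbb E}\!\big[\varepsilon_\mathcal{F}(X,X')^2\,\|\psi_{X,X'}\|_{\mathcal{F}_2}^2\big] + {\mathbb P}[X=X']\,{\mathbb E}\!\big[(\|\psi_X\|_{\mathcal{F}_1}\varepsilon_\mathcal{F}(X,X)+D_X)^2\,\big|\,X=X'\big],
\]
after which the subadditivity $\sqrt{a+b}\leq\sqrt{a}+\sqrt{b}$ for nonnegative $a,b$ yields the asserted inequality. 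The only point requiring care, rather than a true obstacle, is the diagonal contribution: when ${\mathbb P}[X=X']=0$ (e.g.\ for absolutely continuous $\mu_\mathcal{X}$) the second summand simply vanishes and the bound reduces to its first term, whereas in the discrete setting the conditional expectation on $\{X=X'\}$ is well-defined and measurability of the integrand restricted to the diagonal follows from the standing hypotheses on $g$ and $\psi$.
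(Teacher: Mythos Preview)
Your proposal is correct and follows essentially the same route as the paper's own proof: the paper isolates your Fubini--plus--independence step as a separate lemma (showing ${\mathbb E}_{(h_1,h_2)\sim\chi^2}[\langle f_{h_1},f_{h_2}\rangle^2_{\mu_\mathcal{X}}]=\int_{\mathcal{X}^2}F(x,x')^2\,d\mu_\mathcal{X}^2$ with $F(x,x')={\mathbb E}_{h\sim\chi}[g(x,h(x))g(x',h(x'))]$), then bounds $F$ on and off the diagonal via the IPM exactly as you do, and finishes with the same $\sqrt{a+b}\le\sqrt a+\sqrt b$ step (phrased there as ``the triangle inequality'').
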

From Lemma~\ref{aes-rough}, after setting $g(x,y) = r_{\mathbf w}(x,y)$, $f_h(x) = g(x,h(x))$, $\phi_{x}(y)=r_{\mathbf w}(x,y)^2$ and $\phi_{x,x'}(y,y')=r_{\mathbf w}(x,y) r_{\mathbf w}(x',y')$, we obtain
\begin{equation*}
\begin{split}
&\big({\mathbb E}_{(h_1, h_2)\sim \chi^2}[\langle r_{\mathbf w}(x, h_1(x)), r_{\mathbf w}(x, h_2(x))\rangle^2_{\mu_{\mathcal{X}}}]\big)^{\frac{1}{2}}\leq \\
 &\sqrt{{\mathbb E}_{(X,Y)\sim \mu_{\mathcal{X}}^2}\big[ \varepsilon(X,Y)^2\|\phi_{X,Y}\|_{\mathcal{F}_2}^2\big]}+\\
&\sqrt{{\mathbb P}_{(X,Y)\sim \mu_{\mathcal{X}}^2}[X=Y]{\mathbb E}_{(X,Y)\sim \mu_{\mathcal{X}}^2}[(\|\phi_{X}\|_{\mathcal{F}_1}\varepsilon_\mathcal{F}(X,X)+D_X)^2\mid X=Y]}.
\end{split}
\end{equation*}
By substituting the latter inequality into the bound~\eqref{BB-bound}, we arrive at the following result:
\begin{equation*}
\begin{split}
&{\rm Var}_{h\sim \chi}\big[\partial_{w_i} {\mathbb E}_{X\sim \mu}L(p({\mathbf w},X), h(X))\big] \leq \\
&\|\frac{\partial p({\mathbf w},x)}{\partial w_i}\|^2_{\mu_{\mathcal{X}}} 
\big(
  \sqrt{{\mathbb E}_{(X,Y)\sim \mu_{\mathcal{X}}^2}\big[ \varepsilon(X,Y)^2\|\phi_{X,Y}\|_{\mathcal{F}}^2\big]}+\\
&\sqrt{{\mathbb P}_{(X,Y)\sim \mu_{\mathcal{X}}^2}[X=Y]{\mathbb E}_{(X,Y)\sim \mu_{\mathcal{X}}^2}[(\|\phi_{X}\|_{\mathcal{F}_1}\varepsilon_\mathcal{F}(X,X)+D_X)^2\mid X=Y]} \big).
\end{split}
\end{equation*}
This completes the proof.

\subsection{Proof of Lemma~\ref{aes-rough}}

To complete the proof of Theorem~\ref{aes-bound}, it is essential to establish Lemma~\ref{aes-rough}. The next lemma is central to achieving this. 
Let $g: \mathcal{X} \times \mathcal{Y} \to \mathbb{R}$ be the function satisfying the lemma's conditions, and define $f_h(x) = g(x, h(x))$.

\begin{lemma}\label{inversion} Let $F(x,y) =\int_{\mathcal{H}} f_{h}(x)f_{h}(y)d\chi(h)$.
Then,
\begin{equation*}
\begin{split}
{\mathbb E}_{(h_1, h_2)\sim \chi^2} [\langle f_{h_1}, f_{h_2}\rangle^2_{\mu_{\mathcal{X}}}] =
\int_{\mathcal{X}\times \mathcal{X}}F(x,y)^2 d\mu_{\mathcal{X}}^2(x,y).
\end{split}
\end{equation*}
\end{lemma}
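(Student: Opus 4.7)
The plan is a direct computation that unfolds the squared inner product, invokes Fubini--Tonelli to exchange integration over $\mathcal{X}\times\mathcal{X}$ with expectation over $\mathcal{H}\times\mathcal{H}$, and then exploits the product structure $\chi^2 = \chi\times\chi$ to factor the integrand.

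First I would rewrite the squared inner product as an iterated integral over $\mathcal{X}\times\mathcal{X}$:
\begin{equation*}
\langle f_{h_1}, f_{h_2}\rangle^2_{\mu_{\mathcal{X}}}
= \int_{\mathcal{X}\times\mathcal{X}} f_{h_1}(x)f_{h_2}(x)f_{h_1}(y)f_{h_2}(y)\, d\mu_{\mathcal{X}}^2(x,y).
\end{equation*}
Next, taking the expectation under $(h_1,h_2)\sim\chi^2$ and swapping it with the integral over $\mathcal{X}\times\mathcal{X}$ (which is legitimate because by hypothesis $f_h(x)$ is measurable and a.e.\ bounded on $\mathcal{X}\times\mathcal{H}$, so the four-fold product is integrable on $\mathcal{X}\times\mathcal{X}\times\mathcal{H}\times\mathcal{H}$ with respect to the product of finite measures), yields
\begin{equation*}
{\mathbb E}_{(h_1,h_2)\sim\chi^2}[\langle f_{h_1}, f_{h_2}\rangle^2_{\mu_{\mathcal{X}}}]
= \int_{\mathcal{X}\times\mathcal{X}} {\mathbb E}_{(h_1,h_2)\sim\chi^2}\bigl[f_{h_1}(x)f_{h_2}(x)f_{h_1}(y)f_{h_2}(y)\bigr] d\mu_{\mathcal{X}}^2(x,y).
\end{equation*}

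Then, since $\chi^2$ is the product measure, $h_1$ and $h_2$ are independent, and the integrand factors as $\bigl(f_{h_1}(x)f_{h_1}(y)\bigr)\cdot\bigl(f_{h_2}(x)f_{h_2}(y)\bigr)$, so the expectation splits into a product of two identical factors:
\begin{equation*}
{\mathbb E}_{(h_1,h_2)\sim\chi^2}\bigl[f_{h_1}(x)f_{h_2}(x)f_{h_1}(y)f_{h_2}(y)\bigr]
= \left(\int_{\mathcal{H}}f_{h}(x)f_{h}(y)\,d\chi(h)\right)^{2} = F(x,y)^2.
\end{equation*}
Substituting this back gives exactly the claimed identity. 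The only subtlety is justifying Fubini--Tonelli, which reduces to verifying integrability of the four-fold product of a.e.\ bounded functions on a product of probability spaces; this is immediate and should be the main thing to be careful about in the write-up.
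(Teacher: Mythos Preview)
Your proof is correct and matches the paper's own argument essentially step for step: both expand the squared inner product as a double integral over $\mathcal{X}\times\mathcal{X}$, apply Fubini to swap with the expectation over $\chi^2$, and then use the product structure of $\chi^2$ to factor the inner expectation into $F(x,y)^2$. The paper is slightly terser about the Fubini justification, so your explicit remark about a.e.\ boundedness on a product of probability spaces is a welcome clarification rather than a deviation.
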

\begin{proof}
A direct calculation using Fubini's theorem gives us
\begin{equation*}
\begin{split}
&{\mathbb E}_{(h_1, h_2)\sim \chi^2} [\langle f_{h_1}, f_{h_2}\rangle^2_{\mu_{\mathcal{X}}}] = {\mathbb E}_{(h_1, h_2)\sim \chi^2} \Bigg[\left(\int_{\mathcal{X}}f_{h_1}(x)f_{h_2}(x)d\mu_{\mathcal{X}}(x)\right)^2\Bigg] = \\
& {\mathbb E}_{(h_1, h_2)\sim \chi^2} \Bigg[\int_{\mathcal{X}\times \mathcal{X}}f_{h_1}(x)f_{h_2}(x)f_{h_1}(y)f_{h_2}(y)d\mu_{\mathcal{X}}^2(x,y)\Bigg] =\\
& \int_{\mathcal{X}\times \mathcal{X}}{\mathbb E}_{(h_1, h_2)\sim \chi^2}  [f_{h_1}(x)f_{h_1}(y) f_{h_2}(x)f_{h_2}(y)]d\mu_{\mathcal{X}}^2(x,y) = \\
&\int_{\mathcal{X}\times \mathcal{X}}F(x,y)^2 d\mu_{\mathcal{X}}^2(x,y).
\end{split}
\end{equation*}
\end{proof}

Further, we will adopt the notations $F$ and $f_{h}$ from the previous lemma.
Recall that for any $x\in \mathcal{X}$ we have ${\mathbb E}_{Y\sim \mu_\mathcal{Y}}[g(x, Y)]=0$.

\begin{proof} [Proof of Lemma~\ref{aes-rough}] By utilizing ${\mathbb E}_{Y\sim \mu_{\mathcal{Y}}}[g(x,Y)]=0$, the values of $F$ can be expressed as follows:
\begin{equation}\label{representation}
\begin{split}
&F(x,x') = {\mathbb E}_{h\sim \chi}[g(x, h(x))g(x', h(x'))] = \\
&{\mathbb E}_{h\sim \chi}[g(x, h(x))g(x', h(x'))]-{\mathbb E}_{(Y,Y')\sim \mu_{\mathcal{Y}}^2}[g(x, Y)g(x', Y)] \leq \\
&\|\psi_{x,x'}\|_{\mathcal{F}_2}\varepsilon_\mathcal{F}(x,x').
\end{split}
\end{equation}
if $x\ne x'$ and $\psi_{x,x'}(y,y')=g(x, y)g(x', y')$, and 
\begin{equation*}
\begin{split}
&F(x,x) = {\mathbb E}_{h\sim \chi}[g(x, h(x))^2] = {\mathbb E}_{h\sim \chi}[g(x, h(x))^2] - {\mathbb E}_{Y\sim \mu_{\mathcal{Y}}}[g(x, Y)^2]+D_x\leq \\
&\|\psi_{x}\|_{\mathcal{F}_1}\varepsilon_\mathcal{F}(x,x)+D_x,
\end{split}
\end{equation*}
where $D_x = {\mathbb E}_{y\sim \mu_{\mathcal{Y}}}[g(x,Y)^2]$ and $\psi_{x}(y)=g(x, y)^2$.

From the triangle inequality we conclude 
\begin{equation*}
\begin{split}
&\sqrt{ \int_{\mathcal{X}\times \mathcal{X}}F(x,x')^2d\mu_{\mathcal{X}}^2(x,x')}\leq 
\sqrt{\int_{\mathcal{X}\times \mathcal{X}} \varepsilon(x,x')^2\|\psi_{x,x'}\|_{\mathcal{F}_2}^2 d\mu_{\mathcal{X}}^2(x,x')}+\\
&\sqrt{\int_{{\rm diag}(\mathcal{X})}(\|\psi_{x}\|_{\mathcal{F}_1}\varepsilon_\mathcal{F}(x,x)+D_x)^2 d\mu_{\mathcal{X}}^2(x,x') },
\end{split}
\end{equation*}
where ${\rm diag}(\mathcal{X}) = \{(x,x)\mid x\in \mathcal{X}\}$.
Note that 
\begin{equation*}
\begin{split}
&\int_{{\rm diag}(\mathcal{X})}(\|\psi_{x}\|_{\mathcal{F}_1}\varepsilon_\mathcal{F}(x,x)+D_x)^2 d\mu_{\mathcal{X}}^2(x,x')  = \\ 
&{\mathbb P}_{(X,Y)\sim \mu_{\mathcal{X}}^2}[X=Y]{\mathbb E}_{(X,Y)\sim \mu_{\mathcal{X}}^2}[(\|\psi_{X}\|_{\mathcal{F}_1}\varepsilon_\mathcal{F}(X,X)+D_X)^2\mid X=Y].
\end{split}
\end{equation*}
From the latter the statement of Lemma~\ref{aes-rough} is straightforward.
This completes the proof.
\end{proof}

\section{Proofs for Section~\ref{LWE-case}}
\begin{proof}[Proof of Theorem~\ref{uncertainty-one}]  Let us set $\mathcal{F}_i = L_2(\mathcal{Y}^i, \mu_\mathcal{Y}^i )$, $i=1,2$ and use Theorem~\ref{aes-bound}. Recall that the almost pairwise independence for the given $\mathcal{F}$ is measured by the Pearson $\chi^2$ divergence (see example~\ref{Pearson}).
By construction, we have $$\frac{|\{ {\mathbf{k}} \in {\mathbb{Z}}_q^n \mid \langle {\mathbf{k}}, {\mathbf{x}} \rangle = y, \langle {\mathbf{k}}, {\mathbf{x}}' \rangle = y' \}|}{q^n} = q^{-2},$$ and therefore $ \varepsilon_\mathcal{F}({\mathbf{x}}, {\mathbf{x}}') = 0 $, for linearly independent $ {\mathbf{x}}, {\mathbf{x}}' $. 
Recall that we assume ${\mathbf x}\ne {\mathbf 0}$, due to ${\mathbf 0}\notin \mathcal{X}$. 
If $ {\mathbf{x}}' = \lambda {\mathbf{x}} $ and $ \lambda \ne 1$, then $ \frac{|\{ {\mathbf{k}} \in {\mathbb{Z}}_q^n \mid \langle {\mathbf{k}}, {\mathbf{x}} \rangle = y, \lambda \langle {\mathbf{k}}, {\mathbf{x}} \rangle = y' \}|}{q^n} $ equals $ q^{-1}[y' = \lambda y] $, which leads to 
\begin{equation*}
\begin{split}
\varepsilon_\mathcal{F}({\mathbf{x}}, {\mathbf{x}}') = \sqrt{(q^2-q)q^{-2}+q(q^{-1}-q^{-2})^2q^2}\leq \sqrt{q+1}.
\end{split}
\end{equation*}
In summary:
\begin{equation}\label{epsilon}
\varepsilon_\mathcal{F}({\mathbf{x}}, {\mathbf{x}}') = 
\begin{cases}
0, & \text{if rank}([{\mathbf{x}}, {\mathbf{x}}']) = 2; \\
\leq \sqrt{q+1}, & \text{if rank}([{\mathbf{x}}, {\mathbf{x}}']) = 1, {\mathbf{0}} \ne {\mathbf{x}} \ne {\mathbf{x}}'\ne {\mathbf{0}}; \\
0, & \text{if } {\mathbf{x}} = {\mathbf{x}}' \ne {\mathbf{0}}.
\end{cases}
\end{equation}
Let us plug in the expression for $\varepsilon_\mathcal{F}({\mathbf{x}}, {\mathbf{x}}')$ into the RHS of the bound of Theorem~\ref{aes-bound}.  After noting  $\|\phi_{{\mathbf x},{\mathbf x}'}\|_{\mathcal{F}_2} = \sqrt{D_{{\mathbf x}}D_{{\mathbf x}'}}$, we obtain
\begin{equation*}
\begin{split}
&{\mathbb E}_{X,Y\sim \mu_\mathcal{X}}[ \varepsilon_\mathcal{F}(X,Y)^2D_X D_{Y}] \leq 
(q+1)(a^{n}-1)^{-2}\hspace{-40pt}\sum_{{\mathbf x}\ne {\mathbf x}'\in ({\mathbb{Z}}\cap [0,a))^n\setminus \{{\mathbf 0}\}: {\rm rank}({\mathbf x},{\mathbf x}')=1 }D_{{\mathbf x}}D_{{\mathbf x}'}
\leq \\
&(q+1)(a^{n}-1)^{-2}\hspace{-30pt}\sum_{{\mathbf x}\in ({\mathbb{Z}}\cap [0,a))^n\setminus\{{\mathbf 0}\}, \lambda\in {\mathbb{Z}}_q\setminus\{0,1\}: \lambda {\mathbf x}\in ({\mathbb{Z}}\cap [0,a))^n\setminus\{{\mathbf 0}\}}D_{{\mathbf x}}D_{\lambda {\mathbf x}}\leq \\
&(q+1)(a^{n}-1)^{-2}(q-2)\sum_{{\mathbf x}\in ({\mathbb{Z}}\cap [0,a))^n\setminus\{{\mathbf 0}\}}D^2_{{\mathbf x}} = \\
& (a^{n}-1)^{-1}(q+1)(q-2){\mathbb{E}}_{X \sim \mu_\mathcal{X}} \left[ D_X^2 \right].
\end{split}
\end{equation*}
The parameter $\gamma$ equals
\begin{equation*}
\begin{split}
&\gamma = {\mathbb P}_{X,Y\sim \mu_\mathcal{X}}[X=Y]{\mathbb E}_{X,Y\sim \mu_\mathcal{X}}[(\|\phi_{X}\|_{\mathcal{F}_1}\varepsilon_\mathcal{F}(X,X)+D_X)^2\mid X=Y] = \\
& (a^{n}-1)^{-1}{\mathbb E}_{X\sim \mu_\mathcal{X}}[D_X^2],
\end{split}
\end{equation*}
which is smaller that the previous bound. 
After applying Theorem~\ref{aes-bound}, we obtain the needed bound on the variance:
\begin{equation*}
\begin{split}
&{\rm Var}_{h \sim \chi} \left[ \partial_{w_i} {\mathbb{E}}_{X \sim \mu_\mathcal{X}} L(p({\mathbf{w}}, X), h(X)) \right] \leq \\
&{\mathbb{E}}_{X \sim \mu_\mathcal{X}} \left[(\partial_{w_i} p({\mathbf{w}}, X))^2 \right] \times \left( \sqrt{(q+1)(q-2)}+1\right) {\mathbb{E}}_{X \sim \mu_\mathcal{X}} \left[ D_X^2 \right]^{\frac{1}{2}} (a^{n}-1)^{-\frac{1}{2}}.
\end{split}
\end{equation*}
\end{proof}

\section{Proofs for Section~\ref{waves}}
\subsection{Proof of Theorem~\ref{n-more-2}}
Let us define $\mathcal{F}_1 = L_\infty({\mathbb R})$ and $\mathcal{F}_2 = L_\infty({\mathbb R}^{2})$ and let the fractional part of $x\in {\mathbb R}$ be denoted by $\{x\}$. Let $U(\Omega)$ denote the uniform distribution over a set $\Omega$. Then, we have 
\begin{equation*}
\begin{split}
&\varepsilon_{\mathcal{F}}({\mathbf x}, {\mathbf y})  = \sup_{f\in L_\infty({\mathbb R}^{2}): \|f\|_{L_\infty}\leq 1}|{\mathbb E}_{{\mathbf A}\sim \mathcal{N}({\mathbf 0}, R^2 I_n)}[f(\psi({\mathbf A}^\top {\mathbf x}),\psi({\mathbf A}^\top {\mathbf y}))] -\\
&{\mathbb E}_{(X,Y)\sim U([0,1]^2)}[f(\psi(X),\psi(Y))]| = \\
&\sup_{f\in L_\infty({\mathbb R}^{2}): \|f\|_{L_\infty}\leq 1}\hspace{-12pt}|{\mathbb E}_{{\mathbf A}\sim \mathcal{N}({\mathbf 0}, R^2 I_n)}[f(\psi(\{{\mathbf A}^\top {\mathbf x}\}),\psi(\{{\mathbf A}^\top {\mathbf y}\}))] -\\
&{\mathbb E}_{(X,Y)\sim U([0,1]^2)}[f(\psi(X),\psi(Y))]| \leq \\
&\sup_{f\in L_\infty([0,1]^2): \|f\|_{L_\infty}\leq 1}|{\mathbb E}_{{\mathbf A}\sim \mathcal{N}({\mathbf 0}, R^2 I_n)}[f(\{{\mathbf A}^\top {\mathbf x}\},\{{\mathbf A}^\top {\mathbf y}\})] -\\
&{\mathbb E}_{(X,Y)\sim U([0,1]^2)}[f(X,Y)]|.
\end{split}
\end{equation*}
The latter is twice the total variation distance between $(\{X\},\{Y\})$, where $(X,Y)\sim \mathcal{N}({\mathbf 0}, R^2 \Lambda)$, $\Lambda = \begin{bmatrix}
{\mathbf x}^\top {\mathbf x} & {\mathbf x}^\top {\mathbf y} \\
{\mathbf x}^\top {\mathbf y} & {\mathbf y}^\top {\mathbf y}
\end{bmatrix}$, and $U([0,1]^2)$. Let us estimate this distance. 
 
\begin{lemma}\label{wirtinger} Let $(X,Y)\sim \mathcal{N}({\mathbf 0}, \Sigma)$.   Then, the total variation distance between $(\{X\}, \{Y\})$ and $U([0,1]^2)$ is bounded by 
\begin{equation*}
\begin{split}
\frac{C(\Sigma_{11}^{1/2}\Sigma_{22}^{1/2}+|\Sigma_{12}|)(\Sigma_{11}^{1/2}+\Sigma_{22}^{1/2})}{|\Sigma_{11}\Sigma_{22}-\Sigma_{12}^2|},
\end{split}
\end{equation*}
where $C$ is some universal constant.
\end{lemma}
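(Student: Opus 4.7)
I would reduce this two-dimensional total-variation estimate to a one-dimensional Fourier estimate by conditional decomposition. Writing $a=\Sigma_{11}$, $b=\Sigma_{22}$, $c=\Sigma_{12}$, and $D=\det\Sigma=ab-c^2$, the plan is to combine the standard chain-rule bound
\begin{equation*}
\mathrm{TV}\bigl((\{X\},\{Y\}),U([0,1])^2\bigr)\le \mathrm{TV}(\{X\},U([0,1]))+\sup_{\tilde x\in[0,1]}\mathrm{TV}\bigl(P_{\{Y\}\mid\{X\}=\tilde x},U([0,1])\bigr)
\end{equation*}
with a mean-independent Fourier estimate for the folded one-dimensional Gaussian. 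The chain-rule inequality is obtained by writing the joint density as $p_{\tilde X}\cdot p_{\tilde Y\mid\tilde X}$, adding and subtracting $p_{\tilde Y\mid\tilde X}$ inside $\tfrac12\int\!\!\int|p_{\tilde X}p_{\tilde Y\mid\tilde X}-1|\,d\tilde x\,d\tilde y$, and applying the triangle inequality.

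First I would establish the one-dimensional lemma: for any $\mu\in\mathbb R$ and $\sigma>0$, if $Z\sim\mathcal N(\mu,\sigma^2)$, then $\mathrm{TV}(\{Z\},U([0,1]))\lesssim 1/\sigma$ with a constant independent of $\mu$. The Fourier coefficients of the centered periodized density on $[0,1]$ are $e^{-2\pi^2\sigma^2 k^2-2\pi ik\mu}$, so
\begin{equation*}
2\,\mathrm{TV}(\{Z\},U([0,1]))\le\sum_{k\ne 0}e^{-2\pi^2\sigma^2 k^2}\le 2\int_{0}^{\infty}e^{-2\pi^2\sigma^2 x^2}\,dx=\frac{1}{\sqrt{2\pi}\,\sigma}.
\end{equation*}
Applied to $X\sim\mathcal N(0,a)$, this bounds the first term of the chain-rule inequality by $\lesssim 1/\sqrt{a}$.

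For the conditional term, Gaussian regression gives $Y\mid X=x\sim\mathcal N(cx/a,D/a)$, and therefore the law of $\{Y\}$ given $\{X\}=\tilde x$ is a convex combination indexed by $m\in\mathbb Z$ of the folded Gaussians $\{\mathcal N(c(\tilde x+m)/a,D/a)\}$, with weights proportional to the density of $\mathcal N(0,a)$ at $\tilde x+m$. Joint convexity of the total variation distance and the mean-independence of the 1D lemma imply that each mixture component contributes at most $\lesssim\sqrt{a/D}$, and thus so does the mixture, uniformly in $\tilde x$. Combining with the chain-rule bound yields
\begin{equation*}
\mathrm{TV}\bigl((\{X\},\{Y\}),U([0,1])^2\bigr)\lesssim \frac{1}{\sqrt{a}}+\sqrt{\frac{a}{D}}.
\end{equation*}

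Finally, I would verify that $1/\sqrt a+\sqrt{a/D}$ is dominated, up to a universal constant, by the quantity $(\sqrt{ab}+|c|)(\sqrt{a}+\sqrt{b})/D$ claimed in the lemma. Using the factorization $D=(\sqrt{ab}-|c|)(\sqrt{ab}+|c|)$, the claimed bound equals $(\sqrt a+\sqrt b)/(\sqrt{ab}-|c|)$. The inequality $1/\sqrt a\le (\sqrt a+\sqrt b)/(\sqrt{ab}-|c|)$ reduces to $\sqrt{ab}-|c|\le a+\sqrt{ab}$, and $\sqrt{a/D}\le (\sqrt a+\sqrt b)/(\sqrt{ab}-|c|)$ reduces to $\sqrt{a(\sqrt{ab}-|c|)/(\sqrt{ab}+|c|)}\le\sqrt a+\sqrt b$, which holds since the left-hand side is at most $\sqrt a$. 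The main obstacle I foresee is the bookkeeping in the conditional step---in particular, handling the mixture over the coset $\tilde x+\mathbb Z$ via joint convexity of TV while keeping the mean-independence of the 1D estimate explicit---whereas the final algebraic comparison is routine.
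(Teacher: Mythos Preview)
Your argument is correct but proceeds along a genuinely different path from the paper's. The paper works directly in two dimensions: it writes the density of $(\{X\},\{Y\})$ as the periodization $\sum_{i,j}f_{X,Y}(i+x,j+y)$, bounds the $L_1$ norm of its gradient by $\int_{\mathbb R^2}\|\nabla f_{X,Y}\|_1$, evaluates this Gaussian integral explicitly, and then applies the Poincar\'e--Wirtinger inequality on $[0,1]^2$ to pass from $\|\nabla f_{T(X,Y)}\|_{L_1}$ to $\|f_{T(X,Y)}-1\|_{L_1}$. You instead reduce to one dimension via the chain-rule bound for total variation, the Fourier/Poisson-summation estimate $\mathrm{TV}(\{\mathcal N(\mu,\sigma^2)\},U[0,1])\lesssim 1/\sigma$, and the mixture/convexity argument for the conditional law. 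Both routes are sound; yours avoids Poincar\'e--Wirtinger entirely at the cost of the conditioning bookkeeping, and your intermediate asymmetric bound $a^{-1/2}+\sqrt{a/D}$ is in fact sometimes sharper than the symmetric quantity in the statement (e.g.\ when $\Sigma_{11}=\Sigma_{22}$ and $|\Sigma_{12}|$ is close to $\Sigma_{11}$). The paper's approach, on the other hand, yields the symmetric expression in one stroke and extends more transparently to higher-dimensional periodizations.
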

\begin{proof} Let $T(x,y) = (\{x\}, \{y\})$.
Note that for $i,j\in {\mathbb Z}$ and $(x,y)\in [0,1)^2$ the image of $(i+x, j+y)$ under $T$ is $(x,y)$.  Therefore, the joint pdf of $T(X,Y)$ is
\begin{equation*}
\begin{split}
f_{T(X,Y)}(x,y)=\sum_{i,j\in {\mathbb Z}} f_{X,Y}(i+x,j+y),
\end{split}
\end{equation*}
and for any $z\in \{x,y\}$, we have
\begin{equation*}
\begin{split}
\frac{\partial f_{T(X,Y)}(x,y)}{\partial z}=\sum_{i,j\in {\mathbb Z}} \frac{\partial f_{X,Y}(i+x,j+y)}{\partial z}.
\end{split}
\end{equation*}
Let us denote $\|[x,y]^\top\|_1 = |x|+|y|$ and for a pair $f_1,f_2\in L_1([0,1]^2)$ denote $\|[f_1,f_2]^\top\|_{L_1([0,1]^2)} = \|f_1\|_{L_1([0,1]^2)}+\|f_2\|_{L_1([0,1]^2)}$. Then, we have
\begin{equation*}
\begin{split}
&\|\nabla f_{T(X,Y)}(x,y)\|_{L_1([0,1]^2)} \leq \sum_{z\in \{x,y\}}\sum_{i,j\in {\mathbb Z}} \|\frac{\partial f_{X,Y}(i+x,j+y)}{\partial z}\|_{L_1([0,1]^2)} = \\
&\sum_{z\in \{x,y\}}\int_{{\mathbb R}^2} |\frac{\partial f_{X,Y}(x,y)}{\partial z}|dxdy = \int_{{\mathbb R}^2} \|\nabla f_{X,Y}(x,y)\|_1 dxdy.
\end{split}
\end{equation*}
Since $\nabla f_{X,Y}(x,y) =- \Sigma^{-1}\begin{bmatrix}
x\\y
\end{bmatrix} f_{X,Y}(x,y)$, we conclude $\|\nabla f_{X,Y}(x,y)\|_1\leq (\|(\Sigma^{-1})_{\ast,1}\|_1|x|+\|(\Sigma^{-1})_{\ast,2}\|_1|y|)f_{X,Y}(x,y)$, where $(\Sigma^{-1})_{\ast,i}$ is the $i$th column of $\Sigma^{-1}$. Thus,
\begin{equation*}
\begin{split}
&\int_{{\mathbb R}^2} \|\nabla f_{X,Y}(x,y)\|_1 dxdy \leq \|(\Sigma^{-1})_{\ast,1}\|_1{\mathbb E}[|X|]+\|(\Sigma^{-1})_{\ast,2}\|_1{\mathbb E}[|Y|]\leq \\
&\|(\Sigma^{-1})_{\ast,1}\|_1\sqrt{{\mathbb E}[X^2]}+\|(\Sigma^{-1})_{\ast,2}\|_1\sqrt{{\mathbb E}[Y^2]} =\\ 
&\|(\Sigma^{-1})_{\ast,1}\|_1\Sigma_{11}^{1/2}+\|(\Sigma^{-1})_{\ast,2}\|_1\Sigma_{22}^{1/2}.
\end{split}
\end{equation*}
Finally, using Poincaré-Wirtinger inequality~\cite{Ziemer1989}, we conclude that there exists a universal constant $C>0$ such that
\begin{equation*}
\begin{split}
&\int_{[0,1]^2} |f_{T(X,Y)}(x,y)-1|dxdy \leq C\|\nabla f_{T(X,Y)}(x,y)\|_{L_1([0,1]^2)}\leq \\
&\frac{C}{|\Sigma_{11}\Sigma_{22}-\Sigma_{12}^2|}((\Sigma_{22}+|\Sigma_{12}|)\Sigma_{11}^{1/2}+(\Sigma_{11}+|\Sigma_{12}|)\Sigma_{22}^{1/2}) = \\
&\frac{C(\Sigma_{11}^{1/2}\Sigma_{22}^{1/2}+|\Sigma_{12}|)(\Sigma_{11}^{1/2}+\Sigma_{22}^{1/2})}{|\Sigma_{11}\Sigma_{22}-\Sigma_{12}^2|}.
\end{split}
\end{equation*}
The integral $\int_{[0,1]^2} |f_{T(X,Y)}(x,y)-1|dxdy$ is exactly the total variation distance between $(\{X\}, \{Y\})$ and $U([0,1]^2)$.
\end{proof}
From Lemma~\ref{wirtinger} we conclude 
\begin{equation*}
\begin{split}
\varepsilon_{\mathcal{F}}({\mathbf x}, {\mathbf y})  \leq \frac{2C(\|{\mathbf x}\|\|{\mathbf y}\|+|{\mathbf x}^\top {\mathbf y}|)(\|{\mathbf x}\|+\|{\mathbf y}\|)}{R^2(\|{\mathbf x}\|^2\|{\mathbf y}\|^2-({\mathbf x}^\top {\mathbf y})^2)}.
\end{split}
\end{equation*}
Since collision entropy of a continuous distribution is equal to 0, using Theorem~\ref{aes-bound}, we obtain our final bound,
\begin{equation*}
\begin{split}
&{\rm Var}_{h\sim\chi}\big[\partial_{w_i} {\mathbb E}_{X\sim \mu_{\mathcal{X}}} [L(p({\mathbf w},X), h(X))]\big]\leq \\  
&\|\partial_{w_i}p({\mathbf w},\cdot)\|^2_{\mu_{\mathcal{X}}} 
  \sqrt{{\mathbb E}_{(X,Y)\sim \mu_{\mathcal{X}}^2}\big[ \varepsilon_\mathcal{F}(X,Y)^2\|\phi_{X,Y}\|_{\mathcal{F}_2}^2\big]}  \\
&\lesssim \frac{1}{R^2}\|\partial_{w_i}p({\mathbf w},\cdot)\|^2_{\mu_{\mathcal{X}}} \sqrt{{\mathbb E}_{({\mathbf x},{\mathbf y})\sim \mu_{\mathcal{X}}^2}\bigg[\frac{(\|{\mathbf x}\|\|{\mathbf y}\|+|{\mathbf x}^\top {\mathbf y}|)^2(\|{\mathbf x}\|+\|{\mathbf y}\|)^2}{(\|{\mathbf x}\|^2\|{\mathbf y}\|^2-({\mathbf x}^\top {\mathbf y})^2)^2}M_{\mathbf x}^2 M_{\mathbf y}^2\bigg]}.
\end{split}
\end{equation*}

\begin{proof}[Proof of Remark~\ref{singular-kernel}]
Note that $$(\|{\mathbf x}\|\|{\mathbf y}\|+|{\mathbf x}^\top {\mathbf y}|)^2(\|{\mathbf x}\|+\|{\mathbf y}\|)^2\leq 8\|{\mathbf x}\|^2\|{\mathbf y}\|^2(\|{\mathbf x}\|^2+\|{\mathbf y}\|^2).$$ From symmetry between ${\mathbf x}$ and ${\mathbf y}$ we conclude that it is enough to have 
$$
{\mathbb E}_{({\mathbf x},{\mathbf y})\sim \mu_{\mathcal{X}}^2}\bigg[\frac{\|{\mathbf x}\|^4\|{\mathbf y}\|^2}{(\|{\mathbf x}\|^2\|{\mathbf y}\|^2-({\mathbf x}^\top {\mathbf y})^2)^2}\bigg]
$$
bounded. Let $\sigma$ denote the surface area measure on ${\mathbb S}^{n-1} = \{{\mathbf x}\in {\mathbb R}^n \mid \|{\mathbf x}\|=1\}$. Then 
\begin{equation*}
\begin{split}
&\int_{{\mathbb R}^{2n}} \frac{\|{\mathbf x}\|^4\|{\mathbf y}\|^2}{(\|{\mathbf x}\|^2\|{\mathbf y}\|^2-({\mathbf x}^\top {\mathbf y})^2)^2}f({\mathbf x})f({\mathbf y})d{\mathbf x}d{\mathbf y} = \\
&\int\limits_{[0,+\infty)^2}\int\limits_{({\mathbb S}^{n-1})^2} \frac{r_1^4r_2^2}{r_1^4r_2^4(1-({\mathbf x}^\top {\mathbf y})^2)^2}f(r_1 {\mathbf x})f(r_2 {\mathbf y}) d\sigma_1({\mathbf x})d\sigma_2 ({\mathbf y})  r^{n-1}_1dr_1 r^{n-1}_2 dr_2\leq \\
&\int_0^\infty r^{n-1}g(r)dr \int_0^\infty r^{n-3}g(r)dr\int_{{\mathbb S}^{n-1}\times {\mathbb S}^{n-1}} \frac{1}{(1-({\mathbf x}^\top {\mathbf y})^2)^2} d\sigma_1({\mathbf x})d\sigma_2({\mathbf y}).
\end{split}
\end{equation*}
Let $\omega_{n-1}$ denote the surface area of ${\mathbb S}^{n-1}$. The last factor equals $$\omega_{n-1}\int_{{\mathbb S}^{n-1}} \frac{1}{(1-x_1^2)^2} d\sigma_1({\mathbf x}) =
\omega_{n-1}\omega_{n-2}\int_{0}^1 \frac{1}{(1-x_1^2)^2} (1-x_1^2)^{\frac{n-2}{2}}dx_1$$ and is bounded if $n\geq 5$. 

If the pdf is a radial function, the conditions $\int_0^\infty r^{n-1}g(r)dr<\infty$, \\$\int_0^\infty r^{n-3}g(r)dr<\infty$ are equivalent to ${\mathbb E}_{{\mathbf x}\sim \mu_\mathcal{X}}[\|{\mathbf x}\|^{-2}]<\infty$. 
\end{proof}

\subsection{Proof of Theorem~\ref{case-n-1}}
For any function $f: {\mathbb R}^s\to {\mathbb R}$ its Lipschitz coefficient is defined by
\begin{equation*}
\begin{split}
&\|f\|_{C^{0,1}({\mathbb R}^s)}\triangleq\sup_{{\mathbf x}\ne {\mathbf y}\in {\mathbb R}^s}\frac{|f({\mathbf x})-f({\mathbf y})|}{\|{\mathbf x}-{\mathbf y}\|}.
\end{split}
\end{equation*} 
A set of differentiable functions $f: {\mathbb R}^s\to {\mathbb R}$ for which $\|D^\beta f\|_{C^{0,1}({\mathbb R}^s)}<\infty, |\beta|\leq 1$ is denoted by $C^{1,1}({\mathbb R}^s)$. The semi-norm in that space is $\|f\|_{C^{1,1}({\mathbb R}^s)} = \max_{\beta: |\beta| \leq 1}\|D^\beta f\|_{C^{0,1}({\mathbb R}^s)}$. Let us now define $$\mathcal{F} = (C^{1,1}({\mathbb R}), C^{1,1}({\mathbb R}^2)).$$

With an abuse of standard terminology, we define a space of functions of bounded variation on $[0,1]^2$, denoted by ${\rm BV}([0,1]^2)$, as a set of functions $f: [0,1]^2\to {\mathbb R}$ such that $\|f(0,x)\|_{{\rm BV}([0,1])}<\infty$, $\|f(1,x)\|_{{\rm BV}([0,1])}<\infty$, $\|f(x,0)\|_{{\rm BV}([0,1])}<\infty$, $\|f(x,1)\|_{{\rm BV}([0,1])}<\infty$ and $V^2(f)< \infty$ where
\begin{equation*}
\begin{split}
&V^2(f)\triangleq \\
&\sup_{\substack{N,M\in {\mathbb N} \\ 0=x_0\leq \cdots \leq x_N=1 \\0=y_0\leq \cdots \leq y_M=1}} \sum_{i=1}^{N}\sum_{j=1}^{M}|f(x_{i},y_{j})-f(x_{i-1},y_{j})-f(x_{i},y_{j-1})+f(x_{i-1},y_{j-1})|.
\end{split}
\end{equation*}
A natural semi-norm in that space is defined by 
\begin{equation*}
\begin{split}
& \|f\|_{{\rm BV}([0,1]^2)} \triangleq \max\big(\|f(0,x)\|_{{\rm BV}([0,1])}, \|f(1,x)\|_{{\rm BV}([0,1])}, \\
&\|f(x,0)\|_{{\rm BV}([0,1])},\|f(x,1)\|_{{\rm BV}([0,1])}, V^2(f)\big).
\end{split}
\end{equation*}
The following lemma is straightforward.
\begin{lemma}
If $f\in C^{1,1}({\mathbb R}^2)$ and $\psi\in {\rm BV}([0,1])$, then $f(\psi(x), \psi(y))\in {\rm BV}([0,1]^2)$ and, for any $c\in {\mathbb R}$, we have
\begin{equation*}
\begin{split}
&\|f(\psi(x), \psi(c))\|_{{\rm BV}([0,1])}\leq \|f\|_{C^{1,1}} \|\psi\|_{{\rm BV}([0,1])},\\
&\|f(\psi(c), \psi(x))\|_{{\rm BV}([0,1])}\leq \|f\|_{C^{1,1}} \|\psi\|_{{\rm BV}([0,1])},\\
& V^2(f(\psi(x), \psi(y))) \leq \|f\|_{C^{1,1}} \|\psi\|^2_{{\rm BV}([0,1])}.
\end{split}
\end{equation*}
\end{lemma}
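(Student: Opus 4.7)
The plan is to handle the three bounds in the order listed, dispatching the two one-dimensional slice bounds by a direct Lipschitz argument and reserving the real work for the mixed-difference bound on $V^2$.

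For the first two inequalities, I would fix an arbitrary partition $0 = x_0 \leq \cdots \leq x_N = 1$ and estimate each increment via
\[
|f(\psi(x_i),\psi(c)) - f(\psi(x_{i-1}),\psi(c))| \leq \|f\|_{C^{0,1}({\mathbb R}^2)}\, |\psi(x_i)-\psi(x_{i-1})|,
\]
using the Lipschitz continuity of $f$ restricted to the horizontal slice through $\psi(c)$ (the norm on ${\mathbb R}^2$ gives $\|(u_i,c)-(u_{i-1},c)\| = |u_i - u_{i-1}|$). Summing over $i$ and taking the supremum over partitions yields the stated bound, since $\|f\|_{C^{0,1}} \leq \|f\|_{C^{1,1}}$ by the paper's definition. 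The second inequality is symmetric.

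The substantive part is the $V^2$ bound, where the key identity is that for any four points $u_1,u_2,v_1,v_2 \in {\mathbb R}$ the mixed second difference admits the representation
\[
\Delta := f(u_2,v_2) - f(u_1,v_2) - f(u_2,v_1) + f(u_1,v_1) = \int_{v_1}^{v_2}\!\bigl[\partial_2 f(u_2,v) - \partial_2 f(u_1,v)\bigr]\,dv,
\]
which follows from the fundamental theorem of calculus applied to the continuous function $v \mapsto f(u_k,v) - f(u_k,v_1)$. Using that $\partial_2 f$ is Lipschitz with constant at most $\|f\|_{C^{1,1}}$, we obtain
\[
|\Delta| \leq \|f\|_{C^{1,1}}\, |u_2 - u_1|\, |v_2 - v_1|.
\]
Setting $u_i = \psi(x_i)$ and $v_j = \psi(y_j)$ for partitions of $[0,1]$, the double sum in the definition of $V^2$ factorizes:
\[
\sum_{i,j} |\Delta_{ij}| \leq \|f\|_{C^{1,1}} \Bigl(\sum_i |\psi(x_i) - \psi(x_{i-1})|\Bigr)\Bigl(\sum_j |\psi(y_j) - \psi(y_{j-1})|\Bigr) \leq \|f\|_{C^{1,1}}\,\|\psi\|_{{\rm BV}([0,1])}^2,
\]
and taking the supremum over both partitions completes the bound.

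The main obstacle is a mild regularity issue underlying the integral identity: one needs $\partial_2 f$ to be continuous in $v$ (so that the fundamental theorem of calculus applies) and Lipschitz in $u$ (so that the increment under the integral is controlled by $|u_2-u_1|$). Both facts are immediate from $f \in C^{1,1}({\mathbb R}^2)$ as defined in the paper, since by construction $\|\partial_2 f\|_{C^{0,1}} \leq \|f\|_{C^{1,1}}$, and Lipschitz functions are continuous. An equivalent route is to write $\Delta = \int_{u_1}^{u_2}\!\!\int_{v_1}^{v_2}\partial_1\partial_2 f\,du\,dv$, invoking Rademacher's theorem to assert $\partial_1\partial_2 f$ exists a.e.\ and is bounded by $\|f\|_{C^{1,1}}$; either route reduces the problem to a product of two one-dimensional variations, which matches precisely the quadratic dependence on $\|\psi\|_{{\rm BV}([0,1])}$ on the right-hand side.
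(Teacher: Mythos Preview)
Your proof is correct and follows essentially the same approach as the paper: both arguments express the mixed second difference as an integral of an increment of a first partial derivative and then invoke the Lipschitz bound on that partial derivative to obtain the product estimate $\|f\|_{C^{1,1}}|u_2-u_1||v_2-v_1|$, after which the double sum factorizes. The only cosmetic difference is that the paper parameterizes the integral by $t\in[0,1]$ along the segment in the first variable and then uses $\|f_x\|_{C^{0,1}}$ in the second variable, whereas you integrate directly in $v$ and use $\|\partial_2 f\|_{C^{0,1}}$ in the first variable.
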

\begin{proof}
The first two inequalities follow from $|f(\psi(x), \psi(c))-f(\psi(y), \psi(c))|\leq \|f\|_{C^{1,1}} |\psi(x)-\psi(y)|$. Let us now prove the last one.

Since 
\begin{equation*}
\begin{split}
&f(x',y')-f(x,y) = \int_0^1 (f_{x}(x+t(x'-x),y+t(y'-y))(x'-x)+\\
&f_{y}(x+t(x'-x),y+t(y'-y))(y'-y))dt,
\end{split}
\end{equation*}
we obtain
\begin{equation*}
\begin{split}
&f(\psi(x_{i}),\psi(y_{j}))-f(\psi(x_{i-1}),\psi(y_{j})) = \\& (\psi(x_{i})-\psi(x_{i-1})) \int_0^1 f_{x}(\psi(x_{i-1})+t(\psi(x_{i})-\psi(x_{i-1})),\psi(y_{j}))dt,\\
&f(\psi(x_{i}),\psi(y_{j-1}))-f(\psi(x_{i-1}),\psi(y_{j-1})) =\\ &(\psi(x_{i})-\psi(x_{i-1})) \int_0^1 f_{x}(\psi(x_{i-1})+t(\psi(x_{i})-\psi(x_{i-1})),\psi(y_{j-1}))dt.
\end{split}
\end{equation*}
Therefore,
\begin{equation*}
\begin{split}
&|f(\psi(x_{i}),\psi(y_{j}))-f(\psi(x_{i-1}),\psi(y_{j}))-\\
&f(\psi(x_{i}),\psi(y_{j-1}))+f(\psi(x_{i-1}),\psi(y_{j-1}))|\leq \\
& |\psi(x_{i})-\psi(x_{i-1})| \int_0^1 |f_{x}(\psi(x_{i-1})+t(\psi(x_{i})-\psi(x_{i-1})),\psi(y_{j}))-\\
& f_{x}(\psi(x_{i-1})+t(\psi(x_{i})-\psi(x_{i-1})),\psi(y_{j-1}))|dt \leq \\
&|(\psi(x_{i})-\psi(x_{i-1}))(\psi(y_{j})-\psi(y_{j-1}))| \cdot \|f_x\|_{C^{0,1}}.
\end{split}
\end{equation*}
Therefore,
\begin{equation*}
\begin{split}
&\sup_{\substack{N,M\in {\mathbb N} \\ 0=x_0\leq \cdots \leq x_N=1 \\0=y_0\leq \cdots \leq y_M=1}} \sum_{i=1}^{N}\sum_{j=1}^{M}|f(\psi(x_{i}),\psi(y_{j}))-\\
&f(\psi(x_{i-1}),\psi(y_{j}))-f(\psi(x_{i}),\psi(y_{j-1}))+f(\psi(x_{i-1}),\psi(y_{j-1}))|\leq \\
&\|f_x\|_{C^{0,1}}\sup_{\substack{N,M\in {\mathbb N} \\ 0=x_0\leq \cdots \leq x_N=1 \\0=y_0\leq \cdots \leq y_M=1}} \sum_{i=1}^{N}|\psi(x_{i})-\psi(x_{i-1})|\sum_{j=1}^{M}|\psi(y_{j})-\psi(y_{j-1})| =\\ 
&\|f\|_{C^{1,1}} \|\psi\|^2_{{\rm BV}([0,1])}.
\end{split}
\end{equation*}
Lemma proved. Note that the consequence of the lemma is the following inequality:
\begin{equation*}
\begin{split}
\|f(\psi(x), \psi(y))\|_{{\rm BV}([0,1]^2)} \leq \max(\|\psi\|_{{\rm BV}([0,1])}, \|\psi\|^2_{{\rm BV}([0,1])}) \|f\|_{C^{1,1}} .
\end{split}
\end{equation*}
\end{proof}
From the definition of $\mathcal{F}$ and the previous lemma, we directly obtain
\begin{equation}\label{epsilon-inter}
\begin{split}
&\varepsilon_{\mathcal{F}}(x, y)  =  \sup_{f\in C^{1,1}({\mathbb R}^{2}): \|f\|_{C^{1,1}}\leq 1}|{\mathbb E}_{A\sim \mathcal{N}(0, R^2)}[f(\psi(A x),\psi(A y))] -\\
&{\mathbb E}_{(X,Y)\sim U([0,1]^2)}[f(\psi(X),\psi(Y))]| = \\
&\sup_{f\in C^{1,1}({\mathbb R}^{2}): \|f\|_{C^{1,1}}\leq 1}\hspace{-10pt}|{\mathbb E}_{A\sim \mathcal{N}(0, R^2)}[f(\psi(\{A x\}),\psi(\{A y\}))] -\\
&{\mathbb E}_{(X,Y)\sim U([0,1]^2)}[f(\psi(X),\psi(Y))]| \leq \max(\|\psi\|_{{\rm BV}([0,1])}, \|\psi\|^2_{{\rm BV}([0,1])})\times \\
&\sup_{f\in {\rm BV}([0,1]^2): \|f\|_{{\rm BV}}\leq 1}|{\mathbb E}_{A\sim \mathcal{N}(0, R^2)}[f(\{A x\},\{A y\})] -
{\mathbb E}_{(X,Y)\sim U([0,1]^2)}[f(X,Y)]|.
\end{split}
\end{equation}

Thus, to bound $\varepsilon_{\mathcal{F}}(x, y)$ we need to bound the Integral Probability Metric for the space ${\rm BV}([0,1]^2)$ between two random variables: $(\{A x\},\{A y\}), A\sim \mathcal{N}(0, R^2)$ and $(X,Y)\sim U([0,1]^2)$. To bound this ILP, we need the classical notion of the {\em 
 discrepancy}.

Let ${\mathbf x}_1, \cdots, {\mathbf x}_N\in [0,1]^s$, then
\begin{equation*}
\begin{split}
&D^\ast_N({\mathbf x}_1, \cdots, {\mathbf x}_N) \triangleq \\
&\sup_{0\leq u_1,\cdots, u_s\leq 1}|\frac{|\{i\in [N]\mid {\mathbf x}_i\in [0,u_1)\times \cdots \times [0,u_s)\}|}{N}-\prod_{j=1}^s u_j| .
\end{split}
\end{equation*}
is called the discrepancy of a set $\{{\mathbf x}_1, \cdots, {\mathbf x}_N\}$. This notion can be naturally generalized to probabilistic Borel measures $\nu$ on $[0,1]^s$, by
\begin{equation*}
\begin{split}
D^\ast(\nu) \triangleq \sup_{0\leq u_1,\cdots, u_s\leq 1}| {\mathbb P}_{X\sim \nu}\big[X\in [0,u_1)\times \cdots \times [0,u_s)\big]-\prod_{j=1}^s u_j| .
\end{split}
\end{equation*}
Note that the latter definition is simply the supremum distance between cdfs of $\nu$ and the uniform distribution on $[0,1]^s$.

We will use the following classical result whose proof can be found in~\cite{kuipers2012uniform}.
\begin{lemma}[Koksma-Hlawka inequality] 
Let $f\in {\rm BV}([0,1]^2)$.
Then, for any ${\mathbf x}_1, \cdots, {\mathbf x}_N\in [0,1]^2$, ${\mathbf x}_i = [x_{i1}, x_{i2}]$, we have
\begin{equation*}
\begin{split}
&|\frac{1}{N}\sum_{i=1}^N f({\mathbf x}_i) - \int_{[0,1]^2}f({\mathbf x})d{\mathbf x}|\leq \|f(x,1)\|_{{\rm BV}([0,1])}D^\ast_N(x_{11}, \cdots, x_{N1})+\\
&\|f(1,x)\|_{{\rm BV}([0,1])}D^\ast_N(x_{12}, \cdots, x_{N2})+
V^2(f)D^\ast_N({\mathbf x}_1, \cdots, {\mathbf x}_N).
\end{split}
\end{equation*}
\end{lemma}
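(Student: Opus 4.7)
The Koksma--Hlawka inequality in two dimensions is a classical result, and the paper itself only cites~\cite{kuipers2012uniform}. A self-contained proof would combine a two-dimensional integration-by-parts identity with the one-dimensional Koksma--Hlawka inequality used as a black box. The starting point is the decomposition, valid for any $f\in{\rm BV}([0,1]^2)$,
\begin{equation*}
f(x,y)=f(1,1)+[f(x,1)-f(1,1)]+[f(1,y)-f(1,1)]+\int_{[x,1]\times[y,1]}df(s,t),
\end{equation*}
where $df$ denotes the signed Lebesgue--Stieltjes measure on $[0,1]^2$ generated by $f$; its total variation equals $V^2(f)$. For smooth $f$ this is just the fundamental theorem of calculus applied twice, and for general ${\rm BV}$ it is the Hardy--Krause decomposition.

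\textbf{Assembling the three contributions.} Substituting the decomposition into $\frac{1}{N}\sum_i f({\mathbf x}_i)-\int_{[0,1]^2} f\,d{\mathbf x}$, the constant $f(1,1)$ cancels and the error splits into three pieces. The two boundary pieces depend only on a single coordinate and belong to ${\rm BV}([0,1])$, so applying the one-dimensional Koksma--Hlawka inequality to each yields the first two terms on the right-hand side, namely $\|f(\cdot,1)\|_{{\rm BV}([0,1])}D^{\ast}_N(x_{11},\dots,x_{N1})$ and $\|f(1,\cdot)\|_{{\rm BV}([0,1])}D^{\ast}_N(x_{12},\dots,x_{N2})$. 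For the genuinely two-dimensional piece I would use Fubini to swap the empirical average and the Lebesgue integration with the $df$-integration, obtaining
\begin{equation*}
\int_{[0,1]^2} R_N(s,t)\,df(s,t), \qquad R_N(s,t)=\tfrac{1}{N}|\{i:{\mathbf x}_i\in[0,s)\times[0,t)\}|-st.
\end{equation*}
Since $|R_N(s,t)|\le D^{\ast}_N({\mathbf x}_1,\dots,{\mathbf x}_N)$ pointwise and $|df|([0,1]^2)=V^2(f)$, this integral is bounded in absolute value by $V^2(f)\,D^{\ast}_N$, which is the third term.

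\textbf{Main obstacle.} The delicate point is making the decomposition and the Fubini exchange rigorous for ${\rm BV}$ functions that may be discontinuous: one either works with the Lebesgue--Stieltjes measure $df$ directly, exploiting $|df|([0,1]^2)=V^2(f)$, or approximates $f$ by smooth mollifications and passes to a limit using the lower semicontinuity of $V^2$ together with the fact that the right-hand side of the target inequality is continuous under such approximations. Once this technicality is handled, the remainder of the proof is essentially bookkeeping separating one-dimensional boundary contributions from the two-dimensional interior contribution.
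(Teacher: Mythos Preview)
Your sketch is correct and is precisely the standard Hardy--Krause integration-by-parts argument found in Kuipers--Niederreiter~\cite{kuipers2012uniform}, which is exactly what the paper cites; the paper does not give its own proof and treats the lemma as a classical black box. The only minor slip is that after Fubini the local discrepancy $R_N$ naturally comes out with closed boxes $[0,s]\times[0,t]$ rather than the half-open boxes in the definition of $D^\ast_N$, but since $|\{i:{\mathbf x}_i\in[0,s]\times[0,t]\}|=\lim_{u\to s^+,\,v\to t^+}|\{i:{\mathbf x}_i\in[0,u)\times[0,v)\}|$ this is still bounded by $D^\ast_N$, so no harm done.
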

It can be generalized directly to the case of measures. 
\begin{lemma}[Koksma-Hlawka inequality for measures] 
Let $f\in {\rm BV}([0,1]^2)$.
Then, for any probabilistic Borel measure $\nu$ on $[0,1]^2$, we have
\begin{equation*}
\begin{split}
&|{\mathbb E}_{X\sim \nu} [f(X)] - \int_{[0,1]^2}f({\mathbf x})d{\mathbf x}|\leq \|f(x,1)\|_{{\rm BV}([0,1])}D^\ast_N(\nu_1)+\\
&\|f(1,x)\|_{{\rm BV}([0,1])}D^\ast_N(\nu_2)+
V^2(f) D^\ast_N(\nu),
\end{split}
\end{equation*}
where $\nu_i$ is a projection of
$\nu$ to the $i$th coordinate.
\end{lemma}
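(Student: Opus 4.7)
The claim is the natural continuous-measure generalization of the preceding Koksma--Hlawka inequality for point sets. The crucial observation is that the classical (point-set) version is a special case, obtained by taking $\nu = \frac{1}{N}\sum_{i=1}^N \delta_{{\mathbf x}_i}$ (so that $D^\ast_N({\mathbf x}_1,\ldots,{\mathbf x}_N)=D^\ast(\nu)$ and $\frac{1}{N}\sum_i f({\mathbf x}_i)=\int f\,d\nu$), and the proof of the point-set version is already purely measure-theoretic and never uses finiteness of the support. So the plan is to repeat that proof verbatim with a general probability measure.

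Concretely, I would proceed by Hardy--Krause Stieltjes integration by parts. Step~1: introduce $F_\mu(u_1,u_2)=\mu([0,u_1)\times[0,u_2))$ and $G(u_1,u_2)=u_1u_2$, so that by definition $\|F_\nu-G\|_\infty=D^\ast(\nu)$ and $\|F_{\nu_i}-\mathrm{id}\|_\infty=D^\ast(\nu_i)$. Step~2: apply the 2D integration-by-parts identity, valid for any probability measure $\mu$ on $[0,1]^2$ and any $f\in\mathrm{BV}([0,1]^2)$,
\[
\int_{[0,1]^2} f\,d\mu \;=\; f(1,1) \;-\;\int_0^1 F_{\mu_1}(u)\,d_u f(u,1) \;-\;\int_0^1 F_{\mu_2}(u)\,d_u f(1,u) \;+\;\int_{[0,1]^2} F_\mu(u_1,u_2)\,d^2 f(u_1,u_2),
\]
where $d_u f(u,1)$ and $d_u f(1,u)$ are the signed Stieltjes measures associated with the BV edge-slices and $d^2 f$ is the mixed Stieltjes measure whose total variation equals $V^2(f)$. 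Step~3: write the same identity for $\nu$ and for the uniform measure $\lambda$ on $[0,1]^2$ (for which $F_\lambda=G$ and $F_{\lambda_i}=\mathrm{id}$), subtract; the $f(1,1)$ terms cancel and we get
\[
\int f\,d\nu-\int f\,d\lambda \;=\; -\!\int_0^1\!(F_{\nu_1}(u)-u)\,d_u f(u,1) \;-\!\int_0^1\!(F_{\nu_2}(u)-u)\,d_u f(1,u) \;+\!\int (F_\nu-G)\,d^2 f.
\]
Step~4: estimate each piece by pulling out the uniform bound on $F_\nu-G$ (respectively $F_{\nu_i}-\mathrm{id}$) and using $|d_u f(u,1)|([0,1])=\|f(x,1)\|_{\mathrm{BV}([0,1])}$, $|d_u f(1,u)|([0,1])=\|f(1,x)\|_{\mathrm{BV}([0,1])}$, and $|d^2 f|([0,1]^2)=V^2(f)$. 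Adding up yields exactly the stated three-term bound.

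An alternative, softer route is approximation: take $X_1,\ldots,X_N\stackrel{\mathrm{iid}}{\sim}\nu$ and apply the preceding (already-proved) point-set Koksma--Hlawka to the empirical measure $\nu_N=\frac{1}{N}\sum_i\delta_{X_i}$. The SLLN gives $\int f\,d\nu_N\to\int f\,d\nu$ a.s., and the multivariate Glivenko--Cantelli theorem for half-open boxes gives $D^\ast(\nu_N)\to D^\ast(\nu)$ and similarly for both marginals; passing to the limit along a realization where all three convergences hold recovers the claim with essentially no new work.

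The main technical obstacle is the rigorous justification of the 2D Stieltjes integration by parts for a general $f\in\mathrm{BV}([0,1]^2)$ that may have jump discontinuities, including on the boundary; the standard remedy is to mollify $f$ into $C^\infty$ approximants $f_\varepsilon$ for which the identity is elementary, control the boundary contributions uniformly in $\varepsilon$ (this is why the two edge slices appear separately in the Hardy--Krause variation), and pass to the limit using the weak$^\ast$ convergence of the associated Stieltjes measures. The approximation route sidesteps this at the cost of invoking Glivenko--Cantelli in $\mathbb{R}^2$, which is itself a nontrivial but completely standard result.
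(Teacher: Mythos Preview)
Your proposal is correct. In fact, the paper takes precisely your ``alternative, softer route'': it samples $X_1,\ldots,X_N\stackrel{\mathrm{iid}}{\sim}\nu$, applies the already-established point-set Koksma--Hlawka inequality to the empirical measure, and passes to the limit. For the left-hand side the paper uses the law of large numbers (in probability, not a.s.), and for the discrepancies it establishes $D^\ast_N({\mathbf x}_1,\ldots,{\mathbf x}_N)\to D^\ast(\nu)$ in probability via a Rademacher complexity bound together with the finite VC dimension of the class of anchored boxes $\{[0,u_1)\times[0,u_2)\}$---which is, of course, exactly a proof of the multivariate Glivenko--Cantelli theorem you invoke.

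Your primary route (direct Hardy--Krause integration by parts, subtracting the identity for $\nu$ and for Lebesgue measure, then bounding each of the three Stieltjes integrals by a sup-norm times a total variation) is the classical textbook proof of Koksma--Hlawka and is arguably cleaner conceptually, since it yields the inequality in one shot without any limiting argument. The paper explicitly chose the reduction-to-discrete-measures route instead, presumably to avoid the technicalities you flag: justifying the 2D Stieltjes integration-by-parts identity for general $f\in\mathrm{BV}([0,1]^2)$ (in the Hardy--Krause sense used here) with possible jump discontinuities. Both approaches are standard; the paper's trades those BV subtleties for a quantitative Glivenko--Cantelli argument it can write down explicitly.
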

For completeness, below  we give a reduction to the case of discrete measures.
\begin{proof}
Let ${\mathbf x}_1, \cdots, {\mathbf x}_N\sim^{\rm iid} \nu$. Since $f$ has a bounded variation, then it is bounded. Therefore, $f({\mathbf x}_1), \cdots, f({\mathbf x}_N)$ is a random sample with ${\mathbb E}_{X\sim \nu}[f(X)^2]\\< \infty$, and by law of large numbers we have $\frac{1}{N}\sum_{i=1}^N f({\mathbf x}_i)\mathop\to\limits^{N\to +\infty} {\mathbb E}_{X\sim \nu} [f(X)]$ in probability. 

Let us now introduce independent random variables $\sigma_i, 1\leq i\leq N$ drawn from the Rademacher distribution, i.e. ${\mathbb P}[\sigma_i = \pm 1]=\frac{1}{2}$. Using the notion of the Rademacher complexity and Massart lemma, we obtain
\begin{equation*}
\begin{split}
&{\mathbb E}_{{\mathbf x}_i}\sup_{0\leq u_1,u_2\leq 1}|\frac{1}{N} \sum_{i=1}^N \mathds{1}_{{\mathbf x}_i\in [0,u_1)\times  [0,u_2)} - {\mathbb P}_{X\sim \nu}\big[X\in [0,u_1)\times [0,u_2)\big]|\leq \\
&\frac{4}{N}{\mathbb E}_{{\mathbf x}_i} {\mathbb E}_{\sigma_i}\big[\sup_{0\leq u_1, u_2\leq 1} \sum_{i=1}^N \sigma_i \mathds{1}_{{\mathbf x}_i\in [0,u_1)\times [0,u_2)}\big]\leq 4{\mathbb E}_{{\mathbf x}_i}\big[\sqrt{\frac{2\log G}{N}}\big],
\end{split}
\end{equation*}
where $G$ is the number of distinct vectors in $\{{\mathbf z}\in \{0,1\}^N \mid z_i = \mathds{1}_{{\mathbf x}_i\in [0,u_1)\times [0,u_2)}, \\ 0\leq u_1,u_2\leq 1\}$.
Since VC dimension of the set of concepts $\{[0,u_1)\times \cdots \times [0,u_s)\mid 0\leq u_1,\cdots, u_s\leq 1\}$ is $O(s)$, we obtain that the latter expression is $\mathcal{O}(\sqrt{\frac{\log N}{N}})$. I.e., ${\mathbb E}_{{\mathbf x}_i}\big[| D^\ast_N({\mathbf x}_1, \cdots, {\mathbf x}_N)-D^\ast(\nu)|\big] = \mathcal{O}(\sqrt{\frac{\log N}{N}})$. Therefore, we have $D^\ast_N({\mathbf x}_1, \cdots, {\mathbf x}_N)\mathop\to\limits^{N\to +\infty} D^\ast(\nu)$ in probability. Analogously, $$D^\ast_N(x_{1i}, \cdots, x_{Ni})\mathop\to\limits^{N\to +\infty} D^\ast(\nu_i).$$ Thus, using the previous lemma and after sending $N\to+\infty$ we obtain the Koksma-Hlawka inequality for measures.
\end{proof}

The following result is also classical and its proof can be found in~\cite{kuipers2012uniform}. 
\begin{lemma}[Erd{\"o}s-Turán-Koksma inequality] \label{lem:erdos} Let ${\mathbf x}_1, \cdots, {\mathbf x}_N\in [0,1]^s$ and $H\in {\mathbb N}$. Then,
\begin{equation*}
\begin{split}
D^\ast_N({\mathbf x}_1, \cdots, {\mathbf x}_N) \leq C_s \Big(\frac{1}{H}+\sum_{{\mathbf h}: 0<\|{\mathbf h}\|_\infty\leq H}\frac{1}{r({\mathbf h})}\big|\frac{1}{N}\sum_{l=1}^N e^{2\pi {\rm i}  {\mathbf h}^\top {\mathbf x}_l}\big|\Big),
\end{split}
\end{equation*}
where $r({\mathbf h}) = \prod_{j=1}^s\max(|h_j|,1)$ for ${\mathbf h} = [h_1, \cdots, h_s]\in {\mathbb Z}^s$ and  $C_s$ is a constant that only depends on $s$.
\end{lemma}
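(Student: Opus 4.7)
My strategy is the classical one: bound the star discrepancy by sandwiching the indicator of each anchored box $[0,u_1)\times\cdots\times[0,u_s)$ between two trigonometric polynomials of coordinatewise degree $\le H$, relate the empirical averages of these polynomials to their integrals via a Fourier expansion, and collect the contributions into the $\frac{1}{H}$ term and the exponential-sum term. The proof proceeds in three steps.

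\emph{Step 1 (one-dimensional sandwich).} For fixed $u\in[0,1]$ and $H\in\mathbb{N}$, I would construct trigonometric polynomials $M^\pm_{u,H}$ of degree $\le H$ such that $M^-_{u,H}(x)\le \mathds{1}_{[0,u)}(\{x\})\le M^+_{u,H}(x)$ for all $x\in\mathbb{R}$, with $\int_0^1 (M^+_{u,H}-M^-_{u,H})\,dx \le \frac{C}{H}$ and Fourier coefficients satisfying $|\widehat{M^\pm_{u,H}}(h)|\le \frac{C}{\max(|h|,1)}$ for $|h|\le H$. The clean route is to use the centered sawtooth identity $\mathds{1}_{[0,u)}(\{x\}) - u = \psi(x)-\psi(x-u)$ (for $\psi(x)=\{x\}-\tfrac12$, valid off a null set) together with the Beurling--Selberg/Vaaler trigonometric majorant and minorant of $\psi$, which are tailored precisely to achieve the two bounds above.

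\emph{Step 2 (tensor-product extension to $[0,1]^s$).} I would form products $T^\pm_{\mathbf u,H}(\mathbf x)=\prod_{j=1}^s M^\pm_{u_j,H}(x_j)$, chosen so that $T^-_{\mathbf u,H}(\mathbf x)\le \prod_j \mathds{1}_{[0,u_j)}(\{x_j\})\le T^+_{\mathbf u,H}(\mathbf x)$; this is legitimate because the 1D bounds can be arranged to be nonnegative (shift if needed, since $\mathds{1}_{[0,u)}\ge 0$). The $L^1$ gap telescopes:
\begin{equation*}
\int_{[0,1]^s}\!(T^+ - T^-)\,d\mathbf{x} \le \sum_{j=1}^s \|M^+_{u_j,H}\|_1^{j-1}\,\|M^+_{u_j,H}-M^-_{u_j,H}\|_1\,\|M^-_{u_j,H}\|_1^{s-j} \le \frac{C_s'}{H},
\end{equation*}
and because Fourier coefficients of products factor, $|\widehat{T^\pm_{\mathbf u,H}}(\mathbf h)|\le \frac{C^s}{r(\mathbf h)}$ for $\mathbf h\in[-H,H]^s$.

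\emph{Step 3 (assembly).} Sandwiching yields
\begin{equation*}
\frac{1}{N}\sum_{l=1}^N \mathds{1}_{[0,\mathbf u)}(\mathbf x_l) - \prod_j u_j \le \Big[\!\int T^+_{\mathbf u,H} - \prod_j u_j\Big] + \Big[\frac{1}{N}\sum_l T^+_{\mathbf u,H}(\mathbf x_l) - \!\int T^+_{\mathbf u,H}\Big],
\end{equation*}
with a matching lower bound via $T^-_{\mathbf u,H}$. The first bracket is $O(1/H)$ by Step~2. For the second, expanding $T^+_{\mathbf u,H}(\mathbf x)=\sum_{\|\mathbf h\|_\infty\le H}\widehat{T^+_{\mathbf u,H}}(\mathbf h)\,e^{2\pi {\rm i}\mathbf h^\top \mathbf x}$ and isolating the zero-frequency term $\widehat{T^+_{\mathbf u,H}}(\mathbf 0)=\int T^+_{\mathbf u,H}$ gives
\begin{equation*}
\frac{1}{N}\sum_l T^+_{\mathbf u,H}(\mathbf x_l)-\!\int T^+_{\mathbf u,H} = \sum_{0<\|\mathbf h\|_\infty\le H}\widehat{T^+_{\mathbf u,H}}(\mathbf h)\,\Big(\frac{1}{N}\sum_{l=1}^N e^{2\pi{\rm i}\mathbf h^\top \mathbf x_l}\Big).
\end{equation*}
Taking absolute values and applying the Fourier bound from Step~2 produces exactly the stated exponential-sum weighted by $\frac{1}{r(\mathbf h)}$. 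Since all estimates are uniform in $\mathbf u$, taking $\sup_{\mathbf u\in[0,1]^s}$ yields the discrepancy inequality, with $C_s$ absorbing $s$ together with the universal constants from Steps~1 and~2.

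\textbf{Main obstacle.} The only non-trivial analytic ingredient is the Step~1 construction: a naive truncation of the Fourier series of $\psi$ gives a trigonometric polynomial of the right degree with the right coefficient decay, but it is \emph{not} a pointwise majorant/minorant, and the $L^1$ gap then contains an extra $\log H$ factor. Eliminating this log requires the Beurling--Selberg--Vaaler construction (or equivalently, convolution of a naive truncation with a Jackson-type kernel). The bookkeeping in Step~2, namely ensuring the tensor product preserves the sandwich property without producing additional $s$-dependent log factors, is the secondary technical point; using Vaaler's polynomial (which can be shifted to be nonnegative on $[0,1]$) keeps the constant $C_s$ under control.
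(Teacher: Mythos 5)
The paper itself does not prove this lemma: it is stated as classical with a pointer to Kuipers--Niederreiter, so there is no in-paper argument to compare against and you are supplying a proof from scratch. Your outline is the standard Beurling--Selberg--Vaaler route, and Steps 1 and 3 are sound as sketched: Selberg's majorant/minorant of $\mathds{1}_{[0,u)}$ has degree $\le H$, $L^1$-gap $2/(H+1)$, and Fourier coefficients bounded by $\tfrac{1}{H+1}+\tfrac{1}{\pi\max(|h|,1)}\lesssim \tfrac{1}{\max(|h|,1)}$ for $|h|\le H$, and extracting the zero mode in Step 3 produces exactly the weighted exponential sums in the statement.

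The genuine gap is in Step 2, on the minorant side. The product of majorants does majorize the product of indicators, because $M^+_{u_j}\ge \mathds{1}_{[0,u_j)}\ge 0$; but the product of minorants is \emph{not} in general a minorant of the product, and your proposed repair (``shift if needed'') cannot work. A non-trivial trigonometric minorant of $\mathds{1}_{[0,u)}$ is $\le 0$ on $[u,1)$ and, having positive mean for typical $u$, must be strictly negative somewhere there; shifting it up by any positive amount then violates the minorant property, while leaving it unshifted allows $\prod_j M^-_{u_j}(x_j)>0$ at points where an even number of factors are negative and $\prod_j\mathds{1}_{[0,u_j)}(x_j)=0$. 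The standard fix is to replace the naive tensor minorant by
\begin{equation*}
T^-_{\mathbf u,H}(\mathbf x)=\prod_{j=1}^s M^+_{u_j,H}(x_j)-\sum_{k=1}^s\bigl(M^+_{u_k,H}(x_k)-M^-_{u_k,H}(x_k)\bigr)\prod_{j\ne k}M^+_{u_j,H}(x_j),
\end{equation*}
which satisfies $T^-_{\mathbf u,H}\le \prod_j\mathds{1}_{[0,u_j)}$ by the elementary inequality $\prod_j(a_j-b_j)\ge \prod_j a_j-\sum_k b_k\prod_{j\ne k}a_j$, valid for $0\le b_j\le a_j$, applied with $a_j=M^+_{u_j,H}$ and $b_j=M^+_{u_j,H}-\mathds{1}_{[0,u_j)}$, together with $M^+_{u_k,H}-\mathds{1}_{[0,u_k)}\le M^+_{u_k,H}-M^-_{u_k,H}$. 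This $T^-$ still has coordinatewise degree $\le H$, lies within $L^1$-distance $O_s(1/H)$ of the box indicator (each correction term integrates to at most $\tfrac{2}{H+1}\prod_{j\ne k}\|M^+_{u_j,H}\|_1$ by Fubini), and has Fourier coefficients $O_s(1/r(\mathbf h))$ since every summand is a product of one-variable polynomials obeying the Step 1 bounds. With this substitution the sandwich in Step 3 is legitimate and the rest of your argument yields the stated inequality; as written, though, the lower-bound half of the sandwich is unjustified.
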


The proof of the following lemma is absolutely analogous to the proof of Koksma-Hlawka inequality for measures.
\begin{lemma}[Erd{\"o}s-Turán-Koksma inequality for measures] \label{lem:erdos_for_measures} For any probabilistic Borel measure $\nu$ on $[0,1]^s$ and $H\in {\mathbb N}$, we have
\begin{equation*}
\begin{split}
D^\ast_N(\nu ) \leq C_s \Big(\frac{1}{H}+\sum_{{\mathbf h}: 0<\|{\mathbf h}\|_\infty\leq H}\frac{|\hat{\nu}({\mathbf h})|}{r({\mathbf h})}\Big),
\end{split}
\end{equation*}
where $r({\mathbf h}) = \prod_{j=1}^s\max(|h_j|,1)$ for ${\mathbf h} = [h_1, \cdots, h_s]\in {\mathbb Z}^s$,  $C_s$ is a constant that only depends on $s$ and
\begin{equation*}
\begin{split}
\hat{\nu}({\mathbf h}) = \int_{[0,1]^s}e^{2\pi {\rm i} {\mathbf h}^\top {\mathbf x}}d\nu({\mathbf x}).
\end{split}
\end{equation*}
\end{lemma}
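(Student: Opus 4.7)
The plan is to follow exactly the template of the proof of the Koksma--Hlawka inequality for measures already given in the paper: reduce to the classical, discrete version of Lemma~\ref{lem:erdos} by applying it to the empirical measure of an i.i.d.\ sample from $\nu$, and then pass the sample size to infinity.

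Concretely, I would draw $\mathbf{x}_1,\ldots,\mathbf{x}_N\sim\nu$ i.i.d.\ and apply Lemma~\ref{lem:erdos} to each realization, obtaining
\[
D^{\ast}_N(\mathbf{x}_1,\ldots,\mathbf{x}_N)\leq C_s\Big(\tfrac{1}{H}+\sum_{0<\|\mathbf{h}\|_\infty\leq H}\tfrac{1}{r(\mathbf{h})}\Big|\tfrac{1}{N}\sum_{l=1}^N e^{2\pi\mathrm{i}\,\mathbf{h}^\top\mathbf{x}_l}\Big|\Big).
\]
It then remains to pass to the limit $N\to\infty$ on both sides. For the left-hand side I would reuse verbatim the Rademacher/Massart argument already employed in the proof of the Koksma--Hlawka inequality for measures: the class of axis-aligned boxes $\{[0,u_1)\times\cdots\times[0,u_s):u_j\in[0,1]\}$ has VC dimension $\mathcal{O}(s)$, so Massart's lemma yields $\mathbb{E}\bigl[|D^{\ast}_N(\mathbf{x}_1,\ldots,\mathbf{x}_N)-D^{\ast}(\nu)|\bigr]=\mathcal{O}(\sqrt{\log N/N})$, whence $D^{\ast}_N\to D^{\ast}(\nu)$ in probability. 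For the right-hand side, for each fixed $\mathbf{h}$ with $0<\|\mathbf{h}\|_\infty\leq H$ the random variables $e^{2\pi\mathrm{i}\mathbf{h}^\top\mathbf{x}_l}$ are i.i.d., bounded by $1$, with common expectation $\hat{\nu}(\mathbf{h})$; splitting into real and imaginary parts and applying the strong law of large numbers to each gives $\frac{1}{N}\sum_l e^{2\pi\mathrm{i}\mathbf{h}^\top\mathbf{x}_l}\to\hat{\nu}(\mathbf{h})$ almost surely, and continuity of $|\cdot|$ lifts this convergence to the modulus.

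Because the index set $\{\mathbf{h}\in\mathbb{Z}^s:0<\|\mathbf{h}\|_\infty\leq H\}$ is finite, almost-sure convergence of each summand yields almost-sure convergence of the whole right-hand side to $C_s\bigl(\tfrac{1}{H}+\sum_{0<\|\mathbf{h}\|_\infty\leq H}\tfrac{|\hat{\nu}(\mathbf{h})|}{r(\mathbf{h})}\bigr)$, with no need for dominated convergence. Extracting a common subsequence $N_k$ along which both sides converge almost surely --- which exists because convergence in probability always admits an almost-surely convergent subsequence --- the displayed inequality passes to the limit pointwise on a probability-one event and yields the claim. There is no genuine obstacle here: the discrete Lemma~\ref{lem:erdos} does all the real work, and the remaining steps are routine measure-theoretic bookkeeping. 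The only point requiring minor care is arranging both convergences on a common probability-one event, which standard subsequence extraction handles; finiteness of the $\mathbf{h}$-sum is what makes the interchange of limit and summation on the right-hand side automatic.
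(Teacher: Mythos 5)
Your proposal is correct and follows essentially the same route as the paper, which simply declares the proof ``absolutely analogous to the proof of Koksma--Hlawka inequality for measures'' — i.e.\ apply the discrete Lemma to an i.i.d.\ sample, use the VC/Rademacher argument for convergence of the discrepancy and the law of large numbers for the finitely many Fourier terms, and pass to the limit. Your write-up merely fills in the routine details (a.s.\ convergence of the right-hand side, common subsequence extraction) that the paper leaves implicit.
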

Notably, $\hat{\nu}({\mathbf h})$ is the inverse Fourier transform of the measure $\nu$. 

Now we are ready to bound $\varepsilon_{\mathcal{F}}(x, y)$ using the inequality~\eqref{epsilon-inter}. 

\begin{lemma} For fixed $x,y\in {\mathbb R}$, we have 
\begin{equation*}
\begin{split}
&\varepsilon_{\mathcal{F}}(x, y)\leq 
3\max(\|\psi\|_{{\rm BV}([0,1])}, \|\psi\|^2_{{\rm BV}([0,1])})\\
&\min_{H\in {\mathbb N}}\bigg(\frac{1}{H}+\sum_{\substack{a,b\in [-H,H]\cap {\mathbb Z},\\ (a,b)\ne (0,0)}} \frac{e^{-2\pi^2R^2(ax+by)^2}}{\max(|a|,1)\max(|b|,1)}\bigg).
\end{split}
\end{equation*}
\end{lemma}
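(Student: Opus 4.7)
The plan is to continue from the reduction~\eqref{epsilon-inter}, which already expresses $\varepsilon_{\mathcal{F}}(x,y)$ as $\max(\|\psi\|_{{\rm BV}([0,1])},\|\psi\|^2_{{\rm BV}([0,1])})$ times the integral probability metric, taken over the unit ball of ${\rm BV}([0,1]^2)$, between the pushforward measure $\nu = {\rm Law}(\{Ax\},\{Ay\})$ with $A\sim\mathcal{N}(0,R^2)$ and $U([0,1]^2)$. So only this ILP remains to be bounded by the Fourier-analytic expression in the claim.

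First, I would apply the Koksma--Hlawka inequality for measures to an arbitrary test function $f$ with $\|f\|_{{\rm BV}([0,1]^2)}\leq 1$. Since $\|f\|_{{\rm BV}([0,1]^2)}$ is defined as the maximum of five quantities including $\|f(\cdot,1)\|_{{\rm BV}([0,1])}$, $\|f(1,\cdot)\|_{{\rm BV}([0,1])}$ and $V^2(f)$, each of the three coefficients appearing in Koksma--Hlawka is at most $1$. Moreover, the one-dimensional marginal discrepancies satisfy $D^*(\nu_i)\leq D^*(\nu)$: this follows by taking $u_2=1$ (respectively $u_1=1$) in the supremum defining $D^*(\nu)$ and using that $\nu$ is supported in $[0,1)^2$. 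Hence the ILP is bounded by $3\,D^*(\nu)$.

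Next, I would invoke the Erd\H{o}s--Tur\'an--Koksma inequality for measures (Lemma~\ref{lem:erdos_for_measures}) to express $D^*(\nu)$ through the Fourier coefficients $\hat\nu(a,b)=\mathbb{E}_A[e^{2\pi\mathrm{i}(a\{Ax\}+b\{Ay\})}]$. The key observation is that for $a,b\in\mathbb{Z}$ the fractional-part operations disappear inside the exponential, so $\hat\nu(a,b)=\mathbb{E}_A[e^{2\pi\mathrm{i}A(ax+by)}]$; this is the characteristic function of $\mathcal{N}(0,R^2)$ evaluated at $t=2\pi(ax+by)$, equal to the real positive number $e^{-2\pi^2 R^2(ax+by)^2}$, and the absolute value in ETK simplifies trivially to itself. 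Taking the minimum over $H\in\mathbb{N}$ and absorbing the ETK constant $C_2$ together with the factor $3$ from the first step into the numerical factor appearing in the statement yields the claim.

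The only real subtlety is the constant bookkeeping: the explicit ``$3$'' in the statement is best read as the product of the factor $3$ from dominating the three Koksma--Hlawka terms and the universal ETK constant $C_2$, combined into a single absolute constant in the spirit of the $\lesssim$-style estimates used throughout the appendix. Every other step is a direct substitution into inequalities already proven earlier.
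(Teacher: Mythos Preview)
Your proposal is correct and follows essentially the same route as the paper: reduce via~\eqref{epsilon-inter}, apply the Koksma--Hlawka inequality for measures together with $D^*(\nu_i)\leq D^*(\nu)$ to get the factor $3D^*(\nu)$, compute $\hat\nu(a,b)=e^{-2\pi^2R^2(ax+by)^2}$ using that integer frequencies kill the fractional parts, and finish with the Erd\H{o}s--Tur\'an--Koksma inequality for measures. Your remark about the constant is apt: the paper's own proof also silently drops the ETK constant $C_2$ when passing from $D^*(\nu)$ to the final bound, so the ``$3$'' in the statement should indeed be read as an absolute constant absorbing $C_2$.
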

\begin{proof} 
Let $\nu$ be distributed on $[0,1]^2$ in the same way as $(\{A x\},\{A y\})$ for $A\sim \mathcal{N}(0, R^2)$. Then, $D^\ast_N(\nu_i)\leq D^\ast_N(\nu)$ for $i=1,2$. 
For any $f\in {\rm BV}([0,1]^2)$ such that $\|f\|_{{\rm BV}([0,1]^2)}\leq 1$, Koksma-Hlawka inequality for measures gives us
\begin{equation*}
\begin{split}
&|{\mathbb E}_{A\sim \mathcal{N}(0, R^2)}[f(\{A x\},\{A y\})] -
{\mathbb E}_{(X,Y)\sim U([0,1]^2)}[f(X,Y)]|\leq \\
&(\|f(x,1)\|_{{\rm BV}([0,1])}+
\|f(1,x)\|_{{\rm BV}([0,1])}+
V^2(f))D^\ast_N(\nu)\leq 3 D^\ast_N(\nu).
\end{split}
\end{equation*}

We have
\begin{equation*}
\begin{split}
&\hat{\nu}(a,b) = \int_{-\infty}^\infty\frac{1}{\sqrt{2\pi}}e^{-\frac{t^2}{2}}e^{2\pi {\rm i}(a\{Rtx\}+b\{Rty\})} dt= \\
&\int_{-\infty}^\infty\frac{1}{\sqrt{2\pi}}e^{-\frac{t^2}{2}}e^{2\pi {\rm i}Rt(ax+by)} dt= e^{-2\pi^2R^2(ax+by)^2}.
\end{split}
\end{equation*}
From Erd{\"o}s-Turán-Koksma inequality for measures we obtain
\begin{equation*}
\begin{split}
D^\ast_N(\nu)\leq \min_{H\in {\mathbb N}}\bigg(\frac{1}{H}+\sum_{\substack{a,b\in [-H,H]\cap {\mathbb Z},\\ (a,b)\ne (0,0)}} \frac{e^{-2\pi^2R^2(ax+by)^2}}{\max(|a|,1)\max(|b|,1)}\bigg)
\end{split}
\end{equation*}
Hence,
\begin{equation*}
\begin{split}
&\varepsilon_{\mathcal{F}}(x, y)\leq 3 \max(\|\psi\|_{{\rm BV}([0,1])}, \|\psi\|^2_{{\rm BV}([0,1])}) D^\ast_N(\nu)\leq \\
&3\max(\|\psi\|_{{\rm BV}([0,1])}, \|\psi\|^2_{{\rm BV}([0,1])})\times \\
&\min_{H\in {\mathbb N}}\bigg(\frac{1}{H}+\sum_{\substack{a,b\in [-H,H]\cap {\mathbb Z},\\ (a,b)\ne (0,0)}} \frac{e^{-2\pi^2R^2(ax+by)^2}}{\max(|a|,1)\max(|b|,1)}\bigg).
\end{split}
\end{equation*}
\end{proof}

Now let us make the last step of our proof, i.e. plug in the upper bound on $\varepsilon_{\mathcal{F}}(x, y)$ into the RHS of the inequality~\eqref{aes-uncertainty}. We denote $c({\mathbf x}) = {\mathbb E}_{Y\sim \mu_\mathcal{Y}}\big[\frac{\partial L(p({\mathbf w},{\mathbf x}), Y)}{\partial p}\big]$.
Note that 
\begin{equation*}
\begin{split}
\phi_{{\mathbf x}_1,{\mathbf x}_2}(y_1,y_2) = (\frac{\partial L(p({\mathbf w},{\mathbf x}_1), y_1)}{\partial p}-c({\mathbf x}_1))(\frac{\partial L(p({\mathbf w},{\mathbf x}_2), y_2)}{\partial p}-c({\mathbf x}_2)). 
\end{split}
\end{equation*}
Therefore, $$\|\phi_{{\mathbf x}_1,{\mathbf x}_2}\|_{C^{0,1}({\mathbb R}^2)}\leq M_{{\mathbf x}_2}\sup_{y}|\frac{\partial^2 L(p({\mathbf w},{\mathbf x}_1), y)}{\partial p \partial y}|+M_{{\mathbf x}_1}\sup_{y}|\frac{\partial^2 L(p({\mathbf w},{\mathbf x}_2), y)}{\partial p \partial y}|$$ and  
\begin{equation*}
\begin{split}
&\|\partial_{y_1}\phi_{{\mathbf x}_1,{\mathbf x}_2}\|_{C^{0,1}({\mathbb R}^2)}\leq \\
&M_{{\mathbf x}_2}\sup_{y}|\frac{\partial^3 L(p({\mathbf w},{\mathbf x}_1), y)}{\partial p \partial y^2}|+\sup_{y}|\frac{\partial^2 L(p({\mathbf w},{\mathbf x}_1), y)}{\partial p \partial y}| \sup_{y}|\frac{\partial^2 L(p({\mathbf w},{\mathbf x}_2), y)}{\partial p \partial y}|.
\end{split}
\end{equation*} 
Hence,
\begin{equation*}
\begin{split}
\|\phi_{{\mathbf x}_1,{\mathbf x}_2}\|_{C^{1,1}({\mathbb R}^2)}\lesssim  1,
\end{split}
\end{equation*}
if $|\frac{\partial L(p({\mathbf w},{\mathbf x}), y)}{\partial p}|\lesssim  1$, $|\frac{\partial^2 L(p({\mathbf w},{\mathbf x}), y)}{\partial p \partial y}|\lesssim  1$,$|\frac{\partial^3 L(p({\mathbf w},{\mathbf x}), y)}{\partial p \partial y^2}|\lesssim  1$.

Then, the first term in the RHS of the inequality~\eqref{aes-uncertainty} satisfies
\begin{equation*}
\begin{split}
&\sqrt{{\mathbb E}_{(X,Y)\sim \mu_{\mathcal{X}}^2}\big[ \varepsilon_\mathcal{F}(X,Y)^2\|\phi_{X,Y}\|_{\mathcal{F}_2}^2\big]} \lesssim 3\max(\|\psi\|_{{\rm BV}([0,1])}, \|\psi\|^2_{{\rm BV}([0,1])})\times \\
&\sqrt{{\mathbb E}_{(X,Y)\sim \mu_{\mathcal{X}}^2} \bigg[\min_{H\in {\mathbb N}}\bigg(\frac{1}{H}+\sum_{\substack{a,b\in [-H,H]\cap {\mathbb Z},\\ (a,b)\ne (0,0)}} \frac{e^{-2\pi^2R^2(aX+bY)^2}}{\max(|a|,1)\max(|b|,1)}\bigg)^2\bigg]}.
\end{split}
\end{equation*}
The last factor is bounded by
\begin{equation*}
\begin{split}
\min_{H\in {\mathbb N}}\sqrt{{\mathbb E}_{(X,Y)\sim \mu_{\mathcal{X}}^2}\bigg[\bigg(\frac{1}{H}+\sum_{\substack{a,b\in [-H,H]\cap {\mathbb Z},\\ (a,b)\ne (0,0)}} \frac{e^{-2\pi^2R^2(aX+bY)^2}}{\max(|a|,1)\max(|b|,1)}\bigg)^2\bigg]}.
\end{split}
\end{equation*}
Note that
\begin{equation*}
\begin{split}
&\bigg(\frac{1}{H}+\sum_{\substack{a,b\in [-H,H]\cap {\mathbb Z},\\ (a,b)\ne (0,0)}} \frac{e^{-2\pi^2R^2(aX+bY)^2}}{\max(|a|,1)\max(|b|,1)}\bigg)^2\leq \\
&\frac{1}{H^2}+\frac{2}{H}\sum_{\substack{a,b\in [-H,H]\cap {\mathbb Z},\\ (a,b)\ne (0,0)}} \frac{e^{-2\pi^2R^2(aX+bY)^2}}{\max(|a|,1)\max(|b|,1)}+\\
&\sum_{\substack{a,b\in [-H,H]\cap {\mathbb Z},\\ (a,b)\ne (0,0)}} \frac{1}{\max(|a|,1)\max(|b|,1)}\sum_{\substack{a,b\in [-H,H]\cap {\mathbb Z},\\ (a,b)\ne (0,0)}} \frac{e^{-2\pi^2R^2(aX+bY)^2}}{\max(|a|,1)\max(|b|,1)}.
\end{split}
\end{equation*}
Since $\sum_{\substack{a,b\in [-H,H]\cap {\mathbb Z},\\ (a,b)\ne (0,0)}} \frac{1}{\max(|a|,1)\max(|b|,1)}\lesssim \log^2 (H+1)$, we conclude
\begin{equation*}
\begin{split}
&{\mathbb E}_{(X,Y)\sim \mu_{\mathcal{X}}^2}\bigg[\bigg(\frac{1}{H}+\sum_{\substack{a,b\in [-H,H]\cap {\mathbb Z},\\ (a,b)\ne (0,0)}} \frac{e^{-2\pi^2R^2(aX+bY)^2}}{\max(|a|,1)\max(|b|,1)}\bigg)^2\bigg] \lesssim \\
&\frac{1}{H^2}+\log^2 (H+1)\sum_{\substack{a,b\in [-H,H]\cap {\mathbb Z},\\ (a,b)\ne (0,0)}} \frac{{\mathbb E}_{(X,Y)\sim \mu_{\mathcal{X}}^2}[e^{-2\pi^2R^2(aX+bY)^2}]}{\max(|a|,1)\max(|b|,1)}.
\end{split}
\end{equation*}
Thus,
\begin{equation*}
\begin{split}
&\sqrt{{\mathbb E}_{(X,Y)\sim \mu_{\mathcal{X}}^2}\big[ \varepsilon_\mathcal{F}(X,Y)^2\|\phi_{X,Y}\|_{\mathcal{F}_2}^2\big]} \lesssim 3\max(\|\psi\|_{{\rm BV}([0,1])}, \|\psi\|^2_{{\rm BV}([0,1])})\times \\
&\sqrt{\min_{H\in {\mathbb N}}\bigg(\frac{1}{H^2}+\log^2 (H+1)\sum_{\substack{a,b\in [-H,H]\cap {\mathbb Z},\\ (a,b)\ne (0,0)}} \frac{{\mathbb E}_{(X,Y)\sim \mu_{\mathcal{X}}^2}[e^{-2\pi^2R^2(aX+bY)^2}]}{\max(|a|,1)\max(|b|,1)}\bigg)}.
\end{split}
\end{equation*}
Theorem proved.

\section{Additional experiments}
\begin{figure}[H]
    \centering
    \includegraphics[width=0.45\textwidth]{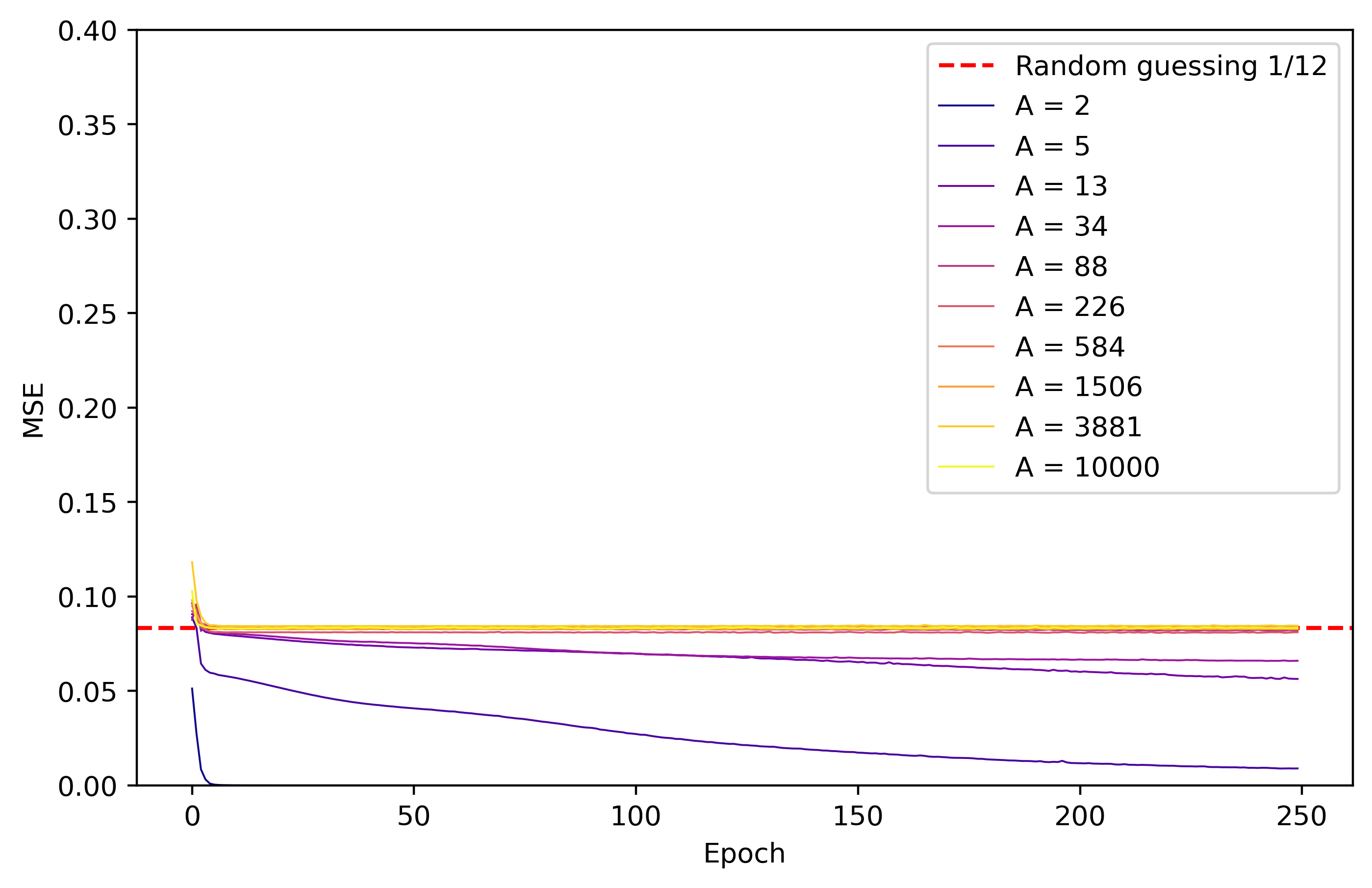} \hfill \includegraphics[width=0.45\textwidth]{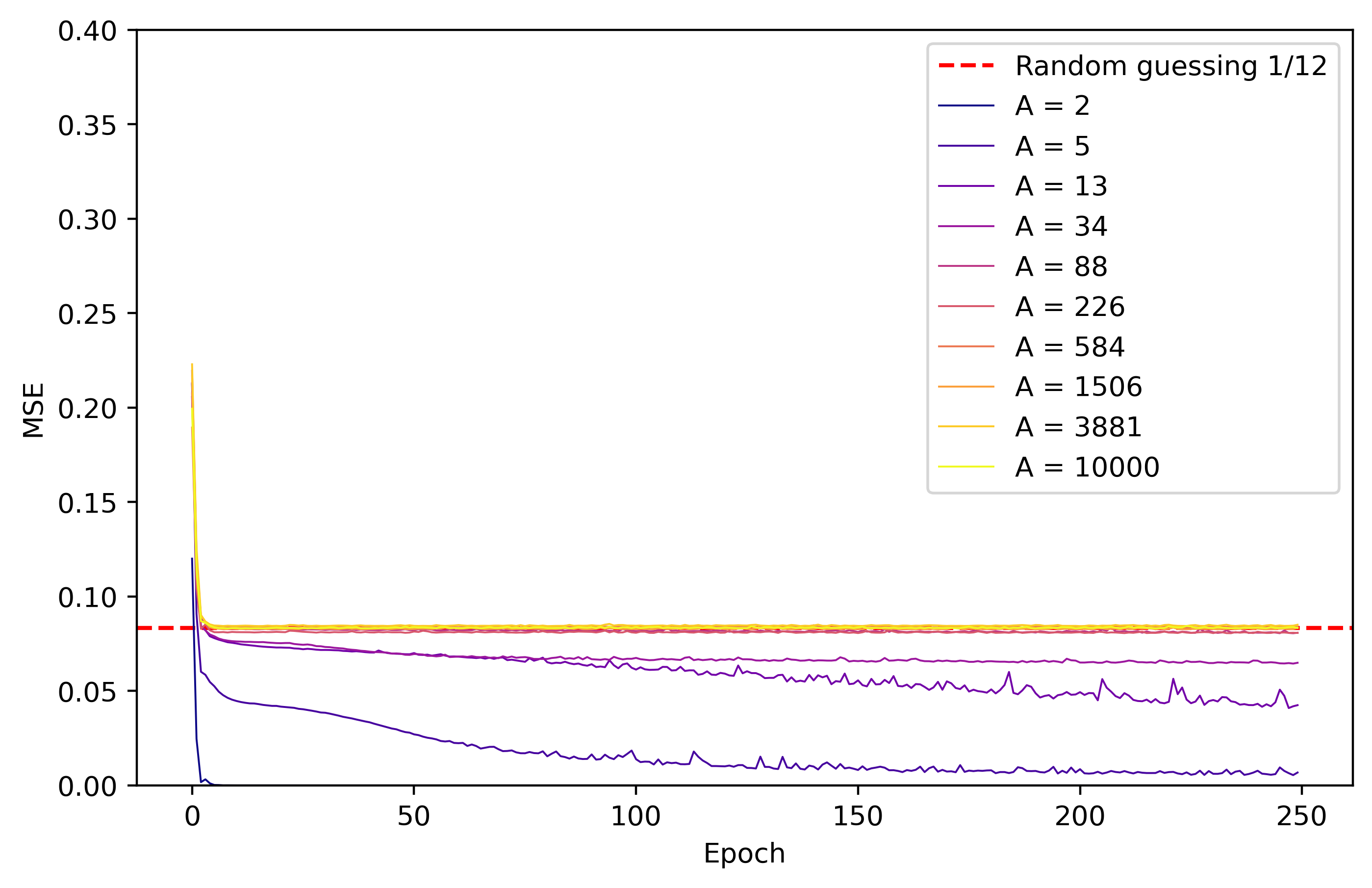} 
\caption{We trained a 3-layer fully connected neural network with ReLU activations to approximate a high-frequency wave $\psi(ax)$ where $\psi(x) = \{x\}$. For each value of $A$,  $a$ was randomly sampled 5 times from the set $\{0,1,\cdots, A-1\}$. The plot shows the mean squared error (MSE) on a test set averaged over these trials as a function of training epochs. The horizontal asymptote represents the MSE of the random guessing, ${\rm MSE} = \frac{1}{12}$. For the first picture and the second pictures the number of neurons on layers are $[1,64,128,1]$ and $[1,640,1280,1]$ (over-parameterized model) respectively. As can be seen there is not much gain in adding more parameters to a model.}\label{over-p}    
\end{figure}
\end{document}